\documentclass{article} 
\usepackage{iclr2027_conference,times}

\usepackage{hyperref}
\usepackage{url}
\usepackage{algorithm}
\usepackage{algpseudocode}
\usepackage{amsmath}
\usepackage{amssymb}
\usepackage{mathtools}
\usepackage{amsthm}
\usepackage{booktabs}
\usepackage{graphicx}
\graphicspath{{../}}
\usepackage{subfig}
\usepackage{xcolor}
\usepackage{multirow}
\usepackage{bm}
\usepackage{adjustbox}
\usepackage{wrapfig}

\usepackage{amsmath,amsfonts,bm}

\def\eqref#1{equation~\ref{#1}}

\def\1{\bm{1}}

\def\vtheta{{\bm{\theta}}}

\def\vh{{\bm{h}}}

\def\vx{{\bm{x}}}
\def\vy{{\bm{y}}}

\DeclareMathAlphabet{\mathsfit}{\encodingdefault}{\sfdefault}{m}{sl}
\SetMathAlphabet{\mathsfit}{bold}{\encodingdefault}{\sfdefault}{bx}{n}

\theoremstyle{plain}
\newtheorem{theorem}{Theorem}[section]
\newtheorem{proposition}[theorem]{Proposition}
\newtheorem{lemma}[theorem]{Lemma}
\newtheorem{corollary}[theorem]{Corollary}
\theoremstyle{definition}

\newtheorem{assumption}[theorem]{Assumption}
\theoremstyle{remark}
\newtheorem{remark}[theorem]{Remark}

\title{Conformal Prediction-Enabled\\Sequence-level Policy Optimization for\\Reasoning Language Models}

\author{Hui Yang\(^{1}\)\footnotemark[1]\quad
Wan Tian\(^{1}\)\footnotemark[1]\quad
Tao Ren\(^{1}\)\footnotemark[1]\quad
Zhongyi Li\(^{2}\)\quad
Yijie Peng\(^{3}\)\footnotemark[2]\;\;
 \\
\(^{1}\) Peking University\quad
\(^{2}\) Beihang University\quad
\(^{3}\) Nanjing University
}

\iclrfinalcopy 

\begin{document}

\renewcommand{\thefootnote}{\fnsymbol{footnote}} 
\footnotetext[1]{\mbox{\ Equal contribution.}}
\footnotetext[2]{\ Corresponding author: \texttt{pengyijie@nju.edu.cn}.}
\footnotetext[3]{\ Email to: \texttt{yanghui6@stu.pku.edu.cn}.}

\maketitle

\begin{abstract}
Reinforcement learning improves the reasoning capability of large language models through verifiable rewards for mathematical and coding tasks. In token-level methods such as GRPO, clipping acts on individual token ratios while the reward and advantage apply to a complete response. Recent sequence-level methods align these units but still use fixed clipping bands, whose tail probabilities change as the score distribution evolves. We propose \emph{Conformal Prediction-Enabled Sequence-level Policy Optimization} (CSPO), which adaptively calibrates sequence-level clipping bounds using recent training statistics. The resulting band targets a specified marginal tail budget and adapts to policy drift; calibration conditional on response length requires additional assumptions. We bound the probability that a fresh score falls outside the band by the target miscoverage level plus explicit CDF discrepancy and drift terms. Under additional moment conditions, this bound also controls the objective and gradient error relative to the unclipped GSPO power surrogate. Experiments with the Qwen3-1.7B series show improvements over GSPO on difficult mathematical reasoning benchmarks and ALFWorld, with clipping endpoints that adapt during training.
\end{abstract}

\section{Introduction}
Reinforcement learning from feedback has become a key component of large language model post-training. Early reinforcement learning from human feedback (RLHF) pipelines optimize policies against learned reward models derived from human preferences, enabling instruction-following and preference alignment beyond supervised fine-tuning \citep{christiano2017deep,stiennon2020learning,ouyang2022training,bai2022constitutional}. More recently, reinforcement learning with verifiable rewards (RLVR) has emerged as an especially effective paradigm for reasoning tasks, where correctness can often be automatically checked by final-answer matching, symbolic verification, unit tests, or external graders. Such rewards have enabled large-scale reasoning-oriented post-training, as exemplified by recent reasoning models and open RL systems \citep{guo2025deepseekr1,wen2025reinforcement,yu2025dapo}. In this setting, the reward is naturally defined at the \emph{sequence level}: a complete solution is judged by whether the final answer and, in some cases, the reasoning trajectory satisfy the verification criterion.

Most RL post-training methods build on trust-region policy optimization. TRPO constrains the policy update through a KL trust region, while PPO replaces the hard constraint with a clipped surrogate objective that is simpler and more scalable \citep{schulman2015trust,schulman2017proximal}. Clipping limits the incentive to move probability ratios away from one, reducing the risk of overfitting sampled trajectories during repeated optimizer updates. For language models, GRPO simplifies PPO by removing the learned value function and estimating advantages through group-relative rewards, which substantially reduces memory and implementation complexity for long-context reasoning \citep{shao2024deepseekmath}. However, GRPO inherits a token-level clipping design: each token ratio is clipped separately, while the advantage is typically a response-level scalar shared by all tokens in the completion. This creates a mismatch between the token-level control variable and the sequence-level reward signal.

A natural response to this mismatch is to optimize at the sequence level. GSPO constructs a response-level power multiplier from the likelihood ratio of the whole generated sequence and performs sequence-level clipping, rewarding, and optimization \citep{zheng2025gspo}. This aligns the unit of policy correction with the unit of reward and avoids treating individual tokens as independently rewarded decisions. Yet sequence-level clipping introduces a new challenge: the sequence log-ratio is a sum over tokens, so its scale depends strongly on the response length. FSPO identifies this length fairness issue and clips the sequence log-importance ratio with a band that scales as \(\sqrt{L}\), where \(L\) is the output length \citep{mao2025clip}. This is an important step toward fair sequence-level RL, but the clipping width remains determined by fixed hyperparameters. In long-chain-of-thought RL, the distribution of log-ratios can vary substantially across training stages as the policy improves, the entropy changes, and the model discovers new reasoning strategies. A fixed clipping band may be too conservative during exploration, suppressing rare but valuable responses, and too loose during unstable phases, allowing harmful updates.

We therefore calibrate the clipping band from recent policy shifts to target a specified marginal tail probability. This lets the numerical width change with the score distribution. Conformal prediction motivates this approach: empirical quantiles define intervals with controlled miscoverage under exchangeability, and adaptive conformal methods address calibration under distribution shift \citep{vovk2005algorithmic,shafer2008tutorial,angelopoulos2023gentle,gibbs2021adaptive}. We use quantile calibration to select the sequence-level clipping band. Because training reuses data and changes the policy, our bounds explicitly retain CDF discrepancies and local drift. They are marginal over the response-length distribution; normalization by \(L\) alone does not imply calibration conditional on length.

We propose \textbf{C}onformal prediction-enabled \textbf{S}equence-level \textbf{P}olicy \textbf{O}ptimization (\textbf{CSPO}). Before each training step, CSPO uses a rolling buffer of retained training scores \(W=\exp(S/L)\), where \(S=\log[\pi_{\vtheta}(\vy\mid\vx)/\pi_{\vtheta_{\mathrm{old}}}(\vy\mid\vx)]\) is the sequence-level log-ratio and \(L=\operatorname{len}(\vy)\). Given a target miscoverage level \(\alpha\), CSPO estimates lower and upper empirical quantiles \((q_{\mathrm{lo}},q_{\mathrm{hi}})\) from completed blocks of retained scores, anchors the resulting interval at \(W=1\), and holds it fixed throughout the optimizer updates for that rollout batch. Scores computed on the training batch are retained for subsequent steps. The resulting sequence-level surrogate targets a marginal tail budget for fresh scores. Our contributions are threefold:
\begin{enumerate}
    \item We identify the need for \emph{adaptive} sequence-level clipping in RL post-training for reasoning language models, complementing prior work on sequence-level optimization and length-fair clipping.
    \item We propose CSPO, which clips GSPO power scores using rolling empirical quantiles of retained training scores to target a specified marginal tail budget.
    \item We bound tail probabilities for the mixture of scores within each training step in terms of \(\alpha\), CDF discrepancies, and local drift. Under additional moment conditions, these bounds control the objective and gradient error relative to the unclipped GSPO power surrogate. Empirically, CSPO improves Pass@1 over GSPO on difficult mathematical reasoning benchmarks and achieves the highest overall success rate on ALFWorld among the evaluated methods.
\end{enumerate}

\section{Methodology}
\label{sec:method}
A training step \(\tau\) consists of one rollout batch and \(E_\tau\) optimizer updates. The rollout parameter \(\vtheta_{\tau,0}\) remains fixed in every denominator; \(\vtheta_{\tau,k}\) is the current parameter before update \(k\), for \(k=0,\ldots,E_\tau-1\), and \(\vtheta_{\tau+1,0}=\vtheta_{\tau,E_\tau}\). We use \(t\) for the current training step and \(\tau<t\) for a retained step. At a fixed \((t,k)\), the shorter notation \(\vtheta=\vtheta_{t,k}\) and \(\vtheta_{\mathrm{old}}=\vtheta_{t,0}\) is used in the surrogate formulas. Let \(F_{t,k}\) be the population CDF of the score evaluated at this policy pair, as defined in \eqref{eq:forward_score_cdf}. Its band risk is
\begin{equation}
p_{t,k}(a,b)=F_{t,k}(a^-)+1-F_{t,k}(b).
\label{eq:intro_tail_risk}
\end{equation}
We first define the sequence score, then the rolling buffer used to select future clipping bands. The same training data supply both the policy update and the retained scores.

Let \(\vx\sim p\) be a prompt and \(\vy_i=(y_{i,1},\ldots,y_{i,L_i})\sim\pi_{\vtheta_{\rm old}}(\cdot\mid\vx)\) one of \(G\) responses with sequence reward \(\mathcal R_i\). We use the group-normalized advantage \(\widehat A_i=[\mathcal R_i-\operatorname{mean}_j(\mathcal R_j)]/[\operatorname{std}_j(\mathcal R_j)+\epsilon_A]\) and token ratio \(r_{i,\ell}(\vtheta)=\pi_{\vtheta}(y_{i,\ell}\mid\vh_{i,\ell})/\pi_{\vtheta_{\rm old}}(y_{i,\ell}\mid\vh_{i,\ell})\). Appendix~\ref{app:baseline_setup} records the full factorization and token-level GRPO reference objective. Sequence-level optimization uses a single multiplier for each complete response. For response \(i\), define the sequence log-ratio, the exact sequence likelihood ratio, and the GSPO power score by
\begin{equation}
\begin{aligned}
S_i(\vtheta)
&=
\log
\frac{\pi_{\vtheta}(\vy_i\mid\vx_i)}
{\pi_{\vtheta_{\mathrm{old}}}(\vy_i\mid\vx_i)}
=
\sum_{\ell=1}^{L_i}\log r_{i,\ell}(\vtheta),\ 
R_i(\vtheta) =\exp\!\left(S_i(\vtheta)\right),\ 
W_i(\vtheta)
=
R_i(\vtheta)^{1/L_i}
.
\end{aligned}
\label{eq:seq_log_ratio}
\end{equation}
Here \(R_i\) is the exact density ratio between sequence distributions when absolute continuity holds. The GSPO multiplier \(W_i\) is the geometric mean of the token ratios and generally differs from an exact importance weight.

In indexed form, the stored score is
\begin{equation}
S_{\tau,k}(\vx,\vy)=\log\frac{\pi_{\vtheta_{\tau,k}}(\vy\mid\vx)}{\pi_{\vtheta_{\tau,0}}(\vy\mid\vx)},\qquad
W_{\tau,k}(\vx,\vy)=\exp\!\left(\frac{S_{\tau,k}(\vx,\vy)}{L(\vy)}\right).
\label{eq:indexed_cached_score}
\end{equation}
This is the cached score for the length exponent \(\beta=1\) analyzed here. With a general exponent, the cached value is \(\exp(S_{\tau,k}/L^\beta)\). All score CDFs refer to this exponentiated, length-normalized value. Numerical capping inside the loss does not redefine the cached score.

With fixed constants \(\epsilon_{\mathrm{low}},\epsilon_{\mathrm{high}}>0\), the response-level GSPO objective can be written as
\begin{equation}
J_{\mathrm{seq}}(\vtheta)
=
\mathbb{E}_{\vx,\{\vy_i\}\sim\pi_{\vtheta_{\mathrm{old}}}}
\left[
\frac{1}{G}\sum_{i=1}^{G}
\min\!\left\{
W_i(\vtheta)\widehat A_i,\,
\operatorname{clip}\!\left(
W_i(\vtheta),
1-\epsilon_{\mathrm{low}},
1+\epsilon_{\mathrm{high}}
\right)\widehat A_i
\right\}
\right].
\label{eq:seq_objective}
\end{equation}
This fixed-band objective is the baseline that CSPO adapts. In particular, CSPO reduces exactly to this clipping rule whenever its calibrated endpoints are fixed at \(1-\epsilon_{\mathrm{low}}\) and \(1+\epsilon_{\mathrm{high}}\).


The per-update risk is \(p_{t,k}^{\mathrm{band}}:=p_{t,k}(a_t,b_t)\). The step-level risk \(p_t^{\mathrm{band}}\) will denote its mixture weighted by the numbers of scores actually retained at each update. It is a \emph{potential} clipping risk: the actual constant branch also depends on the advantage sign. The parameter \(\alpha=\alpha_{\mathrm{lo}}+\alpha_{\mathrm{hi}}\) specifies the nominal tail mass used to regularize the surrogate. It does not specify a KL radius or guarantee reward improvement.

The rollout batch \(\mathcal D_t\) is generated once from \(\pi_{\vtheta_{t,0}}\) and may be reused across optimizer updates. Let \(m_{\tau,k}\) be the number of score entries actually retained from forward evaluations at \(\vtheta_{\tau,k}\), and let \(\mathcal C_\tau=\{k:m_{\tau,k}>0\}\). A forward evaluation that is not collected contributes no entries. Repeatedly retained scores from the same response count as separate entries, each at the parameter used in that forward evaluation. Write
\[
\mathcal A_{\tau,k}=(W_{\tau,k,j})_{j=1}^{m_{\tau,k}},\qquad
\mathcal A_\tau=\biguplus_{k\in\mathcal C_\tau}\mathcal A_{\tau,k},\qquad
m_\tau=\sum_{k\in\mathcal C_\tau}m_{\tau,k}.
\]
Finalizing a block concatenates the recorded values; it does not recompute them at \(\vtheta_{\tau,E_\tau}\). Only completed nonempty blocks are included in the rolling buffer.

Let \(\mathcal I_t\subseteq\{1,\ldots,t-1\}\) index the completed nonempty blocks retained before step \(t\), and flatten each \(\mathcal A_\tau\) as \((W_{\tau,j})_{j=1}^{m_\tau}\). Thus
\[
\mathcal{A}_{\tau}=(W_{\tau,j})_{j=1}^{m_\tau},
\qquad
\mathcal{B}_t=\biguplus_{\tau\in\mathcal{I}_t}\mathcal{A}_{\tau},
\qquad
n_t=\sum_{\tau\in\mathcal{I}_t}m_\tau,
\]
with empirical CDF
\(
\widehat F_{\mathcal{B}_t}(w)
=
\frac{1}{n_t}
\sum_{\tau\in\mathcal{I}_t}
\sum_{j=1}^{m_\tau}
\mathbf{1}\{W_{\tau,j}\le w\}.
\)
We use the inverse empirical CDF to define the quantiles:
\(
Q_p(\mathcal{B}_t)
=
\inf\{w:\widehat F_{\mathcal{B}_t}(w)\ge p\}, p\in(0,1).
\)
For \(\alpha_{\mathrm{lo}},\alpha_{\mathrm{hi}}>0\) with \(\alpha_{\mathrm{lo}}+\alpha_{\mathrm{hi}}=\alpha<1\), the raw endpoints are
\begin{equation}
q_{\mathrm{lo},t}=Q_{\alpha_{\mathrm{lo}}}(\mathcal{B}_t),
\qquad
q_{\mathrm{hi},t}=Q_{1-\alpha_{\mathrm{hi}}}(\mathcal{B}_t).
\label{eq:cspo_quantiles}
\end{equation}
A symmetric allocation uses \(\alpha_{\mathrm{lo}}=\alpha_{\mathrm{hi}}=\alpha/2\); asymmetric allocations are allowed but must be fixed before selecting the clipping quantiles.

The unrestricted empirical quantiles above admit the blockwise result in Theorem~\ref{thm:drift}. To express an optional pooled certificate on a finite set of thresholds, grid mode predeclares a finite set \(\mathcal G_M=\{g_1<\cdots<g_M\}\subset(0,\infty)\) before training. Define outward rounding by
\(
\lfloor u\rfloor_{\mathcal G}
=
\max\bigl(\{0\}\cup\{g\in\mathcal G_M:g\le u\}\bigr),
\lceil u\rceil_{\mathcal G}
=
\min\bigl(\{+\infty\}\cup\{g\in\mathcal G_M:g\ge u\}\bigr),
\)
and set
\begin{equation}
\widetilde q_{\mathrm{lo},t}
=
\lfloor q_{\mathrm{lo},t}\rfloor_{\mathcal G},
\qquad
\widetilde q_{\mathrm{hi},t}
=
\lceil q_{\mathrm{hi},t}\rceil_{\mathcal G}.
\label{eq:grid_rounded_quantiles}
\end{equation}
Outward rounding can only enlarge the interval. A logarithmically spaced grid containing \(1\) makes the approximation multiplicative; reaching \(0\) or \(+\infty\) disables clipping on that side and must be reported. Because each rollout step begins at \(W_{t,0}=1\), the operational endpoints are
\begin{equation}
a_t=\min\{\widetilde q_{\mathrm{lo},t},1\},
\qquad
b_t=\max\{\widetilde q_{\mathrm{hi},t},1\}.
\label{eq:cspo_operational_bounds}
\end{equation}
When grid mode is disabled, set \(\widetilde q_{\mathrm{lo},t}=q_{\mathrm{lo},t}\) and \(\widetilde q_{\mathrm{hi},t}=q_{\mathrm{hi},t}\). If the buffer is empty, use warm-start bounds \((1-c,1+c)\), \(c\in(0,1)\). The CSPO surrogate is
\begin{equation}
J_{\mathrm{cspo}}(\vtheta)
=
\mathbb{E}_{\vx,\{\vy_i\}\sim\pi_{\vtheta_{t,0}}}
\left[
\frac{1}{G}\sum_{i=1}^{G}
\min\!\left\{
W_i(\vtheta)\widehat A_i,
\operatorname{clip}(W_i(\vtheta),a_t,b_t)\widehat A_i
\right\}
\right],
\label{eq:cspo_objective}
\end{equation}
where \((a_t,b_t)\) is fixed throughout the optimizer updates of step \(t\). The empirical objective is evaluated at \(\vtheta_{t,k}\) on the responses used by update \(k\). Algorithm~\ref{alg:cspo} retains the scores selected by the existing collection schedule and appends their concatenation for subsequent steps.

\begin{algorithm}[htbp]
\caption{CSPO with a rolling buffer of training scores and grid mode}
\label{alg:cspo}
\begin{algorithmic}[1]
\Require tail levels \(\alpha_{\mathrm{lo}},\alpha_{\mathrm{hi}}\), horizon \(K\), predeclared grid \(\mathcal G_M\), warm-start radius \(c\in(0,1)\)
\State Initialize block buffer \(\mathcal{B}\gets\varnothing\)
\For{training step \(t=1,2,\ldots\)}
    \State Freeze the rollout parameter \(\vtheta_{\mathrm{old}}\gets\vtheta_{t,0}\)
    \State If \(\mathcal B=\varnothing\), use \((a_t,b_t)=(1-c,1+c)\); otherwise compute raw quantiles by \eqref{eq:cspo_quantiles}, round outward by \eqref{eq:grid_rounded_quantiles}, and anchor by \eqref{eq:cspo_operational_bounds}
    \State Generate the rollout batch \(\mathcal D_t\) from \(\pi_{\vtheta_{t,0}}\) and compute its group-relative advantages
    \For{optimizer update \(k=0,\ldots,E_t-1\)}
        \State Evaluate the loss and scores \(W_{t,k}\) on the responses used by this update, with numerator \(\pi_{\vtheta_{t,k}}\) and denominator \(\pi_{\vtheta_{t,0}}\)
        \State If collection is enabled, retain the corresponding \(m_{t,k}\) score entries in \(\mathcal A_{t,k}\)
        \State Apply the optimizer update to obtain \(\vtheta_{t,k+1}\)
    \EndFor
    \State Finalize \(\mathcal A_t=\biguplus_{k\in\mathcal C_t}\mathcal A_{t,k}\) at its recorded values and set \(\vtheta_{t+1,0}\gets\vtheta_{t,E_t}\)
    \State Record empirical diagnostics; append the nonempty \(\mathcal A_t\) for subsequent steps and retain the most recent \(K\) blocks
\EndFor
\end{algorithmic}
\end{algorithm}

\section{Theoretical Results}
\label{sec:theory}

The analysis first bounds tail probabilities for the mixture of scores collected within a training step, accounting for CDF discrepancies and drift. Moment conditions then translate these bounds into objective and gradient error relative to the GSPO power surrogate. Appendix~\ref{app:theory} gives the results for length groups, quantile tracking, exchangeable calibration, and error accumulation across updates under their additional assumptions.

Fix the realized optimization path. For each \((\tau,k)\), define the population CDF by pushing the rollout law through the score map in \eqref{eq:indexed_cached_score}:
\begin{equation}
F_{\tau,k}(w)=\int\mathbf1\{W_{\tau,k}(\vx,\vy)\le w\}\,p(d\vx)\pi_{\vtheta_{\tau,0}}(d\vy\mid\vx).
\label{eq:forward_score_cdf}
\end{equation}
The parameters in this integral are held at their realized values, including any dependence on \(\mathcal D_\tau\). For \(m_\tau>0\), set \(\nu_{\tau,k}=m_{\tau,k}/m_\tau\) and define
\begin{equation}
F_\tau=\sum_{k\in\mathcal C_\tau}\nu_{\tau,k}F_{\tau,k},\qquad
\widehat F_\tau=\sum_{k\in\mathcal C_\tau}\nu_{\tau,k}\widehat F_{\tau,k},\qquad
\widehat F_{\tau,k}(w)=\frac1{m_{\tau,k}}\sum_{j=1}^{m_{\tau,k}}\mathbf1\{W_{\tau,k,j}\le w\}.
\label{eq:step_score_mixture}
\end{equation}
Thus \(F_\tau\) is the population mixture associated with the actual collection weights, not the score law of a single policy pair. The weights count entries rather than distinct responses. In particular, repeated scoring does not create additional independent rollouts, and the entries in \(\mathcal A_\tau\) are not assumed i.i.d.

Let \(\mathcal H_\tau\) denote the history after step \(\tau\), including its parameter path, collection counts, and stored scores. Conditional on this history, a population score \(W_\tau^\star\) is defined by selecting \(k\) with probabilities \(\nu_{\tau,k}\) and evaluating \(W_{\tau,k}\) on an independent hypothetical pair \((\vx_\tau^\star,\vy_\tau^\star)\) from \(p\pi_{\vtheta_{\tau,0}}\), with \(L_\tau^\star=L(\vy_\tau^\star)\). This defines \(F_\tau\) and its risk without an additional algorithmic sampling step:
\begin{equation}
p_\tau(a,b)=F_\tau(a^-)+1-F_\tau(b)
=\sum_{k\in\mathcal C_\tau}\nu_{\tau,k}p_{\tau,k}(a,b),\qquad
p_\tau^{\mathrm{band}}=p_\tau(a_\tau,b_\tau).
\label{eq:step_mixture_risk}
\end{equation}
For the retained block weights \(\lambda_{\tau,t}=m_\tau/n_t\), define the mixture CDF and its current-round discrepancy by
\begin{equation}
\overline F_t(w)
=
\sum_{\tau\in\mathcal I_t}\lambda_{\tau,t}F_\tau(w),
\qquad
D_t
=
\|\overline F_t-F_t\|_\infty.
\label{eq:weighted_drift}
\end{equation}
The weighted drift \(D_t\) is the quantity needed by the quantile argument and obeys \(D_t\le\Delta_K\) under Assumption~\ref{assump:local_stability}; retaining it avoids replacing an average discrepancy by the worst block unnecessarily.

To separate the empirical quantile argument from dependence within a reused rollout batch, define the realized CDF discrepancies
\begin{equation}
\xi_{\tau,k}=\|\widehat F_{\tau,k}-F_{\tau,k}\|_\infty,\qquad
\xi_\tau=\|\widehat F_\tau-F_\tau\|_\infty
\le\sum_{k\in\mathcal C_\tau}\nu_{\tau,k}\xi_{\tau,k}.
\label{eq:realized_cdf_errors}
\end{equation}
These are quantities defined on the realized training path, not assumed sampling rates. The bounds below retain them explicitly; the number of cached entries alone does not bound them under adaptive reuse.

The following condition bounds drift between the step-level mixture CDFs.

\begin{assumption}
\label{assump:local_stability}
For a step \(t\) with \(m_t>0\), let \(\mathcal I_t\) contain the latest \(K_t=|\mathcal I_t|\) retained blocks. Almost surely along the training trajectory,
\begin{equation}
\sup_{w\in\mathbb{R}}
|F_t(w)-F_\tau(w)|
\le
\Delta_K,
\qquad
\forall \tau\in\mathcal{I}_t,
\label{eq:window_drift}
\end{equation}
where \(\Delta_K\in[0,1]\) is the maximum local CDF drift over the retained window.
\end{assumption}

Theorem~\ref{thm:drift} gives the pathwise, tie-robust bound \(p_t^{\mathrm{band}}\le\alpha+2\varepsilon_t+2D_t\), where \(\varepsilon_t=\sum_{\tau\in\mathcal I_t}(m_\tau/n_t)\xi_\tau\). The risk concerns the current step mixture \(F_t\). Corollary~\ref{cor:drift_atoms} quantifies conservatism through its largest endpoint atom. Theorem~\ref{thm:drift} is marginal over response length; Proposition~\ref{prop:length_group_fairness} and Corollary~\ref{cor:length_group_control} state the additional discrepancy required for length-stratified transfer.

\subsection{Optimization consequences of calibrated clipping}
\label{sec:optimization_consequences}

We next connect the calibrated tail event to the optimization objective actually used by CSPO. Fix an update \((t,k)\), hold \(\vtheta=\vtheta_{t,k}\) and \(\vtheta_{\mathrm{old}}=\vtheta_{t,0}\) at their realized values, and fix finite thresholds \(0<a\le1\le b<\infty\). Rounds in which grid rounding returns \(a=0\) or \(b=+\infty\) simply disable clipping on that side and are excluded from the finite-endpoint distortion statements below. Draw
\(
\vx\sim p(\vx),
\vy_{1:G}\stackrel{\mathrm{i.i.d.}}{\sim}
\pi_{\vtheta_{\mathrm{old}}}(\cdot\mid\vx),
U\sim\operatorname{Unif}\{1,\ldots,G\},
\)
where \(U\) is conditionally independent of the response group. For each response, define
\(
R_{\vtheta,i}
=
\frac{\pi_{\vtheta}(\vy_i\mid\vx)}{\pi_{\vtheta_{\mathrm{old}}}(\vy_i\mid\vx)},
W_{\vtheta,i}=R_{\vtheta,i}^{1/L_i},
\)
compute the group-relative advantages \((\widehat A_1,\ldots,\widehat A_G)\), and set
\[
(\vy,\widehat A,L,R_{\vtheta},W_{\vtheta})
=
(\vy_U,\widehat A_U,L_U,R_{\vtheta,U},W_{\vtheta,U}).
\]
All expectations below are taken over this extended probability space; equivalently, they average \(G^{-1}\sum_i\) within a random prompt group. The marginal CDF of this fresh score is \(F_{t,k}\); the step-level \(F_t\) additionally averages over collected update indices. Assume \(\pi_{\vtheta}\) is absolutely continuous with respect to \(\pi_{\vtheta_{\mathrm{old}}}\) and \(L\ge L_{\min}\ge2\) almost surely. Here \(R_{\vtheta}\) is the likelihood ratio of the complete sampled object, including the declared EOS, termination, and truncation convention. The analysis requires \(L\ge2\), which can be enforced by a documented effective-length rule; otherwise Lemma~\ref{lem:power_ratio_second_moment} must be replaced by an explicit moment assumption. Before bounding clipping distortion, we need a moment guarantee that does not itself assume a bounded likelihood ratio. The power \(1/L\) supplies a uniform second-moment bound for the GSPO multiplier.

\begin{lemma}
\label{lem:power_ratio_second_moment}
Under the preceding conditions,
\begin{equation}
\mathbb{E}_{\vtheta_{\mathrm{old}}}[W_{\vtheta}^{2}]
\le
1+\frac{2}{L_{\min}}.
\label{eq:power_ratio_second_moment}
\end{equation}
\end{lemma}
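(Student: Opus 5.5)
The plan is to prove a pointwise inequality for each sampled response and then take expectations, never conditioning on the random length \(L\). Write \(s=2/L\). The assumption \(L\ge L_{\min}\ge 2\) almost surely gives \(s\in(0,1]\) almost surely, and \(W_{\vtheta}^2=R_{\vtheta}^{s}\). For \(s\in[0,1]\) the map \(r\mapsto r^{s}\) is concave on \([0,\infty)\). Weighted AM--GM (equivalently, its tangent line at \(r=1\)) therefore gives
\[
R_{\vtheta}^{s}\le (1-s)\cdot 1+s\,R_{\vtheta}
\]
for every \(R_{\vtheta}\in[0,\infty)\). Hence, almost surely,
\[
W_{\vtheta}^2\le 1-\frac{2}{L}+\frac{2}{L}R_{\vtheta}\le 1+\frac{2}{L_{\min}}R_{\vtheta},
\]
where the last step uses \(R_{\vtheta}\ge0\) and \(L\ge L_{\min}\). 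The case \(R_{\vtheta}=0\) is trivial. The event \(R_{\vtheta}=\infty\) has \(\pi_{\vtheta_{\mathrm{old}}}\)-probability zero, so it does not affect expectations.

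Next I take expectations over the extended probability space of Section~\ref{sec:optimization_consequences}. Since \(U\) is independent of the group and the \(\vy_i\) are i.i.d.\ given \(\vx\), the selected response \(\vy=\vy_U\) has conditional law \(\pi_{\vtheta_{\mathrm{old}}}(\cdot\mid\vx)\). The group-relative advantages play no role here. It then remains to show \(\mathbb E_{\vtheta_{\mathrm{old}}}[R_{\vtheta}]\le1\). For each \(\vx\), \(\mathbb E[R_{\vtheta}\mid\vx]=\int \frac{d\pi_{\vtheta}}{d\pi_{\vtheta_{\mathrm{old}}}}\,d\pi_{\vtheta_{\mathrm{old}}}\). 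This equals \(\pi_{\vtheta}\)-mass of the sampled-object space, which is \(1\) under the stated absolute continuity. Without absolute continuity it is at most \(1\), since it is the mass on the support of \(\pi_{\vtheta_{\mathrm{old}}}\). Here \(R_{\vtheta}\) must be the ratio for the complete sampled object, including the EOS, termination and truncation convention, so that \(\pi_{\vtheta}(\cdot\mid\vx)\) is a genuine probability measure on the same space. Averaging over \(\vx\sim p\) and applying Tonelli's theorem (the integrand is nonnegative) yields \(\mathbb E[W_{\vtheta}^2]\le 1+2/L_{\min}\).

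I expect the main obstacle to be the random exponent. Because \(L\) depends on \(\vy\), one cannot apply Jensen's inequality conditionally on \(L\): conditioning on the length distorts the likelihood ratio, and \(\mathbb E[R_{\vtheta}\mid L]\) need not be at most \(1\). The pointwise concavity bound avoids this, since it already replaces \(1/L\) by the deterministic constant \(1/L_{\min}\) before any expectation is taken. A minor point to check is that \(s\le1\) is exactly where \(L\ge2\) enters. For \(L=1\) we would have \(W^2=R^2\), which has no uniform bound, consistent with the remark preceding Lemma~\ref{lem:power_ratio_second_moment}.
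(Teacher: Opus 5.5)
Your proof is correct and follows essentially the same route as the paper: a pointwise concavity (tangent line at \(r=1\)) bound on \(W_{\vtheta}^2=R_{\vtheta}^{2/L}\), followed by the normalization \(\mathbb{E}_{\vtheta_{\mathrm{old}}}[R_{\vtheta}]=1\). The only difference is cosmetic: the paper first reduces to the fixed exponent \(2/L_{\min}\) by splitting into \(R_{\vtheta}\le1\) and \(R_{\vtheta}>1\), which gives \(1+\frac{2}{L_{\min}}(R_{\vtheta}-1)_+\), whereas you apply the tangent bound with the random exponent \(2/L\) and then enlarge the coefficient, which yields the same constant.
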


To translate this moment control into objective fidelity, define
\begin{equation}
\ell_{a,b}(w,A)
=
\min\{wA,\operatorname{clip}(w,a,b)A\},\ 
\mathcal{J}_{\mathrm{G}}(\vtheta)
=
\mathbb{E}_{\vtheta_{\mathrm{old}}}
[W_{\vtheta}\widehat A],
\ 
\mathcal{J}_{\mathrm{C}}(\vtheta)
=
\mathbb{E}_{\vtheta_{\mathrm{old}}}
[\ell_{a,b}(W_{\vtheta},\widehat A)],
\label{eq:power_surrogates}
\end{equation}
and assume that the fixed update advantage obeys \(|\widehat A|\le A_{\max}\) almost surely. This bound is operational for standard group normalization: with a population-standard-deviation convention one may take \(A_{\max}=\sqrt G\), and with an unbiased sample-standard-deviation convention \(A_{\max}=\sqrt{G-1}\); adding \(\epsilon_A>0\) only decreases the magnitude. Implementations using a different normalization must state the corresponding constant. Let
\(
\widehat A_+=\max\{\widehat A,0\},\ 
\widehat A_-=\max\{-\widehat A,0\},\ 
p_{\vtheta}^{+}=\mathbb{P}(W_{\vtheta}>b),\ 
p_{\vtheta}^{-}=\mathbb{P}(W_{\vtheta}<a),\ 
p_{\vtheta}=p_{\vtheta}^{+}+p_{\vtheta}^{-}.
\)
The next proposition bounds the decrease in the surrogate objective caused by clipping.

\begin{proposition}
\label{prop:objective_distortion}
For every \(\vtheta\) satisfying the preceding conditions,
\begin{align}
0
&\le
\mathcal{J}_{\mathrm{G}}(\vtheta)
-
\mathcal{J}_{\mathrm{C}}(\vtheta)
=
\mathbb{E}_{\vtheta_{\mathrm{old}}}
\!\left[
\widehat A_+(W_{\vtheta}-b)_+
+
\widehat A_-(a-W_{\vtheta})_+
\right]
\label{eq:exact_objective_distortion}\\
&\le
A_{\max}
\left[
\sqrt{
\left(1+\frac{2}{L_{\min}}\right)
p_{\vtheta}^{+}
}
+
a p_{\vtheta}^{-}
\right]
\label{eq:objective_distortion_tailwise}\\
&\le
A_{\max}
\left[
\sqrt{
\left(1+\frac{2}{L_{\min}}\right)
p_{\vtheta}
}
+
a p_{\vtheta}
\right].
\label{eq:objective_distortion_total}
\end{align}
\end{proposition}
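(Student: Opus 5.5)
The argument is pointwise first, then integrated. For fixed $w>0$ and $A\in\mathbb{R}$, the plan is to compute $wA-\ell_{a,b}(w,A)$ by cases on the sign of $A$ and the position of $w$ relative to $[a,b]$.

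Case $A\ge0$. Here $\ell_{a,b}=A\min\{w,\operatorname{clip}(w,a,b)\}$. Since $\min\{w,\operatorname{clip}(w,a,b)\}=w-(w-b)_+$, the difference is $A(w-b)_+$.

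Case $A<0$. Here $\ell_{a,b}=A\max\{w,\operatorname{clip}(w,a,b)\}$. Since $\max\{w,\operatorname{clip}(w,a,b)\}=w+(a-w)_+$, the difference is $-A(a-w)_+=A_-(a-w)_+$.

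Combining the two cases gives the pointwise identity
\[
wA-\ell_{a,b}(w,A)=A_+(w-b)_++A_-(a-w)_+\ge0 .
\]
Every term is measurable, and $W_{\vtheta}\widehat A$ is integrable because $|\widehat A|\le A_{\max}$ and $\mathbb{E}[W_{\vtheta}]\le\sqrt{\mathbb{E}[W_{\vtheta}^2]}<\infty$ by Lemma~\ref{lem:power_ratio_second_moment}. Taking expectations over the extended probability space therefore yields the equality \eqref{eq:exact_objective_distortion} and nonnegativity.

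For \eqref{eq:objective_distortion_tailwise}, bound $\widehat A_\pm\le A_{\max}$ and treat the two tails separately.

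Lower tail. On $\{W_{\vtheta}<a\}$ we have $(a-W_{\vtheta})_+\le a$ because $W_{\vtheta}>0$. Hence the lower-tail term is at most $a\,p_{\vtheta}^-$.

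Upper tail. Since $b\ge0$, we have $(W_{\vtheta}-b)_+\le W_{\vtheta}\mathbf 1\{W_{\vtheta}>b\}$. Cauchy--Schwarz then gives
\[
\mathbb{E}\bigl[W_{\vtheta}\mathbf 1\{W_{\vtheta}>b\}\bigr]\le\sqrt{\mathbb{E}[W_{\vtheta}^2]\,p_{\vtheta}^+}.
\]
Applying Lemma~\ref{lem:power_ratio_second_moment} to the second moment yields the factor $1+2/L_{\min}$.

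Step \eqref{eq:objective_distortion_total} is immediate from $p_{\vtheta}^\pm\le p_{\vtheta}$ and the monotonicity of the square root.

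The main obstacle, mild as it is, is the case analysis of the PPO-style $\min$, in particular checking it when $w$ lies inside or on the boundary of $[a,b]$ so that no spurious terms appear. A second point to watch is that the moment bound is applied to the randomly selected response $U$. This is fine because Lemma~\ref{lem:power_ratio_second_moment} is stated on the same extended space, with $L\ge L_{\min}$ almost surely. No independence between $\widehat A$ and $W_{\vtheta}$ is needed, because we only use the almost-sure bound $|\widehat A|\le A_{\max}$.
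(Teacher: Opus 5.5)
Your proof is correct and follows the paper's argument step for step. Both use the sign-split pointwise identity $wA-\ell_{a,b}(w,A)=A_+(w-b)_++A_-(a-w)_+$, then Cauchy--Schwarz with Lemma~\ref{lem:power_ratio_second_moment} on the upper tail, the bound $a\,p_{\vtheta}^-$ on the lower tail, and finally $p_{\vtheta}^\pm\le p_{\vtheta}$. Your explicit integrability check is a small addition the paper leaves implicit, and for the lower tail $W_{\vtheta}\ge0$ suffices, so you do not need strict positivity.
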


Objective fidelity does not by itself show that the clipped update points in a useful direction. Define the normalized sequence score vector
\(
\psi_{\vtheta}(\vx,\vy)
=
\frac{1}{L}
\nabla_{\vtheta}\log\pi_{\vtheta}(\vy\mid\vx)
\)
and assume \(\|\psi_{\vtheta}\|\le M_{\psi}\) almost surely, together with the regularity needed to differentiate the unclipped expectation. At \(W_{\vtheta}=a\) or \(W_{\vtheta}=b\), the per-sample clipped objective can be nondifferentiable. We define the CSPO update rule to select the derivative of the unclipped branch at equality, and let \(g_{\mathrm{C}}\) denote the expectation of this measurable selected direction. Away from the endpoints it equals the classical clipped-objective gradient. An implementation invoking Proposition~\ref{prop:gradient_fidelity} must use the same endpoint derivative convention; otherwise an additional boundary term is needed. Under this convention, the actual saturation event is
\begin{equation}
\mathcal{C}_{\vtheta}
=
\{\widehat A>0,W_{\vtheta}>b\}
\cup
\{\widehat A<0,W_{\vtheta}<a\}.
\label{eq:actual_saturation_event}
\end{equation}
Thus \(\mathcal{C}_{\vtheta}\subseteq \{W_{\vtheta}<a\ \mathrm{or}\ W_{\vtheta}>b\}\); positive-advantage samples saturate only in the upper tail, and negative-advantage samples only in the lower tail.

\begin{proposition}
\label{prop:gradient_fidelity}
Under the preceding conditions,
\begin{equation}
g_{\mathrm{G}}
:=
\nabla\mathcal{J}_{\mathrm{G}}(\vtheta)
=
\mathbb{E}_{\vtheta_{\mathrm{old}}}
[W_{\vtheta}\widehat A\psi_{\vtheta}],
\qquad
g_{\mathrm{C}}
:=
\mathbb{E}_{\vtheta_{\mathrm{old}}}
[W_{\vtheta}\widehat A\psi_{\vtheta}
\mathbf{1}_{\mathcal{C}_{\vtheta}^{c}}],
\label{eq:power_gradients}
\end{equation}
and
\begin{equation}
\|g_{\mathrm{C}}-g_{\mathrm{G}}\|
\le
A_{\max}M_{\psi}
\sqrt{
\left(1+\frac{2}{L_{\min}}\right)p_{\vtheta}
}.
\label{eq:gradient_fidelity_bound}
\end{equation}
Moreover, the per-sample clipped gradient \(X_{\mathrm{C}} = W_{\vtheta}\widehat A\psi_{\vtheta} \mathbf{1}_{\mathcal{C}_{\vtheta}^{c}}\) satisfies
\begin{equation}
\mathbb{E}_{\vtheta_{\mathrm{old}}}
[\|X_{\mathrm{C}}\|^2]
\le
A_{\max}^{2}M_{\psi}^{2}
\left(1+\frac{2}{L_{\min}}\right).
\label{eq:clipped_gradient_second_moment}
\end{equation}
\end{proposition}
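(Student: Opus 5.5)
The plan is to work pathwise: compute the gradient of each surrogate, subtract, and control the difference by Cauchy--Schwarz together with Lemma~\ref{lem:power_ratio_second_moment}. \textbf{Step 1 (the unclipped gradient).} Write \(\mathcal{J}_{\mathrm{G}}(\vtheta)=\mathbb{E}_{\vtheta_{\mathrm{old}}}[W_{\vtheta}\widehat A]\). Here \(\widehat A\), \(L\) and the sampling law are all fixed by \(\vtheta_{\mathrm{old}}\), so only \(W_{\vtheta}=\exp(S/L)\) depends on \(\vtheta\). Its derivative is
\[
\nabla_{\vtheta}W_{\vtheta}=W_{\vtheta}\,\tfrac1L\nabla_{\vtheta}\log\pi_{\vtheta}(\vy\mid\vx)=W_{\vtheta}\psi_{\vtheta},
\]
because the denominator \(\pi_{\vtheta_{\mathrm{old}}}\) does not vary. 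The assumed regularity lets us exchange gradient and expectation, which gives \(g_{\mathrm{G}}\).

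\textbf{Step 2 (the selected clipped direction).} Analyse \(\ell_{a,b}(w,A)\) case by case in the sign of \(A\).
\begin{itemize}
\item If \(A>0\), then \(\min\{wA,\operatorname{clip}(w,a,b)A\}=A\min\{w,\operatorname{clip}(w,a,b)\}\). This equals \(wA\) for \(w\le b\) and the constant \(bA\) for \(w>b\).
\item If \(A<0\), the objective equals \(wA\) for \(w\ge a\) and the constant \(aA\) for \(w<a\).
\item If \(A=0\), the objective is identically zero, and so is the direction.
\end{itemize}
With the stated convention (take the unclipped branch's derivative at equality), the per-sample selected derivative is \(W_{\vtheta}\widehat A\psi_{\vtheta}\mathbf 1_{\mathcal C_{\vtheta}^c}\). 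Here \(\mathcal C_{\vtheta}\) is the strict-inequality saturation event in \eqref{eq:actual_saturation_event}. Taking expectations gives \(g_{\mathrm{C}}\) as displayed. Integrability is guaranteed by Step~3.

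\textbf{Step 3 (the bounds).} The difference is \(g_{\mathrm{G}}-g_{\mathrm{C}}=\mathbb{E}[W_{\vtheta}\widehat A\psi_{\vtheta}\mathbf 1_{\mathcal C_{\vtheta}}]\). Jensen and the almost-sure bounds \(|\widehat A|\le A_{\max}\), \(\|\psi_{\vtheta}\|\le M_\psi\) give
\[
\|g_{\mathrm{C}}-g_{\mathrm{G}}\|\le A_{\max}M_\psi\,\mathbb{E}[W_{\vtheta}\mathbf 1_{\mathcal C_{\vtheta}}].
\]
Cauchy--Schwarz then yields
\[
\mathbb{E}[W_{\vtheta}\mathbf 1_{\mathcal C_{\vtheta}}]\le\sqrt{\mathbb{E}[W_{\vtheta}^2]\,\mathbb P(\mathcal C_{\vtheta})}.
\]
We have \(\mathbb P(\mathcal C_{\vtheta})\le p_{\vtheta}\), since \(\mathcal C_{\vtheta}\subseteq\{W_{\vtheta}<a\}\cup\{W_{\vtheta}>b\}\) and these two events are disjoint because \(a\le b\). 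Lemma~\ref{lem:power_ratio_second_moment} supplies \(\mathbb{E}[W_{\vtheta}^2]\le 1+2/L_{\min}\), which gives \eqref{eq:gradient_fidelity_bound}.

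For \eqref{eq:clipped_gradient_second_moment}, bound pointwise:
\[
\|X_{\mathrm C}\|^2\le W_{\vtheta}^2A_{\max}^2M_\psi^2 .
\]
Take expectations and apply the same lemma. This bound also gives square-integrability, and hence integrability, of \(X_{\mathrm C}\), which Step~2 needed.

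\textbf{Main obstacle.} The only delicate point is Step~2: checking that the selected direction really is \(W\widehat A\psi\) off \(\mathcal C_{\vtheta}\), including at the boundary \(W=a\) or \(W=b\). For example, when \(A>0\) and \(W=a\), the clip is inactive from above. Here \(\min\{w,\operatorname{clip}(w,a,b)\}=w\) holds for \(w\) near \(a\) (for \(w<a\), \(\min\{w,a\}=w\)), so the function is actually differentiable with derivative \(W\psi\). The genuine kink at \(W=b\) is handled by the stated convention. I would tabulate the four sign/endpoint cases explicitly.

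The interchange of \(\nabla\) and \(\mathbb{E}\) for \(g_{\mathrm G}\) is assumed. I would not attempt it for the clipped objective: \(g_{\mathrm C}\) is defined directly as the expectation of the selected direction, so no interchange is claimed there.
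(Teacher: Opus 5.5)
Your proposal is correct and follows essentially the same route as the paper's proof. The paper also computes \(\nabla_{\vtheta}W_{\vtheta}=W_{\vtheta}\psi_{\vtheta}\) and uses the sign-of-advantage case analysis, with the unclipped-branch convention at equality, to get \(g_{\mathrm G}-g_{\mathrm C}=\mathbb E[W_{\vtheta}\widehat A\psi_{\vtheta}\mathbf 1_{\mathcal C_{\vtheta}}]\); it then bounds this through the almost-sure bounds on \(\widehat A\) and \(\psi_{\vtheta}\), Cauchy--Schwarz, \(\mathbb P(\mathcal C_{\vtheta})\le p_{\vtheta}\), and Lemma~\ref{lem:power_ratio_second_moment}, handling the second moment pointwise. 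Your explicit endpoint case check, and your remark that the second-moment bound also makes the selected direction integrable, are small additions the paper leaves implicit.
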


A sharper diagnostic, which also requires gradient information, is the expected gradient magnitude removed by clipping
\begin{equation}
\tau_{\vtheta}
=
\mathbb E\!\left[
W_{\vtheta}|\widehat A|\,\|\psi_{\vtheta}\|
\mathbf 1_{\mathcal C_{\vtheta}}
\right],
\qquad
\|g_{\mathrm G}-g_{\mathrm C}\|\le\tau_{\vtheta}.
\label{eq:saturated_gradient_mass}
\end{equation}
This inequality follows directly from \eqref{eq:power_gradients}. Calibrating \(\tau_{\vtheta}\) would require advantages and gradient norms in addition to score quantiles; the current rule does not control it at level \(\alpha\). The bound on the original normalized score gradient is an assumption that need not hold for a neural network. Clipping individual gradients changes the gradient field and requires a separate bias analysis; clipping the batch gradient does not imply this assumption. Proposition~\ref{prop:gradient_fidelity} therefore requires the stated \(M_\psi\) condition or a replacement based on finite moments.

With \(g_{\mathrm{G}}\) and \(g_{\mathrm{C}}\) defined in Proposition~\ref{prop:gradient_fidelity}, let
\(
B_{\vtheta}
=
A_{\max}M_{\psi}
\sqrt{
\left(1+\frac{2}{L_{\min}}\right)p_{\vtheta}
}.
\)
Proposition~\ref{prop:gradient_fidelity} implies \(\langle g_{\mathrm{G}},g_{\mathrm{C}}\rangle \ge \|g_{\mathrm{G}}\|(\|g_{\mathrm{G}}\|-B_{\vtheta})\). Hence the clipped gradient is an ascent direction for the unclipped power surrogate whenever \(B_{\vtheta}<\|g_{\mathrm{G}}\|\). For a step with \(m_t,n_t>0\) and an update \((t,k)\), the risk entering these fidelity bounds is \(p_{t,k}^{\mathrm{band}}\), not its step average. Define
\begin{equation}
D_{t,k}=\|\overline F_t-F_{t,k}\|_\infty,\qquad
c_{t,k}=\|F_{t,k}-F_t\|_\infty,\qquad D_{t,k}\le D_t+c_{t,k}.
\label{eq:iterate_drift}
\end{equation}
The same two-endpoint argument as in Theorem~\ref{thm:drift} gives, for every internal update,
\begin{equation}
p_{t,k}^{\mathrm{band}}\le\overline p_{t,k}
:=\min\{1,\alpha+2\varepsilon_t+2D_{t,k}\}
\le\min\{1,\alpha+2\varepsilon_t+2D_t+2c_{t,k}\}.
\label{eq:iterate_tail_control}
\end{equation}
Indeed, \(\|\widehat F_{\mathcal B_t}-F_{t,k}\|_\infty\le\varepsilon_t+D_{t,k}\), so each strict tail contributes at most its empirical tail budget plus this discrepancy. Thus \(\overline p_{t,k}\) may replace \(p_{\vtheta}\) in the objective and gradient bounds at \(\vtheta=\vtheta_{t,k}\). Control of an entire internal path uses the corresponding bounds for all its iterates.

Corollary~\ref{cor:one_step_power_improvement} gives the smoothness implication for a subsequent update at a parameter with a valid tail bound. Applying it along a path requires a bound at every iterate; Proposition~\ref{prop:path_deviation} states the additional condition for accumulating the gradient error. The following remark specifies the scope of these results.

\begin{remark}
\label{rem:power_surrogate_scope}
The optimization results compare CSPO with the unclipped GSPO power surrogate. In general, \(W_{\vtheta}=R_{\vtheta}^{1/L}\) is not a density ratio and
\[
\mathbb{E}_{\vtheta_{\mathrm{old}}}
[W_{\vtheta}f(\vx,\vy)]
\ne
\mathbb{E}_{\vtheta}[f(\vx,\vy)].
\]
Consequently, these results do not imply a KL trust-region constraint or monotonic improvement of the environment reward. Their content is that calibrated tail mass controls how far clipping moves the objective and gradient away from the corresponding GSPO surrogate.
\end{remark}

\subsection{Pooled bounds and quantile tracking}
\label{sec:pooled_tracking}
On the predeclared grid, only the following realized CDF discrepancy is needed:
\begin{equation}
r_t^{\mathcal G}=\max_{g\in\mathcal G_M}\max\!\left\{
|\widehat F_{\mathcal B_t}(g)-\overline F_t(g)|,
|\widehat F_{\mathcal B_t}(g^-)-\overline F_t(g^-)|\right\}.
\label{eq:grid_cdf_error}
\end{equation}
At the finite grid, define
\begin{equation}
D_{t,\mathcal G}
=
\max_{g\in\mathcal G_M}
\max\!\left\{
|\overline F_t(g)-F_t(g)|,
|\overline F_t(g^-)-F_t(g^-)|
\right\}.
\label{eq:grid_drift_radius}
\end{equation}
Here \(D_{t,\mathcal G}\le D_t\le\Delta_K\), and \(r_t^{\mathcal G}\le\varepsilon_t\) by the weighted triangle inequality.

\begin{theorem}
\label{thm:grid_pooled}
For a step \(t\) with \(m_t,n_t>0\), let \(r_t^{\mathcal G}\) and \(D_{t,\mathcal G}\) be defined above. Use the outward-rounded and anchored endpoints in \eqref{eq:grid_rounded_quantiles}--\eqref{eq:cspo_operational_bounds}. Then, on every realized training path,
\begin{equation}
p_t^{\mathrm{band}}
\le
\min\!\left\{
1,
\alpha+2r_t^{\mathcal G}+2D_{t,\mathcal G}
\right\}.
\label{eq:grid_pooled_control}
\end{equation}
\end{theorem}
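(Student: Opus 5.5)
The plan is to split the band risk $p_t^{\mathrm{band}}=F_t(a_t^-)+1-F_t(b_t)$ into its two strict tails and bound each one separately. In each case the chain is: empirical tail of the buffer, then pooled population mixture $\overline F_t$, then current mixture $F_t$. Both CDF comparisons are made only at grid points or their left limits. This is possible because outward rounding puts every finite operational endpoint in $\mathcal G_M\cup\{1\}$, or sends it to $0$ or $+\infty$. The cap at $1$ is trivial, so it suffices to show
\[
F_t(a_t^-)\le\alpha_{\mathrm{lo}}+r_t^{\mathcal G}+D_{t,\mathcal G}
\]
and the analogous upper-tail bound with $\alpha_{\mathrm{hi}}$, and then add them.

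\emph{Lower tail.} Consider first the case $a_t=\widetilde q_{\mathrm{lo},t}=g\in\mathcal G_M$ with $g\le q_{\mathrm{lo},t}$. By the definition of the inverse empirical CDF, every $w<q_{\mathrm{lo},t}$ has $\widehat F_{\mathcal B_t}(w)<\alpha_{\mathrm{lo}}$. Taking left limits gives $\widehat F_{\mathcal B_t}(q_{\mathrm{lo},t}^-)\le\alpha_{\mathrm{lo}}$, and since $g\le q_{\mathrm{lo},t}$, also $\widehat F_{\mathcal B_t}(g^-)\le\alpha_{\mathrm{lo}}$. Then
\[
F_t(g^-)\le\overline F_t(g^-)+D_{t,\mathcal G}\le\widehat F_{\mathcal B_t}(g^-)+r_t^{\mathcal G}+D_{t,\mathcal G}\le\alpha_{\mathrm{lo}}+r_t^{\mathcal G}+D_{t,\mathcal G}.
\]
The remaining lower cases are quick. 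If $\widetilde q_{\mathrm{lo},t}=0$, clipping is disabled and $F_t(0^-)=0$, since scores are positive. If anchoring replaces the rounded value by $a_t=1<\widetilde q_{\mathrm{lo},t}$, then $1$ lies below a grid point $g\le q_{\mathrm{lo},t}$. Monotonicity gives $F_t(1^-)\le F_t(g^-)$, and the previous chain applies at that $g$. So anchoring only shrinks the tail events, and this holds in every case.

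\emph{Upper tail.} Let $b_t=g=\lceil q_{\mathrm{hi},t}\rceil_{\mathcal G}\ge q_{\mathrm{hi},t}$. Right-continuity of the step function $\widehat F_{\mathcal B_t}$ shows that the infimum defining $q_{\mathrm{hi},t}$ is attained, so $\widehat F_{\mathcal B_t}(g)\ge\widehat F_{\mathcal B_t}(q_{\mathrm{hi},t})\ge1-\alpha_{\mathrm{hi}}$. Comparing at the grid point $g$ itself, not at a left limit, gives
\[
1-F_t(g)\le1-\overline F_t(g)+D_{t,\mathcal G}\le1-\widehat F_{\mathcal B_t}(g)+r_t^{\mathcal G}+D_{t,\mathcal G}\le\alpha_{\mathrm{hi}}+r_t^{\mathcal G}+D_{t,\mathcal G}.
\]
The degenerate cases mirror the lower tail. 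If $b_t=+\infty$, the upper tail is $0$. If anchoring sets $b_t=1>g$, monotonicity again only helps. Summing the two tails and using $\alpha_{\mathrm{lo}}+\alpha_{\mathrm{hi}}=\alpha$ yields \eqref{eq:grid_pooled_control}.

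\emph{Main obstacle.} The delicate point is the bookkeeping of left limits versus values and ties. The lower tail is strict, so it requires the $g^-$ comparisons, while the upper tail uses values at $g$. Both kinds of evaluation appear in the definitions of $r_t^{\mathcal G}$ and $D_{t,\mathcal G}$, and the argument needs exactly these two. I would also check explicitly that the anchored endpoint $1$ never needs to be a grid point: monotonicity of $F_t$ transfers the bound from the nearest grid point on the correct side, so evaluating the CDFs at $1$ is never required.
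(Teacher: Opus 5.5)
Your proof is correct and follows the paper's argument essentially step for step. You bound each strict tail at the outward-rounded grid endpoint by chaining the empirical-inverse inequalities ($\widehat F_{\mathcal B_t}(g^-)\le\alpha_{\mathrm{lo}}$ and $\widehat F_{\mathcal B_t}(g)\ge 1-\alpha_{\mathrm{hi}}$) through $r_t^{\mathcal G}$ and $D_{t,\mathcal G}$, treat the sentinels $0$ and $+\infty$ as zero tails, and add; your explicit monotonicity argument for the anchored endpoint $1$ simply spells out the paper's one-line remark that anchoring only enlarges the interval.
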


This bound uses the actual pooled score discrepancy and allows dependent, repeatedly scored responses. Corollary~\ref{cor:window_tradeoff} separates its window dependence from linear local drift.
Theorem~\ref{thm:exchangeable_rank} in Appendix~\ref{app:theory} gives finite-sample rank control for exchangeable scores evaluated at a fixed policy pair. This assumption does not follow for the rolling buffer. Corollary~\ref{cor:observable_drift} relates the empirical tail and drift diagnostics to the population discrepancy, and Corollary~\ref{cor:anytime_grid} records the validity of \eqref{eq:grid_pooled_control} across steps. Proposition~\ref{prop:dynamic_tracking} further bounds how closely the endpoints follow population quantiles under the local inverse-CDF condition in Assumption~\ref{assump:quantile_identifiability}.

\section{Experiments}
\label{sec:experiments}

We evaluate CSPO on two settings: multi-step embodied decision making using ALFWorld \citep{shridhar2021alfworld}, and difficult mathematical reasoning using DAPO-Math-17k \citep{yu2025dapo}. 
ALFWorld is a text-based embodied benchmark in which the agent completes household goals through multi-step interaction. ALFWorld consists of six goal-conditioned task types: Pick \& Place (Pick), Examine in Light (Look), Clean \& Place (Clean), Heat \& Place (Heat), Cool \& Place (Cool), and Pick Two \& Place (Pick2).
For mathematical reasoning, we train on DAPO-Math-17k and evaluate on AIME \citep{maa_aime_2024,maa_aime_2025}, AMC \citep{maa_amc12_2023}, MATH500 \citep{lightman2023lets}, Minerva Math \citep{lewkowycz2022solving}, and OlympiadBench \citep{he2024olympiadbench}.
Both settings provide verifiable rewards, obtained directly from the environment for ALFWorld and from answer checking for mathematical reasoning. 

We use Qwen3-1.7B for ALFWorld and Qwen3-1.7B-Base for mathematical reasoning \citep{yang2025qwen3}. We compare CSPO with several policy optimization baselines: GRPO \citep{shao2024deepseekmath}, DAPO \citep{yu2025dapo}, and GSPO \citep{zheng2025gspo}. We generate \(G=8\) responses per prompt for mathematical reasoning and \(G=8\) trajectories per task for ALFWorld. Detailed training settings are provided in Appendix~\ref{app:experimental_setup}.

\subsection{Main results}
\label{subsec:main_results}

\begin{table}[htbp]
\centering
\caption{Performance on ALFWorld.}
\label{tab:alfworld_results}
\begin{tabular}{lccccccc}
\toprule
\multirow{2}{*}{Method} 
& \multicolumn{7}{c}{\textbf{ALFWorld}} \\
& Pick & Look & Clean & Heat & Cool & Pick2 & All \\
\midrule
Qwen3-1.7B 
& 0.00
& 40.00
& 3.23
& 0.00
& 0.00
& 0.00
& 3.91 \\

GSPO 
& 58.06	
& 66.67	
& 45.00	
& 61.54	
& 22.22	
& 20.83
& 31.25 \\

GRPO 
& 67.86	
& 66.67	 
& 56.52	 
& 61.54	
& 55.17	 
& \textbf{45.45} 
& 52.34 \\


DAPO 
& 76.92 
& \textbf{70.00}
& \textbf{72.00} 
& \textbf{75.00} 
& 56.52 
& 33.33 
& 53.12 \\

CSPO
& \textbf{93.33}
& \textbf{70.00}
& 68.00
& \textbf{75.00}
& \textbf{60.00}
& 44.00
& \textbf{64.06} \\
\bottomrule
\end{tabular}
\end{table}

\begin{table}[htbp]
\centering
\caption{Pass@1 performance on difficult mathematical reasoning benchmarks.}
\label{tab:hard_level_math}
\begin{tabular}{lccccccc}
\toprule
Method & AIME24 & AIME25 & AMC & MATH500 & Minerva & Oly. & Average \\
\midrule
Qwen3-1.7B-Base & 5.42 & 3.54 & 25.38 & 54.43 & 13.72 & 22.47 & 20.83 \\
GRPO & 11.40 & 5.65 & 39.43 & 70.49 & 21.90 & 33.27 & 30.36 \\
DAPO & 11.05 & 5.72 & 41.56 & 73.04 & 22.82 & 34.32 & 31.42 \\
GSPO & 15.08 & 8.33 & 42.86 & 73.36 & 22.68 & 37.83 & 33.36 \\
CSPO & \textbf{16.87} & \textbf{8.57} & \textbf{43.36} & \textbf{74.22} & \textbf{23.19} & \textbf{38.44} & \textbf{34.11} \\
\bottomrule
\end{tabular}
\end{table}

Table~\ref{tab:alfworld_results} reports performance on ALFWorld. CSPO achieves the best overall score of \(64.06\), improving over DAPO by \(10.94\) absolute points. CSPO achieves the best or tied-best performance on four of the six task types, including Pick, Look, Heat, and Cool. These results suggest that CSPO is effective for multi-step interactive tasks.

Table~\ref{tab:hard_level_math} reports Pass@1 performance on difficult mathematical reasoning benchmarks. CSPO achieves the best average score of \(34.11\), improving over GSPO by \(0.75\) absolute points. Compared with the Qwen3-1.7B-Base model, CSPO improves the average score by \(13.28\) absolute points. The gains are consistent across AIME, AMC, MATH500, Minerva, and Olympiad-level evaluations, suggesting that adaptive conformal sequence-level clipping is beneficial for difficult reasoning tasks.

\subsection{Training dynamics}

Figure~\ref{fig:learning_curves_alfworld} shows training rewards, validation success rates, and the calibrated endpoints on ALFWorld. The endpoints fluctuate throughout training as the distribution of retained scores changes. The upper and lower endpoints follow different trajectories, allowing the interval to adapt separately to the two tails.

Figure~\ref{fig:learning_curves_dapo_math} shows the corresponding curves for DAPO-Math-17k, including Pass@1 on AIME24. During early training, the upper endpoint decreases toward one and the lower endpoint increases toward one; both then vary within a narrower range. The band thus contracts as the retained power scores become more concentrated. These curves describe the behavior of the adaptive band; assessing its population tail probability also requires the CDF discrepancies in Section~\ref{sec:theory}.

\begin{figure*}[htbp]
\centering
\includegraphics[width=0.6\textwidth]{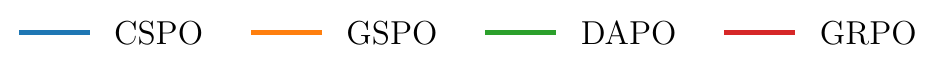}
\subfloat[Reward curves.]{\centering\includegraphics[trim=0cm 0.5cm 0cm 0cm,width=0.245\textwidth]{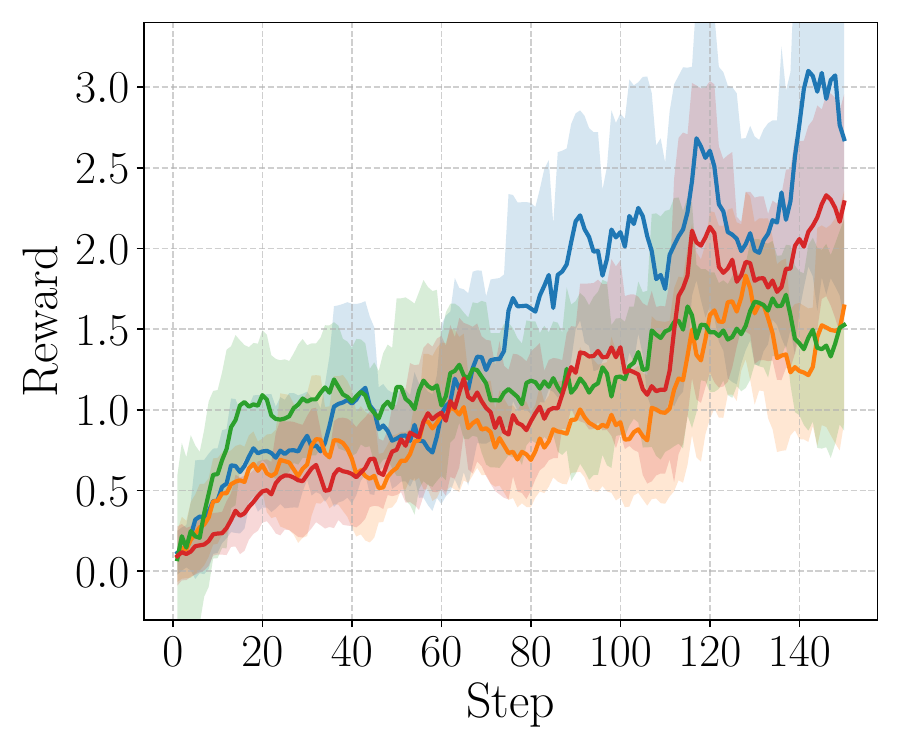}}
\subfloat[Success rate.]{\centering\includegraphics[trim=0cm 0.5cm 0cm 0cm,width=0.245\textwidth]{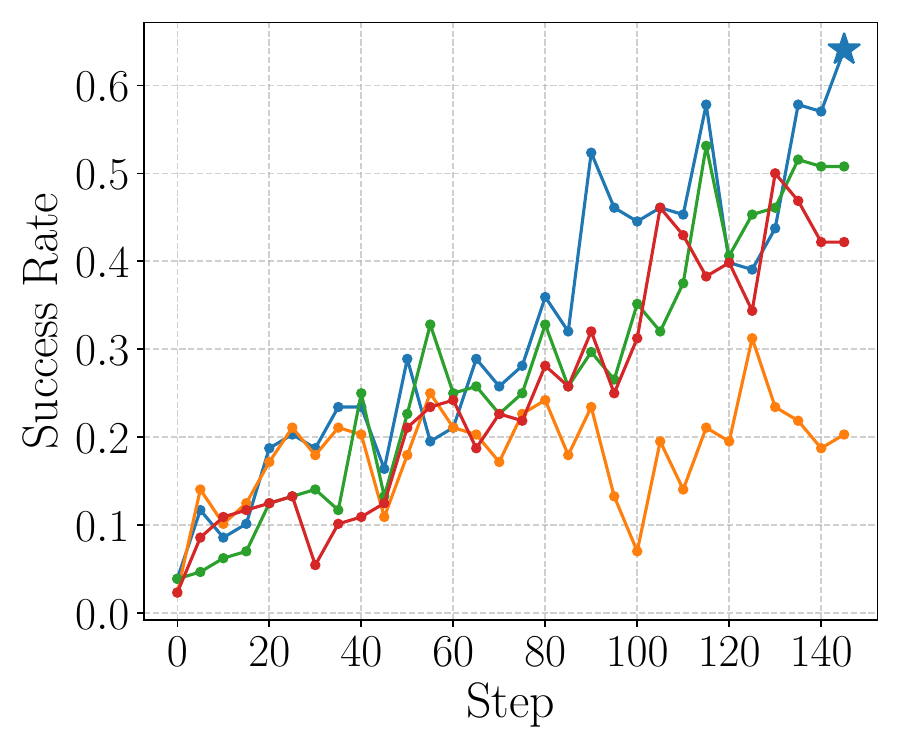}}
\subfloat[Dynamics of \(q_{\mathrm{hi}}\).]{\centering\includegraphics[trim=0cm 0.5cm 0cm 0cm,width=0.245\textwidth]{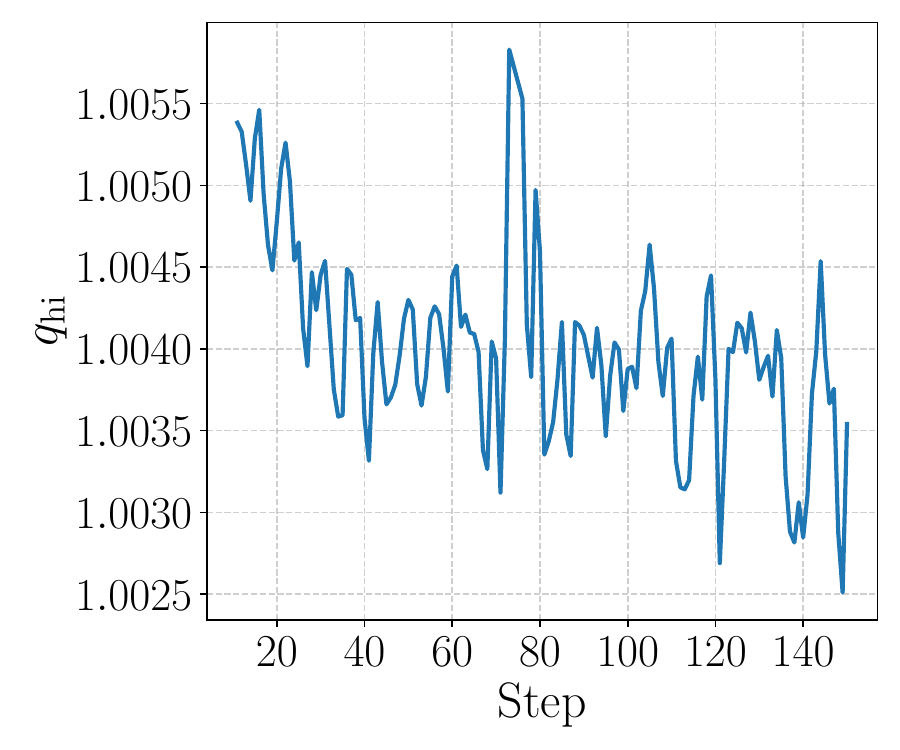}}
\subfloat[Dynamics of \(q_{\mathrm{lo}}\).]{\centering\includegraphics[trim=0cm 0.5cm 0cm 0cm,width=0.245\textwidth]{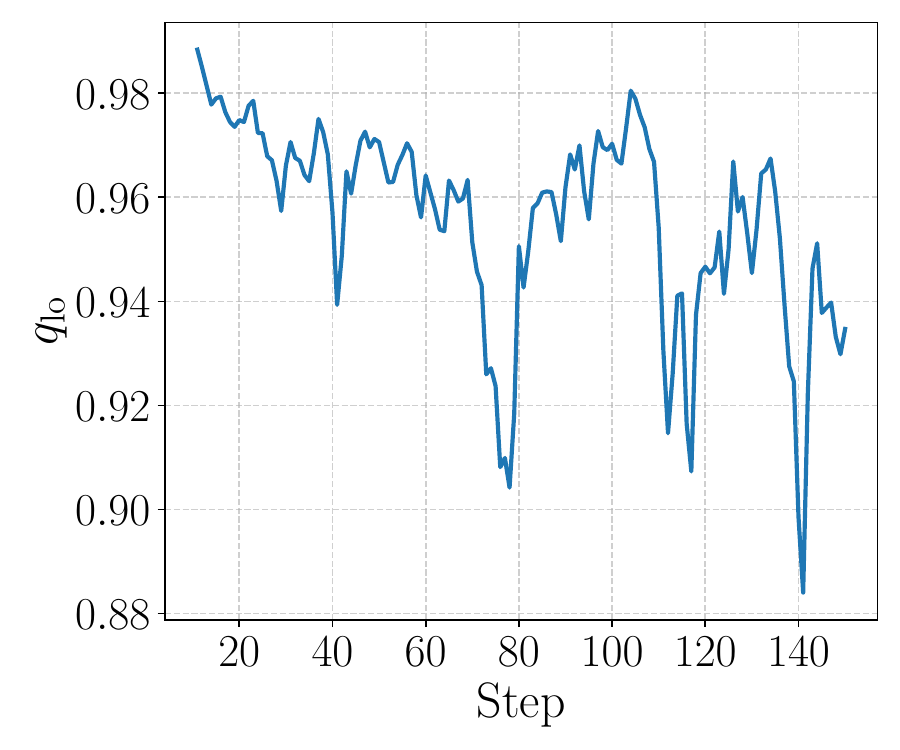}}
\caption{Learning curves and evaluation results on ALFWorld.}
\label{fig:learning_curves_alfworld}
\end{figure*}

\begin{figure*}[htbp]
\centering
\subfloat[Reward curves.]{\centering\includegraphics[trim=0cm 0.5cm 0cm 0cm,width=0.245\textwidth]{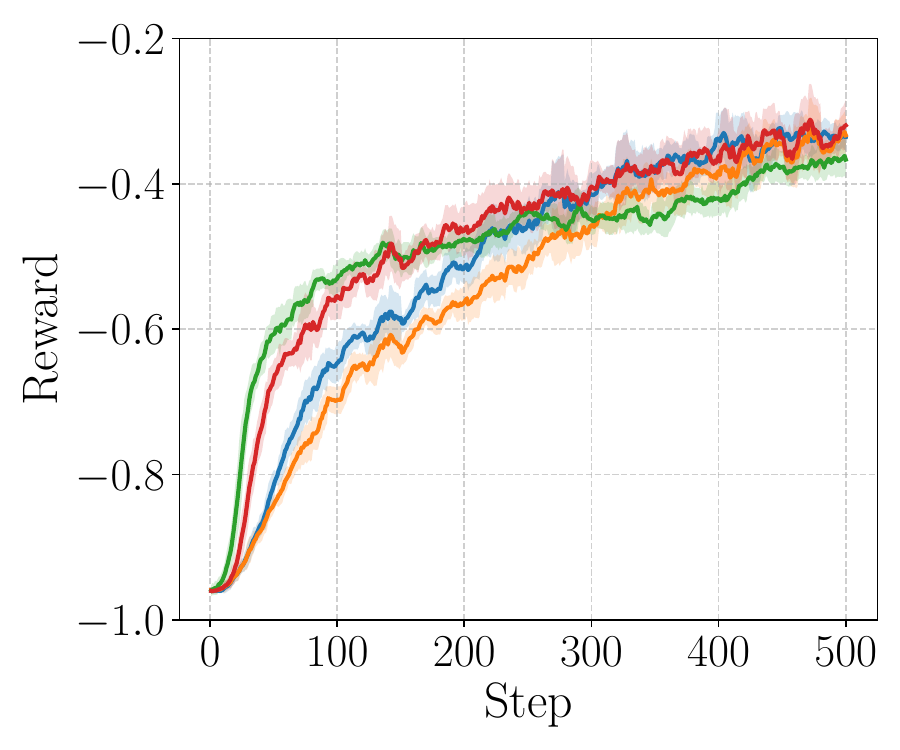}}
\subfloat[Pass@1 on AIME24.]{\centering\includegraphics[trim=0cm 0.5cm 0cm 0cm,width=0.245\textwidth]{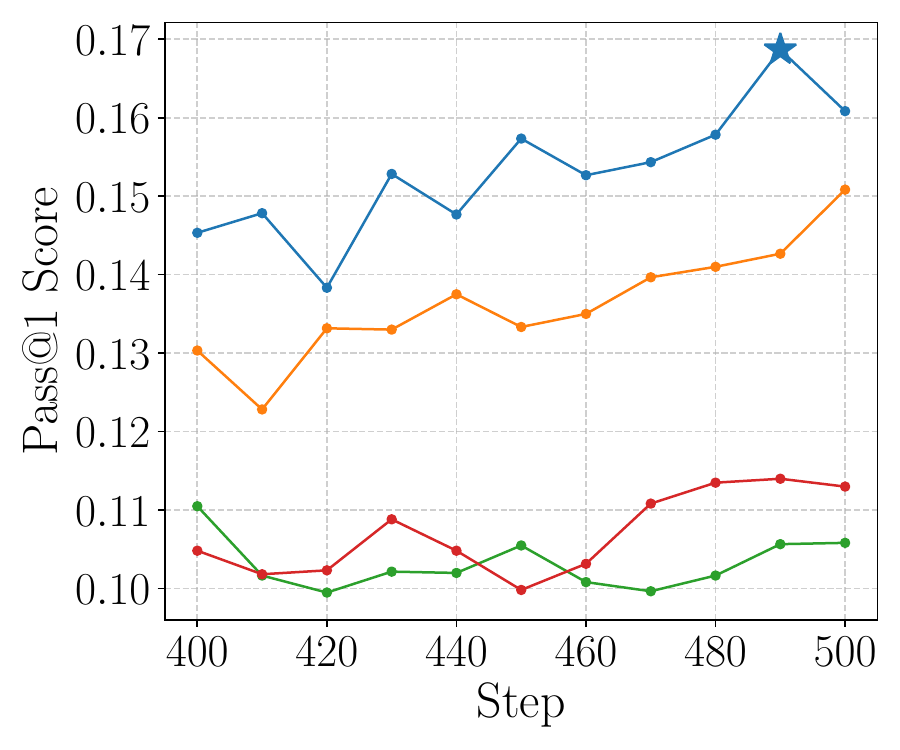}}
\subfloat[Dynamics of \(q_{\mathrm{hi}}\).]{\centering\includegraphics[trim=0cm 0.5cm 0cm 0cm,width=0.245\textwidth]{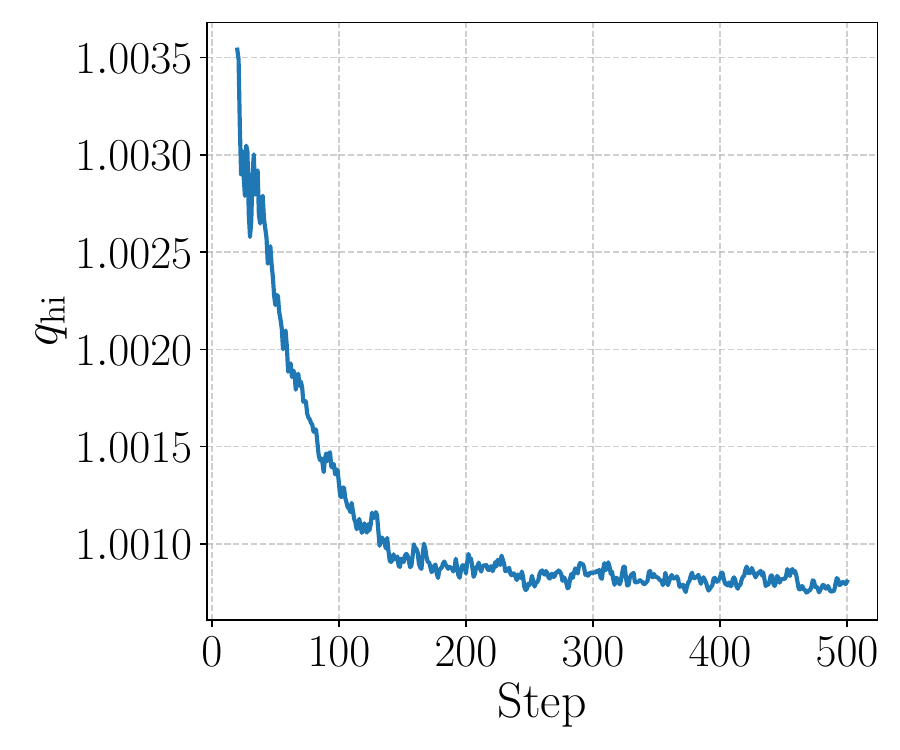}}
\subfloat[Dynamics of \(q_{\mathrm{lo}}\).]{\centering\includegraphics[trim=0cm 0.5cm 0cm 0cm,width=0.245\textwidth]{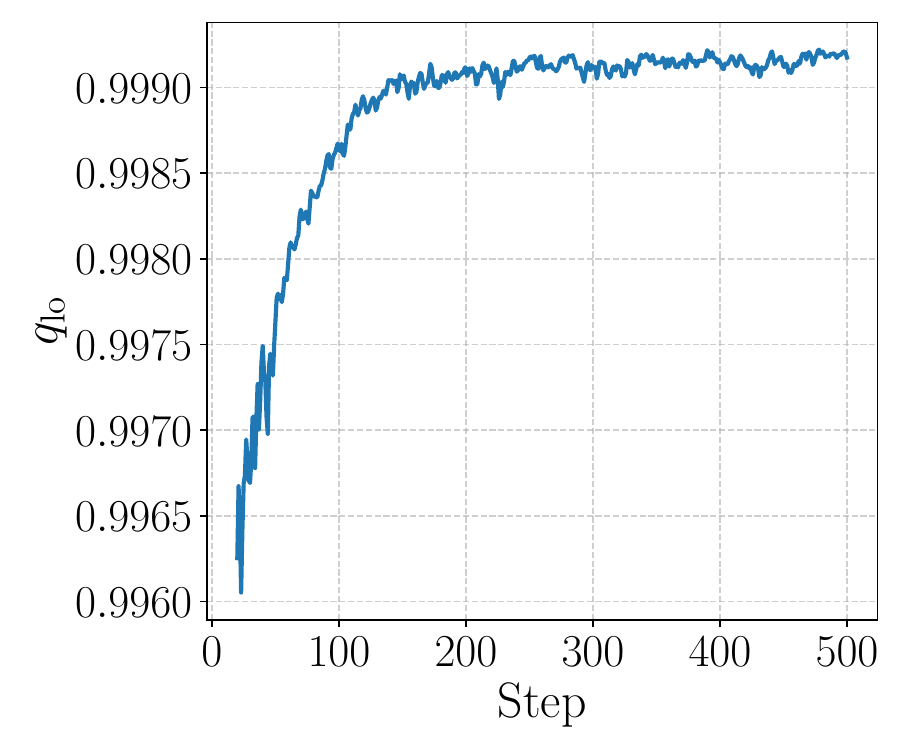}}
\caption{Learning curves and evaluation results on DAPO-Math-17k.}
\label{fig:learning_curves_dapo_math}
\end{figure*}

\paragraph{Sensitivity and computational cost.}
The main experiments use \(\alpha=0.1\) and \(K=2\), selected using preliminary ablations on MATH and GSM8K (Appendix~\ref{app:ablation_studies}). On DAPO-Math-17k, increasing \(\alpha\) from \(0.1\) to \(0.2\) and \(0.4\) reduces average Pass@1 from \(34.11\) to \(33.50\) and \(32.37\), respectively (Table~\ref{tab:hard_alpha_ablation}). For a fixed buffer and proportional allocation between the tails, a smaller \(\alpha\) widens the quantile interval and leaves more scores inside the band. The reported results favor \(\alpha=0.1\) among the tested values. Figure~\ref{fig:dapo_math_efficiency} in Appendix~\ref{app:additional_training_analysis} shows comparable training-step and actor-update times for CSPO and the baselines.

\section{Conclusion}

CSPO adapts sequence-level clipping bounds using quantiles of recent training scores. Its tail bounds account for data reuse and policy drift, and its per-update bounds control the objective and gradient error under the stated moment conditions. In our experiments, CSPO improves over GSPO on mathematical reasoning benchmarks and achieves the highest overall success rate on ALFWorld. The changing endpoints provide a direct view of how the clipping band adapts during training.

\paragraph{Limitations.}
The population tail bounds depend on discrepancies between the CDFs of retained scores and their population mixtures, as well as local CDF drift. Reusing a rollout batch does not yield a sampling rate for these discrepancies from the number of retained entries alone. The gradient bound also assumes a bound on the normalized score gradient that need not hold for a neural network. The results quantify error relative to the GSPO power surrogate; they do not establish a KL constraint or reward improvement.

\pagebreak
\subsection*{Reproducibility statement}
Algorithm~\ref{alg:cspo} specifies score retention and buffer updates; Section~\ref{sec:theory} states the assumptions; and Appendix~\ref{app:theory} provides complete proofs. Appendix~\ref{app:experimental_setup} records the training settings, and Appendix~\ref{app:ablation_studies} reports the choices of \(\alpha\) and \(K\) and their ablations.

\subsection*{AI use statement}
Generative AI tools were used to aid or polish writing.

\bibliography{iclr2027_conference}
\bibliographystyle{iclr2027_conference}

\newpage
\appendix

\section{Related Work}
Related work relevant to CSPO spans three complementary directions: reinforcement learning for language-model post-training, the transition from token-level to sequence-level policy optimization, and conformal prediction and adaptive calibration. We review these strands in turn to position CSPO as a sequence-level policy optimization method that replaces a prespecified clipping band with one calibrated from retained training scores.

\emph{Reinforcement learning for language model post-training.} 
RL-based post-training has a long history in aligning models with human or task-specific preferences. Early preference-based RL showed that complex objectives can be learned from pairwise human comparisons \citep{christiano2017deep}. In language modeling, RLHF pipelines train reward models from human preference data and optimize policies to maximize the learned reward while constraining deviation from a reference model \citep{stiennon2020learning,ouyang2022training}. Constitutional AI reduces direct human labeling by using principle-guided AI feedback for harmlessness training \citep{bai2022constitutional}. In parallel, direct preference optimization methods bypass explicit reward-model RL by deriving a supervised objective from the KL-regularized RLHF optimum \citep{rafailov2023direct}. These methods focus primarily on preference alignment, while our work targets policy optimization for verifiable reasoning rewards.

RLVR has recently become a central approach for improving mathematical and coding reasoning. When rewards are automatically verifiable, the supervision signal can be scaled without dense human annotations. DeepSeekMath introduced GRPO to improve mathematical reasoning while reducing PPO memory overhead \citep{shao2024deepseekmath}. DeepSeek-R1 further demonstrated that reasoning behaviors such as reflection and verification can emerge through large-scale RL \citep{guo2025deepseekr1}. DAPO highlights several practical ingredients for large-scale long-CoT RL, including decoupled clipping and dynamic sampling \citep{yu2025dapo}. Our method is complementary to these systems: CSPO modifies the clipping geometry of the policy objective and can be combined with different reward functions, sampling strategies, and KL regularizers.

\emph{From token-level to sequence-level policy optimization.} 
PPO stabilizes policy-gradient updates by clipping the probability ratio at the action level \citep{schulman2017proximal}. In autoregressive language models, applying this design token-by-token is convenient, but the reward and advantage are often defined for an entire generated response. GRPO keeps this token-level clipping structure while replacing the value model with group-relative sequence rewards \citep{shao2024deepseekmath}. This design is efficient, but it implicitly assigns a single response-level advantage to many token-level ratios. As a result, the objective may over-penalize locally off-policy tokens or distort the contribution of a high-value response.

Sequence-level methods address this mismatch by moving the clipping multiplier to the response level. GSPO constructs a power multiplier from the sequence likelihood ratio and performs sequence-level clipping and optimization \citep{zheng2025gspo}. FSPO further studies the length fairness problem of sequence-level clipping and shows that fixed clipping bands can reweight short and long responses differently; it introduces a \(\sqrt{L}\)-scaled clipping band for the sequence log-ratio \citep{mao2025clip}. CSPO extends this approach by calibrating the clipping band from the recent empirical distribution of normalized sequence scores, allowing the endpoints to change during training.

\emph{Conformal prediction and adaptive calibration.}
Conformal prediction provides distribution-free uncertainty quantification by calibrating prediction sets using held-out nonconformity scores \citep{vovk2005algorithmic,shafer2008tutorial,angelopoulos2023gentle}. Its central guarantee is finite-sample marginal coverage under exchangeability, independent of the internal form of the predictive model. Conformalized quantile regression combines conformal calibration with quantile regression to obtain adaptive prediction intervals under heteroscedasticity \citep{romano2019conformalized}. Adaptive conformal inference extends calibration to online environments where the data distribution may vary over time \citep{gibbs2021adaptive,gibbs2024conformal}.

We use quantile calibration to assign a tail budget to fresh sequence-level scores outside the clipping band. The GSPO power score \(W=\exp(S/L)\) plays the role of the calibration score: values in the tails indicate unusually large policy shifts. By maintaining a rolling training-score buffer, CSPO adapts the clipping interval to the current training regime while targeting a marginal tail budget with explicit CDF discrepancies and local drift.

\section{Theoretical Details}
\label{app:theory}

This appendix states the supplementary corollaries used in the main discussion and verifies all formal results under the same score, sampling unit, thresholds, and conditioning used by the main algorithm.

\subsection{Autoregressive policy and GRPO objective}
\label{app:baseline_setup}
For completeness, this subsection records the autoregressive factorization and token-level reference objective that are not part of CSPO's technical novelty. For \(\vy_i=(y_{i,1},\ldots,y_{i,L_i})\) and \(\vh_{i,\ell}=(\vx_i,\vy_{i,<\ell})\),
\begin{equation}
\pi_{\vtheta}(\vy_i\mid\vx_i)=\prod_{\ell=1}^{L_i}\pi_{\vtheta}(y_{i,\ell}\mid\vh_{i,\ell}).
\label{eq:policy_factorization}
\end{equation}
For \(G\) responses, use the group-normalized \(\widehat A_i\) and token ratio \(r_{i,\ell}(\vtheta)\) defined in Section~\ref{sec:method}. The token-level GRPO reference is
\begin{equation}
J_{\mathrm{tok}}(\vtheta)=\mathbb E\!\left[\frac1G\sum_{i=1}^{G}\frac1{L_i}\sum_{\ell=1}^{L_i}\min\!\left\{r_{i,\ell}(\vtheta)\widehat A_i,\operatorname{clip}(r_{i,\ell}(\vtheta),1-\epsilon,1+\epsilon)\widehat A_i\right\}\right].
\label{eq:token_grpo}
\end{equation}

\subsection{Detailed method positioning}
Table~\ref{tab:method_positioning} separates the object used to construct a clipping rule from the object controlled by its stated analysis.
\begin{table*}[htbp]
\centering
\caption{Comparison with representative clipping and conformal-control methods. ``Controlled object'' denotes the stated construction or guarantee, not an empirical performance claim.}
\label{tab:method_positioning}
\small
\resizebox{\textwidth}{!}{%
\begin{tabular}{c|c|c|c}
\hline
Method & Update element or score & Band/control construction & Primary controlled object \\
\hline
PPO / GRPO & Token ratio & Fixed numerical endpoints & Pessimistic token surrogate \\
GSPO & Response \(W=e^{S/L}\) & Fixed response-level endpoints & Sequence-coherent power surrogate \\
FSPO & Sequence log-ratio & Fixed \(\sqrt L\)-scaled geometry & Length reweighting and gradient direction \\
CTPO \citep{zhang2026ctpo} & Prefix/cumulative ratio & Position-dependent scaling & Prefix correction and position scaling \\
BandPO \citep{li2026bandpo} & Probability ratio & Divergence-derived probability-aware bounds & Projected trust-region geometry \\
BAPO \citep{xi2025bapo} & Ratio contributions & Advantage/entropy-driven asymmetry & Contribution balance and entropy \\
Adaptive Clip PO \citep{wang2026adaptiveclip} & Token-group ratios & Empirical-variance endpoint scaling & Gradient/update heterogeneity \\
Conformal Policy Control \citep{prinster2026cpc} & Policy relative to safe reference & Conformal control-parameter selection & Deployment-time safety loss \\
CSPO & Response \(W=e^{S/L}\) & Training-score quantiles, outward grid, and drift & Fresh-score tail mass and GSPO-surrogate fidelity \\
\hline
\end{tabular}}
\end{table*}

\subsection{Buffer size and local drift}
For a window of length \(K\), write \(r_{t,K}^{\mathcal G}\) for its realized grid discrepancy.
\begin{corollary}
\label{cor:window_tradeoff}
Suppose the conditions of Theorem~\ref{thm:grid_pooled} hold and the preceding \(K\) steps have equal numbers \(m\) of cached score entries. If, for some \(\rho\ge0\),
\[
\|F_{t-r}-F_t\|_\infty\le\rho r,\qquad r=1,\ldots,K,
\]
then, pathwise,
\begin{equation}
p_t^{\mathrm{band}}\le\min\!\left\{1,\alpha+2r_{t,K}^{\mathcal G}+\rho(K+1)\right\}.
\label{eq:window_tradeoff}
\end{equation}
For a finite set \(\mathcal K_t\) of available window lengths satisfying the same drift envelope, the corresponding oracle comparison is
\begin{equation}
K_{\mathrm{oracle}}\in\operatorname*{arg\,min}_{K\in\mathcal K_t}
\left\{2r_{t,K}^{\mathcal G}+\rho(K+1)\right\}.
\label{eq:oracle_window}
\end{equation}
The discrepancy \(r_{t,K}^{\mathcal G}\) need not decrease with \(K\); this oracle comparison is not a data-only window-selection rule.
\end{corollary}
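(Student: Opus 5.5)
The plan is to derive Corollary~\ref{cor:window_tradeoff} directly from Theorem~\ref{thm:grid_pooled}, using $r_{t,K}^{\mathcal G}$ for the grid discrepancy. The only remaining task is to bound the grid drift $D_{t,\mathcal G}$ by $\rho(K+1)/2$. Theorem~\ref{thm:grid_pooled} gives
\[
p_t^{\mathrm{band}}\le\min\{1,\alpha+2r_{t,K}^{\mathcal G}+2D_{t,\mathcal G}\}.
\]
Once $2D_{t,\mathcal G}\le\rho(K+1)$ is established, \eqref{eq:window_tradeoff} follows by monotonicity of $\min\{1,\cdot\}$.

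\textbf{Step 1: block weights.} The window consists of the $K$ preceding steps $\tau=t-r$, $r=1,\ldots,K$, each with $m$ entries. Hence $n_t=Km$ and every block weight is $\lambda_{\tau,t}=1/K$. This gives
\[
\overline F_t=\frac1K\sum_{r=1}^K F_{t-r}.
\]

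\textbf{Step 2: pointwise drift bound.} For each grid point $g$, the triangle inequality gives
\[
|\overline F_t(g)-F_t(g)|\le\frac1K\sum_{r=1}^K|F_{t-r}(g)-F_t(g)|\le\frac1K\sum_{r=1}^K\rho r=\frac{\rho(K+1)}2.
\]

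\textbf{Step 3: left limits.} The same bound is needed at $g^-$, since $D_{t,\mathcal G}$ also involves left limits. I would obtain it by taking limits $w\uparrow g$ in the pointwise inequality $|\overline F_t(w)-F_t(w)|\le\rho(K+1)/2$, which holds for all real $w$ because the sup-norm hypothesis is uniform. Left limits of CDFs exist, so the inequality passes to the limit. This gives $D_{t,\mathcal G}\le D_t\le\rho(K+1)/2$, and substituting into Theorem~\ref{thm:grid_pooled} yields \eqref{eq:window_tradeoff}.

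\textbf{Step 4: oracle window.} For a finite set $\mathcal K_t$ of window lengths, the same argument applies for each $K\in\mathcal K_t$ separately. Each $K$ uses its own buffer and its own discrepancy $r_{t,K}^{\mathcal G}$, under the same drift envelope. Since the bound $\alpha+2r_{t,K}^{\mathcal G}+\rho(K+1)$ holds for every $K\in\mathcal K_t$, a minimizer over this finite set exists, and it attains the smallest certified bound. This minimizer is defined as $K_{\mathrm{oracle}}$ in \eqref{eq:oracle_window}.

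The closing caveat that $r_{t,K}^{\mathcal G}$ need not decrease in $K$ needs no proof: $r_{t,K}^{\mathcal G}$ is a realized path quantity and no monotonicity is claimed. A brief remark suffices, noting that $\rho$ is unobservable, so the argmin is not a data-only selection rule.

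The main obstacle, if any, is the bookkeeping in Step 3 together with the identification of the window with exactly the steps $t-1,\ldots,t-K$ carrying equal weights. One must assume all these blocks are nonempty and retained, so that $\mathcal I_t=\{t-K,\ldots,t-1\}$; I would state this explicitly as part of the hypotheses.
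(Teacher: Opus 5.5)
Your proposal is correct and is essentially the paper's proof: equal block sizes give weights $1/K$, the weighted triangle inequality with the linear envelope gives $D_{t,\mathcal G}\le D_t\le\rho(K+1)/2$, substitution into Theorem~\ref{thm:grid_pooled} gives \eqref{eq:window_tradeoff}, and the oracle window is just the minimizer of the excess term over the finite candidate set. Your explicit hypothesis that $\mathcal I_t=\{t-K,\ldots,t-1\}$ consists of nonempty retained blocks is left implicit in the paper's statement, so stating it is a harmless clarification.
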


\subsection{Secondary transfer, tracking, and path results}
This subsection bounds the raw quantile tail probability, extends the result to length groups, and analyzes quantile tracking and error accumulation across updates. The tail bound allows discrete scores and dependence across training steps. Rounding and anchoring can only reduce the probability of falling outside the band.

\begin{theorem}
\label{thm:drift}
For a step \(t\) with \(m_t,n_t>0\), define
\begin{equation}
\varepsilon_t=\sum_{\tau\in\mathcal I_t}\frac{m_\tau}{n_t}\xi_\tau,\qquad \xi_\tau=\|\widehat F_\tau-F_\tau\|_\infty.
\label{eq:buffer_error}
\end{equation}
Then the first two bounds below hold on every realized training path; the last also uses Assumption~\ref{assump:local_stability}:
\begin{equation}
\begin{aligned}
p_t^{\mathrm{band}}
&:=
\mathbb{P}\!\left(
W_t^\star<a_t
\ \mathrm{or}\
W_t^\star>b_t
\,\middle|\,
\mathcal H_t
\right)\\
&\le
p_t^{q}
:=
\mathbb{P}\!\left(
W_t^\star<q_{\mathrm{lo},t}
\ \mathrm{or}\
W_t^\star>q_{\mathrm{hi},t}
\,\middle|\,
\mathcal H_t
\right)\\
&\le
\alpha+2\varepsilon_t+2D_t
\le
\alpha+2\varepsilon_t+2\Delta_K.
\end{aligned}
\label{eq:drift_control}
\end{equation}
\end{theorem}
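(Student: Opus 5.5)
The proof has three parts: a monotonicity step for the first inequality, a two-endpoint quantile argument on the empirical CDF of the buffer, and a sup-norm transfer from the buffer to the current mixture $F_t$.

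\emph{First inequality.} This is pure monotonicity. Outward rounding gives $\widetilde q_{\mathrm{lo},t}\le q_{\mathrm{lo},t}$ and $\widetilde q_{\mathrm{hi},t}\ge q_{\mathrm{hi},t}$. Anchoring at $1$ then gives $a_t\le q_{\mathrm{lo},t}$ and $b_t\ge q_{\mathrm{hi},t}$. Hence $\{W_t^\star<a_t\}\subseteq\{W_t^\star<q_{\mathrm{lo},t}\}$, and similarly for the upper tail. The conditional probability given $\mathcal H_t$ is monotone, so $p_t^{\mathrm{band}}\le p_t^q$. By \eqref{eq:step_mixture_risk}, we have $p_t^q=F_t(q_{\mathrm{lo},t}^-)+1-F_t(q_{\mathrm{hi},t})$, because $q_{\mathrm{lo},t},q_{\mathrm{hi},t}$ are $\mathcal H_t$-measurable and $W_t^\star$ has conditional CDF $F_t$.

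\emph{Empirical tail budgets.} I bound each strict tail of the empirical CDF separately. The inverse-CDF definition gives the following two facts.
\begin{itemize}
\item For every $w<q_{\mathrm{lo},t}$ we have $\widehat F_{\mathcal B_t}(w)<\alpha_{\mathrm{lo}}$. Taking a left limit yields $\widehat F_{\mathcal B_t}(q_{\mathrm{lo},t}^-)\le\alpha_{\mathrm{lo}}$.
\item Right-continuity gives $\widehat F_{\mathcal B_t}(q_{\mathrm{hi},t})\ge1-\alpha_{\mathrm{hi}}$, since the infimum is attained.
\end{itemize}
Both facts hold regardless of ties or atoms, which is the tie-robustness claimed in the statement.

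\emph{Transfer to $F_t$.} I aim for the sup-norm bound $\|\widehat F_{\mathcal B_t}-F_t\|_\infty\le\varepsilon_t+D_t$. The block structure gives $\widehat F_{\mathcal B_t}=\sum_{\tau\in\mathcal I_t}\lambda_{\tau,t}\widehat F_\tau$ with $\lambda_{\tau,t}=m_\tau/n_t$, since the buffer concatenates blocks with counts $m_\tau$. Likewise $\overline F_t=\sum_{\tau}\lambda_{\tau,t}F_\tau$. The weighted triangle inequality then gives $\|\widehat F_{\mathcal B_t}-\overline F_t\|_\infty\le\sum_\tau\lambda_{\tau,t}\xi_\tau=\varepsilon_t$, and adding $D_t=\|\overline F_t-F_t\|_\infty$ completes the bound. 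A uniform bound on $|G-H|$ also bounds differences of left limits, $|G(w^-)-H(w^-)|$, since these are limits of the pointwise differences. Therefore
\[
F_t(q_{\mathrm{lo},t}^-)\le\alpha_{\mathrm{lo}}+\varepsilon_t+D_t,\qquad 1-F_t(q_{\mathrm{hi},t})\le\alpha_{\mathrm{hi}}+\varepsilon_t+D_t.
\]
Summing the two gives $p_t^q\le\alpha+2\varepsilon_t+2D_t$. Nowhere does this use independence or exchangeability of the entries; the $\xi_\tau$ are realized quantities, so the bound is pathwise.

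\emph{Last inequality.} Under Assumption~\ref{assump:local_stability}, $D_t=\|\sum_\tau\lambda_{\tau,t}(F_\tau-F_t)\|_\infty\le\sum_\tau\lambda_{\tau,t}\Delta_K=\Delta_K$, because the weights sum to one.

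The main obstacle is bookkeeping rather than depth. Two points need care. First, the conditional law of $W_t^\star$ given $\mathcal H_t$ must be exactly $F_t$, while the endpoints are determined by the earlier history $\mathcal H_{t-1}$; this requires $\mathcal H_{t-1}\subseteq\mathcal H_t$, so the endpoints are $\mathcal H_t$-measurable. Second, the strict-versus-nonstrict inequalities at atoms must be handled correctly, which is why the lower tail is taken at $q^-$ and the upper tail at $q$. I would check the left-limit step at the lower quantile first.
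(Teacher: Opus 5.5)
Your proposal is correct and follows essentially the same route as the paper's proof. Both arguments use the same three steps: the weighted block identity $\widehat F_{\mathcal B_t}=\sum_{\tau}\lambda_{\tau,t}\widehat F_\tau$ to get $\|\widehat F_{\mathcal B_t}-F_t\|_\infty\le\varepsilon_t+D_t$, the two inverse-CDF endpoint facts transferred through left limits, and the event inclusion from $a_t\le q_{\mathrm{lo},t}$ and $b_t\ge q_{\mathrm{hi},t}$. The only differences are the order of presentation and your explicit remark on the $\mathcal H_t$-measurability of the endpoints, which the paper leaves implicit.
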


Theorem~\ref{thm:drift} is intentionally one-sided: deterministic quantiles can be conservative when an endpoint has large probability mass. Let \(\omega_t=\sup_w\{F_t(w)-F_t(w^-)\}\) be the largest current atom. Corollary~\ref{cor:drift_atoms} shows that the same path also satisfies
\[
|p_t^q-\alpha|
\le
2\bigl(\varepsilon_t+D_t\bigr)+2\omega_t.
\]
The upper bound allows ties under local CDF drift, while closeness to the target from both sides also depends on endpoint mass.
The marginal theorem averages over the response-length distribution. To determine when it also describes a length subgroup, let \(H_t^\star=h(L_t^\star)\) be a pre-specified length bin with positive conditional probability and define
\[
F_{t,h}(w)
=
\mathbb{P}\!\left(
W_t^\star\le w
\,\middle|\,
\mathcal H_t,H_t^\star=h
\right),
\qquad
\Gamma_{t,h}
=
\sup_w|F_{t,h}(w)-F_t(w)|.
\]
The quantity \(\Gamma_{t,h}\) measures the CDF discrepancy between length group \(h\) and the full mixture; normalization by \(L\) alone does not force it to be small.

\begin{proposition}
\label{prop:length_group_fairness}
Let
\[
p_{t,h}^{\mathrm{band}}
=
\mathbb{P}\!\left(
W_t^\star<a_t
\ \mathrm{or}\
W_t^\star>b_t
\,\middle|\,
\mathcal H_t,H_t^\star=h
\right).
\]
Then, almost surely,
\begin{equation}
|p_{t,h}^{\mathrm{band}}-p_t^{\mathrm{band}}|
\le
2\Gamma_{t,h}.
\label{eq:length_group_transfer}
\end{equation}
\end{proposition}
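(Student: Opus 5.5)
The plan is to write both band risks in terms of CDF values at the two fixed endpoints and compare them one tail at a time. Conditional on \(\mathcal H_t\), the endpoints \(a_t\) and \(b_t\) are determined by the buffer and the grid, so they are \(\mathcal H_t\)-measurable constants. The hypothetical score \(W_t^\star\) is drawn from fresh randomness independent of the history. Hence, conditionally on \(\mathcal H_t\),
\[
p_t^{\mathrm{band}}=F_t(a_t^-)+1-F_t(b_t),\qquad
p_{t,h}^{\mathrm{band}}=F_{t,h}(a_t^-)+1-F_{t,h}(b_t).
\]
The first identity is \eqref{eq:step_mixture_risk} evaluated at \((a_t,b_t)\). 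The second is the same computation carried out under the conditional law given \(H_t^\star=h\), which is well defined because the bin has positive conditional probability.

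Subtracting the two identities and applying the triangle inequality gives
\[
|p_{t,h}^{\mathrm{band}}-p_t^{\mathrm{band}}|
\le|F_{t,h}(a_t^-)-F_t(a_t^-)|+|F_{t,h}(b_t)-F_t(b_t)|.
\]
The second term is at most \(\Gamma_{t,h}\) directly from the definition of the supremum. For the first term, take left limits. Both \(F_{t,h}\) and \(F_t\) are CDFs, so \(F(a^-)=\lim_{w\uparrow a}F(w)\). The bound \(|F_{t,h}(w)-F_t(w)|\le\Gamma_{t,h}\) holds for every \(w<a_t\), so it survives passage to the limit. The first term is therefore also at most \(\Gamma_{t,h}\), and the two together give \(2\Gamma_{t,h}\).

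Two edge cases need separate handling: grid rounding may set \(a_t=0\) or \(b_t=+\infty\), and a warm start is used when the buffer is empty. If \(a_t=0\), then \(W^\star>0\) forces both lower-tail probabilities to vanish. If \(b_t=+\infty\), both upper-tail probabilities vanish. In either case the corresponding term is zero, so the bound only improves.

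The only step that needs care is justifying that \(a_t\) and \(b_t\) may be treated as constants inside the conditional probability given \(\{\mathcal H_t,H_t^\star=h\}\). I will argue this as follows. The endpoints are functions of the buffer \(\mathcal B_t\), which is contained in \(\mathcal H_{t-1}\subseteq\mathcal H_t\). The pair \((\vx_t^\star,\vy_t^\star)\) and the index \(k\) are drawn independently given \(\mathcal H_t\). The events \(\{W_t^\star<a_t\}\) and \(\{W_t^\star>b_t\}\) are then evaluated under a regular conditional distribution with the endpoints frozen. The "almost surely" qualifier in the statement comes from the fact that these are conditional probabilities, so each identity holds only \(\mathcal H_t\)-almost surely.
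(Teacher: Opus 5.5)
Your proposal is correct and matches the paper's proof: both write the two risks as $F(a_t^-)+1-F(b_t)$ under the group-conditional and marginal laws, split the difference by the triangle inequality, and pass the uniform bound $\Gamma_{t,h}$ to the left limit at $a_t$ along an increasing sequence. Your remarks on the sentinel endpoints $a_t=0$, $b_t=+\infty$ and on the $\mathcal H_t$-measurability of $(a_t,b_t)$ spell out what the paper leaves implicit. The case $a_t=0$ is already covered by the left-limit argument, and the finite warm-start band needs no separate treatment.
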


Combining Proposition~\ref{prop:length_group_fairness} with Theorem~\ref{thm:drift} gives the group upper bound
\[
p_{t,h}^{\mathrm{band}}
\le
\alpha+2\bigl(\varepsilon_t+D_t\bigr)+2\Gamma_{t,h}.
\]
Corollary~\ref{cor:length_group_control} also gives the atom-corrected two-sided statement when anchoring is inactive.

These results separate marginal calibration from length-stratified calibration. A small CDF discrepancy \(\Gamma_{t,h}\) supports transfer to group \(h\).
Coverage alone does not quantify how closely the rolling band follows the current quantiles. The next assumption adds the local identifiability needed to convert a CDF error into endpoint error; it is not required by Theorems~\ref{thm:drift} or~\ref{thm:grid_pooled}.

\begin{assumption}
\label{assump:quantile_identifiability}
For \(p\in\{\alpha_{\mathrm{lo}},1-\alpha_{\mathrm{hi}}\}\) and every analyzed round, let \(q_{t,p}^\star=F_t^{-1}(p):=\inf\{w:F_t(w)\ge p\}\). There exist \(\underline f>0\) and \(r_q>0\) such that \(F_t\) is continuous on \([q_{t,p}^\star-r_q,q_{t,p}^\star+r_q]\) and
\[
|F_t(q_{t,p}^\star+u)-p|
\ge
\underline f|u|,
\qquad |u|\le r_q.
\]
Every such neighborhood lies strictly inside the finite grid range. For a predeclared resolution \(h_{\mathcal G}\ge0\), every \(z\) in such a neighborhood has grid points \(g^-(z),g^+(z)\in\mathcal G_M\) satisfying
\[
g^-(z)\le z\le g^+(z),
\qquad
z-g^-(z)\le h_{\mathcal G},
\qquad
g^+(z)-z\le h_{\mathcal G}.
\]
Along each analyzed path, the CDF discrepancies below obey \(r_t^{\mathcal G}+D_t<\underline f r_q\).
\end{assumption}

Assumption~\ref{assump:quantile_identifiability} excludes flat CDF regions around the target quantiles. Under this condition, the following result bounds the cumulative error in tracking the quantiles of the score mixture.

\begin{proposition}
\label{prop:dynamic_tracking}
Suppose Assumptions~\ref{assump:local_stability} and~\ref{assump:quantile_identifiability} hold for rounds \(t=K+1,\ldots,T\), with equal block size \(m\), a fixed horizon \(K\), and the grid endpoints used by Algorithm~\ref{alg:cspo}. Let
\[
v_t=\|F_t-F_{t-1}\|_\infty,
\qquad
V_T=\sum_{t=2}^{T}v_t.
\]
Set \(r_{T,K}^{\mathcal G}=\max_{K<t\le T}r_t^{\mathcal G}\). For either mixture quantile level \(p\), let \(\widetilde q_{t,p}\) be the corresponding outward-rounded empirical endpoint. On every path satisfying the stated conditions,
\begin{equation}
\sum_{t=K+1}^{T}
|\widetilde q_{t,p}-q_{t,p}^\star|
\le
(T-K)h_{\mathcal G}
+
\frac{1}{\underline f}
\left[
(T-K)r_{T,K}^{\mathcal G}
+
\frac{K+1}{2}V_T
\right].
\label{eq:dynamic_quantile_tracking}
\end{equation}
If \(\mathbb E_{F_t}[W]<\infty\) for every analyzed round, then for the pinball risk \(\mathcal L_{t,p}(q)=\mathbb E_{F_t}[\rho_p(W-q)]\), \(\rho_p(u)=u(p-\mathbf 1\{u<0\})\), the cumulative dynamic regret \(\sum_t[\mathcal L_{t,p}(\widetilde q_{t,p})-\mathcal L_{t,p}(q_{t,p}^\star)]\) is bounded by the same right-hand side.

Let \((a_t^\star,b_t^\star)\) be the oracle quantile endpoints anchored at \(1\). Let \(\mathcal J_{\mathrm G,t}=\sum_{k\in\mathcal C_t}\nu_{t,k}\mathcal J_{\mathrm G}(\vtheta_{t,k})\) and \(\mathcal J_{\mathrm C,t}^{a,b}=\sum_{k\in\mathcal C_t}\nu_{t,k}\mathcal J_{\mathrm C}^{a,b}(\vtheta_{t,k})\) be the corresponding collection-weighted population surrogates, all with denominator \(\pi_{\vtheta_{t,0}}\). Define \(\mathfrak D_t(a,b)=\mathcal J_{\mathrm G,t}-\mathcal J_{\mathrm C,t}^{a,b}\), keeping these weights fixed when comparing bands. If, in addition, the finite-endpoint and moment conditions of Proposition~\ref{prop:objective_distortion} hold at every collected update and \(|\widehat A|\le A_{\max}\), then
\begin{equation}
\sum_{t=K+1}^{T}
|\mathfrak D_t(a_t,b_t)-\mathfrak D_t(a_t^\star,b_t^\star)|
\le
2A_{\max}
\left\{
(T-K)h_{\mathcal G}
+
\frac{(T-K)r_{T,K}^{\mathcal G}+(K+1)V_T/2}{\underline f}
\right\}.
\label{eq:dynamic_distortion_tracking}
\end{equation}
\end{proposition}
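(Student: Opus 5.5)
The plan is to handle the quantile-tracking bound \eqref{eq:dynamic_quantile_tracking} round by round and then sum over rounds. The regret and distortion statements then follow by transferring the per-round endpoint error.

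\emph{Per-round endpoint error.} Fix a round \(t\in\{K+1,\ldots,T\}\) and a level \(p\). Write \(\widetilde q_{t,p}\) as the outward rounding of the raw empirical quantile \(q_{t,p}=Q_p(\mathcal B_t)\). The triangle inequality gives
\[
|\widetilde q_{t,p}-q^\star_{t,p}|\le h_{\mathcal G}+|q_{t,p}-q^\star_{t,p}|.
\]
The first term uses the grid-resolution clause of Assumption~\ref{assump:quantile_identifiability}. This requires \(q_{t,p}\) to lie in the identifiability neighbourhood, which is checked below.

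For the raw error, I compare \(\widehat F_{\mathcal B_t}\) with \(F_t\) at the points \(q^\star_{t,p}\pm u\). Set \(\delta_t=r_t^{\mathcal G}+D_t\). Then \(|\widehat F_{\mathcal B_t}-F_t|\le\delta_t\) at these points, provided they are grid points or can be sandwiched between grid points; this is the subtle step discussed below. By Assumption~\ref{assump:quantile_identifiability}, for \(u>\delta_t/\underline f\) (and \(u\le r_q\)) we have \(F_t(q^\star+u)\ge p+\underline f u>p+\delta_t\). Hence \(\widehat F_{\mathcal B_t}(q^\star+u)>p\), which gives \(q_{t,p}\le q^\star+u\). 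Symmetrically, \(F_t(q^\star-u)<p-\delta_t\) forces \(q_{t,p}>q^\star-u\). The condition \(\delta_t<\underline f r_q\) ensures that such \(u\) exist inside the neighbourhood. This yields
\[
|q_{t,p}-q^\star_{t,p}|\le \frac{r_t^{\mathcal G}+D_t}{\underline f},
\]
and also places \(q_{t,p}\) in the neighbourhood.

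\emph{Summing and bounding drift.} With equal block sizes \(m\) and \(K\) retained blocks, \(\lambda_{\tau,t}=1/K\). Then
\[
D_t\le\frac1K\sum_{r=1}^K\|F_{t-r}-F_t\|_\infty\le\frac1K\sum_{r=1}^K\sum_{s=t-r+1}^{t}v_s .
\]
Summing over \(t\), each \(v_s\) is counted at most \(\frac1K\sum_{r=1}^K r=\frac{K+1}2\) times. Hence \(\sum_t D_t\le\frac{K+1}2V_T\). Bounding \(r_t^{\mathcal G}\le r^{\mathcal G}_{T,K}\) gives \eqref{eq:dynamic_quantile_tracking}.

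\emph{Pinball regret.} The map \(q\mapsto\mathcal L_{t,p}(q)\) is Lipschitz with constant \(\max\{p,1-p\}\le1\), because its a.e.\ derivative is \(F_t(q)-p\). The regret per round is therefore at most \(|\widetilde q_{t,p}-q^\star_{t,p}|\), and the same bound applies. Finiteness of \(\mathbb E_{F_t}[W]\) ensures \(\mathcal L_{t,p}\) is finite.

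\emph{Distortion tracking.} From the exact formula \eqref{eq:exact_objective_distortion}, \(\mathfrak D_t(a,b)\) equals a collection-weighted average of \(\mathbb E[\widehat A_+(W-b)_++\widehat A_-(a-W)_+]\). Since \(|(W-b)_+-(W-b')_+|\le|b-b'|\) and \(0\le\widehat A_\pm\le A_{\max}\), we get
\[
|\mathfrak D_t(a,b)-\mathfrak D_t(a^\star,b^\star)|\le A_{\max}\bigl(|a-a^\star|+|b-b^\star|\bigr).
\]
Anchoring at \(1\), i.e.\ \(x\mapsto\min\{x,1\}\) and \(x\mapsto\max\{x,1\}\), is \(1\)-Lipschitz. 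So \(|a_t-a_t^\star|\le|\widetilde q_{t,\alpha_{\mathrm{lo}}}-q^\star_{t,\alpha_{\mathrm{lo}}}|\), and similarly for \(b\). Applying the tracking bound at both levels gives the factor \(2A_{\max}\) in \eqref{eq:dynamic_distortion_tracking}.

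\emph{Main obstacle.} The hardest step is justifying \(|\widehat F_{\mathcal B_t}-F_t|\le r_t^{\mathcal G}+D_t\) at the points used in the quantile argument. The quantity \(r_t^{\mathcal G}\) controls the discrepancy only at grid points and their left limits, not at \(q^\star\pm u\). I would redo the inversion at grid points. Take \(g^+=g^+(q^\star+u)\) and \(g^-=g^-(q^\star-u)\). By monotonicity, \(F_t(g^+)\ge F_t(q^\star+u)\) and \(F_t(g^-)\le F_t(q^\star-u)\). Comparing \(\widehat F_{\mathcal B_t}\) with \(F_t\) at \(g^\pm\), through \(\overline F_t\) and with \(D_t\) in place of \(D_{t,\mathcal G}\), localizes \(q_{t,p}\) between grid points within \(h_{\mathcal G}\) of \(q^\star\pm\delta_t/\underline f\). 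Outward rounding then lands on one of these grid points. The \(h_{\mathcal G}\) term may thus absorb both the rounding and the sandwiching, but checking that only a single \(h_{\mathcal G}\) appears per round needs care.
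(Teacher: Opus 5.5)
Your proof follows the paper's. The drift summation giving \(\tfrac{K+1}{2}V_T\), the pinball Lipschitz step, and the hinge-plus-anchoring transfer to \(\mathfrak D_t\) all match. The grid bracketing you propose under \emph{Main obstacle} is exactly how the paper handles the per-round error.

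The one fix is to discard your first per-round paragraph rather than patch it. The raw-quantile bound \(|q_{t,p}-q^\star_{t,p}|\le (r_t^{\mathcal G}+D_t)/\underline f\) is not available when the discrepancy is controlled only on \(\mathcal G_M\); it would need \(\varepsilon_t\) in place of \(r_t^{\mathcal G}\). Moreover, the triangle inequality \(|\widetilde q_{t,p}-q^\star_{t,p}|\le h_{\mathcal G}+|q_{t,p}-q^\star_{t,p}|\) is what produces the second \(h_{\mathcal G}\).

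Instead, take \(z_\pm=q^\star_{t,p}\pm u\) with \(\eta_t/\underline f<u<r_q\), where \(\eta_t=r_t^{\mathcal G}+D_t\). The bracket inequalities \(\widehat F_{\mathcal B_t}(g^-(z_-))<p<\widehat F_{\mathcal B_t}(g^+(z_+))\) give \(g^-(z_-)<q_{t,p}\le g^+(z_+)\). Since both brackets are themselves grid points, the floor of \(q_{t,p}\) is at least \(g^-(z_-)\) and its ceiling is at most \(g^+(z_+)\). So either outward-rounded endpoint lies in \([g^-(z_-),g^+(z_+)]\subseteq[q^\star_{t,p}-u-h_{\mathcal G},\,q^\star_{t,p}+u+h_{\mathcal G}]\). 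It need not coincide with one of the two brackets, as you wrote. Letting \(u\downarrow\eta_t/\underline f\) then gives the per-round bound with a single \(h_{\mathcal G}\).
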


The dynamic regret in Proposition~\ref{prop:dynamic_tracking} establishes that CSPO follows its chosen quantile target under drift; it does not claim that this target maximizes environment reward. In particular, an unbounded band trivially has zero clipping distortion but abandons the intended regularization budget. The empirical evaluation must therefore report both calibration fidelity and task performance.

The preceding results operate across outer rounds. To state what tail control would imply within one rollout step, we finally compare two exact-gradient paths for one fixed old policy, advantage law, and surrogate. In this path comparison, the fixed outer-step index is suppressed. This result also identifies the missing condition: a bound at one parameter value does not certify every inner iterate.

\begin{proposition}
\label{prop:path_deviation}
Let
\[
\vtheta_{k+1}^{\mathrm G}
=\vtheta_k^{\mathrm G}+\gamma_k g_{\mathrm G}(\vtheta_k^{\mathrm G}),
\qquad
\vtheta_{k+1}^{\mathrm C}
=\vtheta_k^{\mathrm C}+\gamma_k g_{\mathrm C}(\vtheta_k^{\mathrm C}),
\qquad
\vtheta_0^{\mathrm G}=\vtheta_0^{\mathrm C}.
\]
Suppose \(\gamma_k\ge0\), \(B_k\ge0\), \(g_{\mathrm G}\) is \(L_g\)-Lipschitz on a set containing both paths, and \(\|g_{\mathrm C}(\vtheta_k^{\mathrm C})-g_{\mathrm G}(\vtheta_k^{\mathrm C})\|\le B_k\). Then, for every \(E\ge1\),
\begin{equation}
\|\vtheta_E^{\mathrm C}-\vtheta_E^{\mathrm G}\|
\le
\sum_{j=0}^{E-1}
\gamma_jB_j
\prod_{\ell=j+1}^{E-1}(1+\gamma_\ell L_g).
\label{eq:path_deviation}
\end{equation}
If Proposition~\ref{prop:gradient_fidelity} is valid at every CSPO iterate and \(p_k\) is the fresh-score risk at \(\vtheta_k^{\mathrm C}\) relative to the same fixed old policy, band, and advantage law, one may take \(B_k=A_{\max}M_\psi\sqrt{(1+2/L_{\min})p_k}\). This substitution requires the absolute-continuity, length, advantage, and score-gradient conditions at every iterate, together with a simultaneous bound on the risks \((p_k)_k\) along the path. The statement concerns deterministic exact-gradient paths, not an SGD trajectory.
\end{proposition}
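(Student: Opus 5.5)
We prove the one-step estimate by induction on \(E\), comparing the two recursions directly. Set \(e_k=\|\vtheta_k^{\mathrm C}-\vtheta_k^{\mathrm G}\|\), so that \(e_0=0\). Subtracting the two update rules gives
\[
\vtheta_{k+1}^{\mathrm C}-\vtheta_{k+1}^{\mathrm G}
=\bigl(\vtheta_k^{\mathrm C}-\vtheta_k^{\mathrm G}\bigr)
+\gamma_k\bigl[g_{\mathrm C}(\vtheta_k^{\mathrm C})-g_{\mathrm G}(\vtheta_k^{\mathrm C})\bigr]
+\gamma_k\bigl[g_{\mathrm G}(\vtheta_k^{\mathrm C})-g_{\mathrm G}(\vtheta_k^{\mathrm G})\bigr].
\]
The point of this split is to insert \(g_{\mathrm G}(\vtheta_k^{\mathrm C})\). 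The first bracket is then controlled by the hypothesis \(B_k\), evaluated at the CSPO iterate, which is where the fidelity bound is available. The second bracket is controlled by \(L_g\)-Lipschitzness, which is legitimate because both iterates lie in the set on which Lipschitzness is assumed. Using the triangle inequality and \(\gamma_k\ge0\), we obtain the linear recursion
\[
e_{k+1}\le(1+\gamma_kL_g)\,e_k+\gamma_kB_k .
\]

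Next we unroll this recursion by a discrete Grönwall argument, proved by induction on \(E\). The claim is \(e_E\le\sum_{j=0}^{E-1}\gamma_jB_j\prod_{\ell=j+1}^{E-1}(1+\gamma_\ell L_g)\), with the empty product equal to \(1\).

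\emph{Base case.} For \(E=1\), the recursion with \(e_0=0\) gives \(e_1\le\gamma_0B_0\).

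\emph{Inductive step.} Assume the bound for \(E\) and multiply it by \(1+\gamma_EL_g\ge1\). This multiplies each existing product by one more factor, extending it up to index \(E\). Adding \(\gamma_EB_E\) then supplies the new \(j=E\) term, whose product is empty. This is exactly the bound for \(E+1\), which establishes \eqref{eq:path_deviation}.

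The substitution clause follows from applying Proposition~\ref{prop:gradient_fidelity} at each \(\vtheta=\vtheta_k^{\mathrm C}\). The old policy, band and advantage law are held fixed, and \(p_{\vtheta}\) is replaced by \(p_k\). This yields \(\|g_{\mathrm C}(\vtheta_k^{\mathrm C})-g_{\mathrm G}(\vtheta_k^{\mathrm C})\|\le A_{\max}M_\psi\sqrt{(1+2/L_{\min})p_k}\), which is a valid choice of \(B_k\). The work here is mostly bookkeeping, and it is the main place where care is needed. The proposition's hypotheses (absolute continuity, \(L\ge L_{\min}\ge2\), \(|\widehat A|\le A_{\max}\), \(\|\psi\|\le M_\psi\), and the endpoint derivative convention) must hold at every iterate rather than only at \(\vtheta_0\). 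Likewise, each \(p_k\) must be the fresh-score risk at that specific iterate; a single tail bound at one parameter does not certify the others.

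Finally, the argument is pathwise and deterministic. No expectation over minibatch noise enters, and this is why the statement is restricted to exact-gradient paths and does not cover SGD.
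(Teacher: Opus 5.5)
Your proposal is correct and follows the paper's proof. You add and subtract \(g_{\mathrm G}(\vtheta_k^{\mathrm C})\) to get \(e_{k+1}\le(1+\gamma_kL_g)e_k+\gamma_kB_k\), unroll it from \(e_0=0\) (your induction spells out the ``iterating'' step the paper leaves implicit), and obtain \(B_k\) by applying Proposition~\ref{prop:gradient_fidelity} at each iterate, exactly as the paper does.
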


\subsection{Supplementary calibration and optimization corollaries}

The main theorem gives the tie-robust upper guarantee used by the operational algorithm. To characterize when the raw quantile interval is also close to the nominal tail mass from below, the following corollary makes endpoint atoms explicit.

\begin{corollary}
\label{cor:drift_atoms}
Under the conditions of Theorem~\ref{thm:drift}, define
\begin{equation}
\omega_t
=
\sup_{w\in\mathbb{R}}
\{F_t(w)-F_t(w^-)\},
\qquad
F_t(w^-)=\lim_{u\uparrow w}F_t(u).
\label{eq:max_atom}
\end{equation}
On the same realized path,
\begin{equation}
\alpha
-
2\bigl(\varepsilon_t+D_t\bigr)
-
2\omega_t
\le
p_t^{q}
\le
\alpha
+
2\bigl(\varepsilon_t+D_t\bigr).
\label{eq:drift_control_atoms}
\end{equation}
Consequently,
\begin{equation}
|p_t^{q}-\alpha|
\le
2\bigl(\varepsilon_t+D_t\bigr)+2\omega_t.
\label{eq:drift_absolute_atoms}
\end{equation}
\end{corollary}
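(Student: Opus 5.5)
The upper inequality in \eqref{eq:drift_control_atoms} is already the second line of Theorem~\ref{thm:drift}, so the work is the lower bound; \eqref{eq:drift_absolute_atoms} then follows by combining the two sides. I will write \(\widehat F=\widehat F_{\mathcal B_t}\) and \(\eta_t=\varepsilon_t+D_t\). The one uniform estimate I need is
\[
\|\widehat F-F_t\|_\infty\le\eta_t .
\]
To get it, note that \(\widehat F=\sum_\tau\lambda_{\tau,t}\widehat F_\tau\) and \(\overline F_t=\sum_\tau\lambda_{\tau,t}F_\tau\). The weighted triangle inequality then gives \(\|\widehat F-\overline F_t\|_\infty\le\varepsilon_t\), and \(D_t\) bounds the remaining gap to \(F_t\). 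The same estimate holds for left limits, since it is a sup-norm bound.

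I write the tail probability as
\[
p_t^q=F_t(q_{\mathrm{lo},t}^-)+1-F_t(q_{\mathrm{hi},t}),
\]
and bound each tail from below separately. For the lower tail, use \(F_t(q^-)=F_t(q)-[F_t(q)-F_t(q^-)]\ge F_t(q)-\omega_t\). Then \(F_t(q_{\mathrm{lo},t})\ge\widehat F(q_{\mathrm{lo},t})-\eta_t\). By the definition of the generalized-inverse quantile, \(\widehat F(q_{\mathrm{lo},t})\ge\alpha_{\mathrm{lo}}\). This gives
\[
F_t(q_{\mathrm{lo},t}^-)\ge\alpha_{\mathrm{lo}}-\eta_t-\omega_t .
\]
For the upper tail, use \(1-F_t(q)=1-F_t(q^-)-[F_t(q)-F_t(q^-)]\ge 1-F_t(q^-)-\omega_t\). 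Then \(F_t(q_{\mathrm{hi},t}^-)\le\widehat F(q_{\mathrm{hi},t}^-)+\eta_t\). By minimality of the quantile, \(\widehat F(w)<1-\alpha_{\mathrm{hi}}\) for every \(w<q_{\mathrm{hi},t}\), so \(\widehat F(q_{\mathrm{hi},t}^-)\le1-\alpha_{\mathrm{hi}}\). This gives
\[
1-F_t(q_{\mathrm{hi},t})\ge\alpha_{\mathrm{hi}}-\eta_t-\omega_t .
\]
Adding the two tails gives \(p_t^q\ge\alpha-2\eta_t-2\omega_t\), which is the claimed lower bound.

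The step I expect to need care is where these two tails might overlap. That happens if \(q_{\mathrm{lo},t}>q_{\mathrm{hi},t}\), which cannot occur: \(\alpha_{\mathrm{lo}}<1-\alpha_{\mathrm{hi}}\) because \(\alpha<1\), and the empirical quantile is monotone in its level, so \(q_{\mathrm{lo},t}\le q_{\mathrm{hi},t}\). With the quantiles ordered, the events \(\{W<q_{\mathrm{lo},t}\}\) and \(\{W>q_{\mathrm{hi},t}\}\) are disjoint, and adding their probabilities is exact. A second point to handle correctly is the pairing of strict and non-strict inequalities at the two endpoints: the atom correction \(\omega_t\) enters once per endpoint, which accounts for the factor \(2\omega_t\). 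Finally, the lower bound can be negative, but it is still a valid inequality, so no truncation is needed.
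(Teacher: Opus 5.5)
Your proposal is correct and follows the paper's proof essentially step for step. Both arguments use the uniform bound $\|\widehat F_{\mathcal B_t}-F_t\|_\infty\le\varepsilon_t+D_t$, the generalized-inverse facts $\widehat F_{\mathcal B_t}(q_{\mathrm{lo},t})\ge\alpha_{\mathrm{lo}}$ and $\widehat F_{\mathcal B_t}(q_{\mathrm{hi},t}^-)\le 1-\alpha_{\mathrm{hi}}$, one atom correction $\omega_t$ per endpoint, and the sum of the disjoint strict tails, with the upper bound inherited from Theorem~\ref{thm:drift} (you apply the atom correction before the CDF comparison rather than after, which is immaterial).
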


The length-transfer proposition isolates the discrepancy between a group CDF and the marginal CDF. Combining it with the calibration theorem yields the following operational group guarantee.

\begin{corollary}
\label{cor:length_group_control}
Under the conditions of Theorem~\ref{thm:drift}, every length group \(h\) with positive conditional probability satisfies the following pathwise bound:
\begin{equation}
p_{t,h}^{\mathrm{band}}
\le
\alpha
+
2\bigl(\varepsilon_t+D_t\bigr)
+
2\Gamma_{t,h}.
\label{eq:length_group_upper}
\end{equation}
If rounding and anchoring leave the raw interval unchanged, so that \((a_t,b_t)=(q_{\mathrm{lo},t},q_{\mathrm{hi},t})\), then
\begin{equation}
|p_{t,h}^{\mathrm{band}}-\alpha|
\le
2\bigl(\varepsilon_t+D_t\bigr)
+
2\omega_t
+
2\Gamma_{t,h}.
\label{eq:length_group_absolute}
\end{equation}
\end{corollary}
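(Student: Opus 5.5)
The plan is to combine Proposition~\ref{prop:length_group_fairness} with Theorem~\ref{thm:drift} for the upper bound \eqref{eq:length_group_upper}, and with Corollary~\ref{cor:drift_atoms} for the two-sided bound \eqref{eq:length_group_absolute}. Both steps rely only on the triangle inequality applied to band risks, since each risk is a functional of a CDF evaluated at two endpoints.

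\emph{Upper bound.} Proposition~\ref{prop:length_group_fairness} gives, almost surely,
\[
p_{t,h}^{\mathrm{band}}\le p_t^{\mathrm{band}}+2\Gamma_{t,h}.
\]
The reason is that
\[
p_{t,h}^{\mathrm{band}}=F_{t,h}(a_t^-)+1-F_{t,h}(b_t),
\]
and each of the two CDF evaluations differs from its $F_t$ counterpart by at most $\Gamma_{t,h}$. For the left limit this follows by taking limits in $\sup_w|F_{t,h}(w)-F_t(w)|$. I will then insert the pathwise chain $p_t^{\mathrm{band}}\le p_t^q\le\alpha+2\varepsilon_t+2D_t$ from \eqref{eq:drift_control}, which yields \eqref{eq:length_group_upper} on every realized path. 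No assumption beyond those of Theorem~\ref{thm:drift} is needed, because $\Gamma_{t,h}$ is kept explicit.

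\emph{Two-sided bound.} When rounding and anchoring are inactive, $(a_t,b_t)=(q_{\mathrm{lo},t},q_{\mathrm{hi},t})$, and the band event coincides with the raw-quantile event, so $p_t^{\mathrm{band}}=p_t^q$. I will write
\[
|p_{t,h}^{\mathrm{band}}-\alpha|\le|p_{t,h}^{\mathrm{band}}-p_t^{\mathrm{band}}|+|p_t^q-\alpha|.
\]
The first term is at most $2\Gamma_{t,h}$ by \eqref{eq:length_group_transfer}. The second is at most $2(\varepsilon_t+D_t)+2\omega_t$ by \eqref{eq:drift_absolute_atoms}. Adding the two gives \eqref{eq:length_group_absolute}.

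\emph{Main obstacle.} The delicate point is conditioning on the group. Proposition~\ref{prop:length_group_fairness} needs $H_t^\star$ to have positive conditional probability given $\mathcal H_t$, and it needs $(a_t,b_t)$ to be $\mathcal H_t$-measurable, so that the endpoints are fixed when $F_{t,h}$ is evaluated. Both hold here: the bin $h$ is pre-specified, and the endpoints are computed from the buffer before step $t$. I will state this explicitly so that the almost-sure statement of the proposition and the pathwise statements of the theorem can be combined on the same path. I will also note that the identification $p_t^{\mathrm{band}}=p_t^q$ uses only equality of endpoints, not any continuity of $F_t$.
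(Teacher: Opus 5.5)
Your proposal is correct and matches the paper's proof. The upper bound chains Proposition~\ref{prop:length_group_fairness} with the pathwise part of Theorem~\ref{thm:drift}, and the two-sided bound uses $p_t^{\mathrm{band}}=p_t^q$ together with the triangle inequality and Corollary~\ref{cor:drift_atoms}; your remark on the $\mathcal H_t$-measurability of the endpoints only makes explicit what the paper leaves implicit.
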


The gradient-fidelity proposition gives a positive inner product when the unclipped gradient dominates its calibrated tail error. The following auxiliary statement applies when its gradient estimator is conditionally unbiased and independent across prompt groups; these properties are not inferred for a reused-batch update at \(k>0\).

\begin{corollary}
\label{cor:one_step_power_improvement}
Let \(\mathcal J\) be a sigma-field fixing the parameter pair and band before the gradient-estimator randomness. Assume \(L_{\mathrm s}>0\) and that \(\mathcal{J}_{\mathrm{G}}\) is \(L_{\mathrm s}\)-smooth on a convex set containing \(\vtheta\) and \(\vtheta+\gamma\widehat g_{\mathrm{C}}\) almost surely for every step size considered. Let
\[
C_{\mathrm{grad}}
=
A_{\max}^{2}M_{\psi}^{2}
\left(1+\frac{2}{L_{\min}}\right).
\]
For prompt group \(j\), let \(X_{j,i}\) denote the per-response selected direction \(X_{\mathrm{C}}\) from Proposition~\ref{prop:gradient_fidelity} and define
\[
Z_j=\frac{1}{G}\sum_{i=1}^{G}X_{j,i}.
\]
Let \(Z_1,\ldots,Z_N\) be conditionally independent across prompt groups given \(\mathcal J\), with \(\mathbb{E}[Z_j\mid\mathcal J]=g_{\mathrm{C}}\), and set \(\widehat g_{\mathrm{C}}=N^{-1}\sum_{j=1}^{N}Z_j\). Then
\[
\mathbb{E}[
\|\widehat g_{\mathrm{C}}-g_{\mathrm{C}}\|^2
\mid\mathcal J]
\le
\frac{C_{\mathrm{grad}}}{N}.
\]
For \(\vtheta^{+}=\vtheta+\gamma\widehat g_{\mathrm{C}}\),
\begin{align}
\mathbb{E}\!\left[
\mathcal{J}_{\mathrm{G}}(\vtheta^{+})
-
\mathcal{J}_{\mathrm{G}}(\vtheta)
\,\middle|\,
\mathcal J
\right]
\ge{}&
\gamma\|g_{\mathrm{G}}\|
(\|g_{\mathrm{G}}\|-B_{\vtheta})
-
\frac{L_{\mathrm s}\gamma^2}{2}
\left[
(\|g_{\mathrm{G}}\|+B_{\vtheta})^2
+
\frac{C_{\mathrm{grad}}}{N}
\right].
\label{eq:one_step_power_improvement}
\end{align}
If \(B_{\vtheta}<\|g_{\mathrm{G}}\|\), the lower bound is positive for every
\begin{equation}
0<\gamma<
\frac{
2\|g_{\mathrm{G}}\|
(\|g_{\mathrm{G}}\|-B_{\vtheta})
}{
L_{\mathrm s}[
(\|g_{\mathrm{G}}\|+B_{\vtheta})^2
+C_{\mathrm{grad}}/N
]
}.
\label{eq:one_step_stepsize}
\end{equation}
\end{corollary}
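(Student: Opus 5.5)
The statement is Corollary~\ref{cor:one_step_power_improvement}, and the proof has two parts: a variance bound for the averaged estimator, then a standard smoothness ascent inequality. Proposition~\ref{prop:gradient_fidelity} supplies both the second moment and the bias bound \(\|g_{\mathrm C}-g_{\mathrm G}\|\le B_{\vtheta}\).

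\emph{Variance bound.} Conditionally on \(\mathcal J\), the centered terms \(Z_j-g_{\mathrm C}\) are independent with mean zero, so the cross terms vanish:
\[
\mathbb E[\|\widehat g_{\mathrm C}-g_{\mathrm C}\|^2\mid\mathcal J]
=N^{-2}\sum_j\mathbb E[\|Z_j-g_{\mathrm C}\|^2\mid\mathcal J].
\]
Each summand satisfies \(\mathbb E[\|Z_j-g_{\mathrm C}\|^2\mid\mathcal J]\le \mathbb E[\|Z_j\|^2\mid\mathcal J]\). By Jensen (convexity of \(\|\cdot\|^2\)),
\[
\|Z_j\|^2\le G^{-1}\sum_i\|X_{j,i}\|^2 .
\]
Each \(X_{j,i}\) has marginal law equal to \(X_{\mathrm C}\) on the extended space: the response index \(U\) is uniform, so averaging over \(i\) within a group reproduces \(\mathbb E[\|X_{\mathrm C}\|^2]\). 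Bound \eqref{eq:clipped_gradient_second_moment} then gives \(\mathbb E[\|X_{\mathrm C}\|^2]\le C_{\mathrm{grad}}\). Summing yields \(C_{\mathrm{grad}}/N\).

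\emph{Ascent inequality.} \(L_{\mathrm s}\)-smoothness on the convex set containing the segment gives
\[
\mathcal J_{\mathrm G}(\vtheta^+)-\mathcal J_{\mathrm G}(\vtheta)\ge \gamma\langle g_{\mathrm G},\widehat g_{\mathrm C}\rangle-\tfrac{L_{\mathrm s}\gamma^2}{2}\|\widehat g_{\mathrm C}\|^2 .
\]
Take conditional expectations. Unbiasedness turns the linear term into \(\gamma\langle g_{\mathrm G},g_{\mathrm C}\rangle\). The bias bound then gives
\[
\langle g_{\mathrm G},g_{\mathrm C}\rangle=\|g_{\mathrm G}\|^2+\langle g_{\mathrm G},g_{\mathrm C}-g_{\mathrm G}\rangle\ge\|g_{\mathrm G}\|(\|g_{\mathrm G}\|-B_{\vtheta})
\]
by Cauchy--Schwarz, which is the inner-product bound already noted after Proposition~\ref{prop:gradient_fidelity}.

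For the quadratic term, use the bias--variance decomposition
\[
\mathbb E[\|\widehat g_{\mathrm C}\|^2\mid\mathcal J]=\|g_{\mathrm C}\|^2+\mathbb E[\|\widehat g_{\mathrm C}-g_{\mathrm C}\|^2\mid\mathcal J].
\]
The triangle inequality gives \(\|g_{\mathrm C}\|\le\|g_{\mathrm G}\|+B_{\vtheta}\), and the variance part is at most \(C_{\mathrm{grad}}/N\) from the first step. Combining these yields \eqref{eq:one_step_power_improvement}.

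\emph{Step-size range.} The lower bound has the form \(\gamma c_1-\gamma^2 c_2\) with \(c_1=\|g_{\mathrm G}\|(\|g_{\mathrm G}\|-B_{\vtheta})>0\) when \(B_{\vtheta}<\|g_{\mathrm G}\|\), and
\[
c_2=\tfrac{L_{\mathrm s}}{2}\bigl[(\|g_{\mathrm G}\|+B_{\vtheta})^2+C_{\mathrm{grad}}/N\bigr]>0 .
\]
It is therefore positive exactly for \(0<\gamma<c_1/c_2\), which is \eqref{eq:one_step_stepsize}.

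\emph{Main obstacle.} The step needing care is the measure-theoretic bookkeeping rather than any inequality. One must check three things:
\begin{itemize}
\item \(g_{\mathrm G}\), \(g_{\mathrm C}\) and \(B_{\vtheta}\) are \(\mathcal J\)-measurable constants, since \(\mathcal J\) fixes the parameter pair and the band.
\item Proposition~\ref{prop:gradient_fidelity} applies conditionally on \(\mathcal J\), with the endpoint derivative convention.
\item The smoothness inequality holds almost surely, because the random point \(\vtheta^+\) lies in the convex set by assumption.
\end{itemize}
Integrability of \(\mathcal J_{\mathrm G}(\vtheta^+)\) follows from the finite second moment of \(\widehat g_{\mathrm C}\) together with the quadratic lower and upper bounds from smoothness.
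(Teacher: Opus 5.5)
Your proof is correct and follows the paper's argument essentially step for step. Jensen within each group and conditional independence across groups give the $C_{\mathrm{grad}}/N$ variance bound; the $L_{\mathrm s}$-smoothness lower bound, unbiasedness, the inner-product and triangle-inequality consequences of Proposition~\ref{prop:gradient_fidelity}, and the bias--variance decomposition then give the ascent inequality and step-size range. Your use of the uniform index $U$ to identify $\mathbb{E}[G^{-1}\sum_i\|X_{j,i}\|^2\mid\mathcal J]$ with $\mathbb{E}[\|X_{\mathrm C}\|^2]$ makes explicit a step the paper takes for granted when it says the second-moment bound ``therefore implies'' $\mathbb{E}[\|Z_j\|^2\mid\mathcal J]\le C_{\mathrm{grad}}$.
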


The rolling theorem is intentionally nonexchangeable. To identify the stronger conformal benchmark precisely, consider instead calibration and future scores evaluated by one policy pair that was fixed before either was sampled. The rank construction below is needed only for this stationary reference case and is not used to justify pooling across policy pairs.

\begin{theorem}
\label{thm:exchangeable_rank}
Condition on a sigma-field \(\mathcal G\) with respect to which a policy pair and its score map are fixed. Suppose positive scores \(W_1,\ldots,W_n,W^\star\) are exchangeable conditional on \(\mathcal G\). Let \(W_{(1)}\le\cdots\le W_{(n)}\), set \(W_{(0)}=0\) and \(W_{(n+1)}=+\infty\), and define
\[
k_{\mathrm{lo}}=\lfloor(n+1)\alpha_{\mathrm{lo}}\rfloor,
\qquad
k_{\mathrm{hi}}=\lceil(n+1)(1-\alpha_{\mathrm{hi}})\rceil.
\]
With
\[
q_{\mathrm{lo}}=W_{(k_{\mathrm{lo}})},
\qquad
q_{\mathrm{hi}}=W_{(k_{\mathrm{hi}})},
\qquad
a=\min\{q_{\mathrm{lo}},1\},
\qquad
b=\max\{q_{\mathrm{hi}},1\},
\]
one has, for arbitrary discrete or continuous score laws,
\begin{equation}
\mathbb P(W^\star<a\ \mathrm{or}\ W^\star>b\mid\mathcal G)
\le
\frac{k_{\mathrm{lo}}}{n+1}
+
\frac{n+1-k_{\mathrm{hi}}}{n+1}
\le\alpha.
\label{eq:exchangeable_rank}
\end{equation}
Strict tail events make ties conservative. Equality at the nominal level requires randomized tie and boundary-rank handling and is not claimed here.
\end{theorem}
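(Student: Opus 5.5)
The plan is to treat the two tails separately and reduce each to a rank statement for the exchangeable sequence \(W_1,\ldots,W_n,W^\star\), conditional on \(\mathcal G\).

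First, anchoring can only enlarge the interval: \(a\le q_{\mathrm{lo}}\) and \(b\ge q_{\mathrm{hi}}\). Hence
\[
\{W^\star<a\}\cup\{W^\star>b\}\subseteq\{W^\star<q_{\mathrm{lo}}\}\cup\{W^\star>q_{\mathrm{hi}}\},
\]
and by a union bound it suffices to show
\[
\mathbb P(W^\star<W_{(k_{\mathrm{lo}})}\mid\mathcal G)\le\frac{k_{\mathrm{lo}}}{n+1},
\qquad
\mathbb P(W^\star>W_{(k_{\mathrm{hi}})}\mid\mathcal G)\le\frac{n+1-k_{\mathrm{hi}}}{n+1}.
\]
The degenerate indices are immediate from the conventions. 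If \(k_{\mathrm{lo}}=0\), then \(W_{(0)}=0\) and the positive score \(W^\star\) never falls below it. If \(k_{\mathrm{hi}}=n+1\), then \(W_{(n+1)}=+\infty\) and \(W^\star\) never exceeds it. These cases correspond to zero on the right-hand side.

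For the lower tail with \(1\le k\le n\), I will use the pooled sample \(Z_1,\ldots,Z_{n+1}\), where \(Z_{n+1}=W^\star\). Define the strict rank \(\rho_i=\#\{j:Z_j<Z_i\}\). If \(W^\star<W_{(k)}\), then at least \(n-k+1\) calibration scores exceed \(W^\star\), so at most \(k-1\) of them are strictly below it. That gives \(\rho_{n+1}\le k-1\).

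Next, bound the number of indices \(i\) with \(\rho_i\le k-1\) deterministically by \(k\). Sort the pooled values. Any index with strict rank at most \(k-1\) must have value at most \(Z_{[k]}\), the \(k\)th pooled order statistic. Suppose a tie block containing \(Z_{[k]}\) straddles position \(k\). Its members have strict rank equal to the number of values below the block, which is at most \(k-1\), so all of them qualify. But then the count of qualifying indices is the number of positions at or before the end of the block, which exceeds \(k\). This shows the naive claim fails under ties, so a sharper argument is needed.

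The fix is to use the event directly rather than strict ranks. Let \(E_i\) be the event that \(Z_i\) is strictly less than the \(k\)th smallest of the other \(n\) values. Exchangeability gives \(\mathbb P(E_{n+1}\mid\mathcal G)=\frac1{n+1}\mathbb E[\sum_i\mathbf 1_{E_i}\mid\mathcal G]\). The key deterministic step is then to show \(\sum_i\mathbf 1_{E_i}\le k\).

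Suppose \(E_i\) holds. Removing \(Z_i\) from the pooled sample does not change which of the other values lie below \(Z_i\). So at least \(n-k+1\) other values are strictly greater than \(Z_i\), and therefore \(Z_i\) is among the \(k\) smallest pooled values. Moreover, any value tied with \(Z_i\) is not strictly greater than it. If \(m>k\) indices all satisfied \(E_i\), take the one with the largest value, with ties broken arbitrarily. At least \(m-1\ge k\) other values are at most it, contradicting the requirement that \(n-k+1\) others are strictly larger. Hence \(\sum_i\mathbf 1_{E_i}\le k\). Since the strict inequality makes ties count against membership, discrete laws need no extra care.

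The upper tail is symmetric. I apply the same argument to \(-Z\) with \(k'=n+1-k_{\mathrm{hi}}\). The event \(W^\star>W_{(k_{\mathrm{hi}})}\) means at most \(k_{\mathrm{hi}}-1\) calibration scores are \(\ge W^\star\), that is, at least \(n-k_{\mathrm{hi}}+1=k'\) are strictly below it. Equivalently, \(W^\star\) strictly exceeds the \(k'\)th largest of the others, and the counting argument bounds the probability by \(k'/(n+1)\).

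The final inequality is arithmetic. We have \(k_{\mathrm{lo}}\le(n+1)\alpha_{\mathrm{lo}}\) and \(n+1-k_{\mathrm{hi}}\le(n+1)-(n+1)(1-\alpha_{\mathrm{hi}})=(n+1)\alpha_{\mathrm{hi}}\). The sum is therefore at most \(\alpha\).

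The main obstacle is the tie-robust counting bound \(\sum_i\mathbf 1_{E_i}\le k\). I would verify it via the largest-value contradiction above rather than through strict ranks.
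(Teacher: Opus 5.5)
Your proof is correct, but it takes a different route from the paper's. The paper adds independent continuous tie-breakers \(U_i\) and orders the pairs \((W_i,U_i)\) lexicographically. This makes the rank of \((W^\star,U^\star)\) exactly uniform on \(\{1,\ldots,n+1\}\). It then observes that \(W^\star<W_{(k_{\mathrm{lo}})}\) forces this rank to be at most \(k_{\mathrm{lo}}\), while \(W^\star>W_{(k_{\mathrm{hi}})}\) forces it to exceed \(k_{\mathrm{hi}}\).

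You instead stay on the original probability space. You use exchangeability to average the symmetric events \(E_i\), and you bound \(\sum_i\mathbf 1_{E_i}\le k\) by a deterministic pigeonhole argument that handles ties correctly. Your remark that strict ranks alone fail under ties is exactly why this step is needed.

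\begin{itemize}
\item The paper's route is shorter. It also points directly to the randomized tie-breaking construction that would attain equality, as the closing remark of the theorem notes.
\item Your route avoids enlarging the probability space. It shows precisely where the strict inequalities absorb tie mass, and the counting lemma is purely combinatorial, so it carries over unchanged to other symmetric events of this form.
\end{itemize}

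There is one small index slip in your upper-tail paragraph. \(W^\star>W_{(k_{\mathrm{hi}})}\) means at least \(k_{\mathrm{hi}}\) calibration scores lie strictly below \(W^\star\). Equivalently, at most \(n-k_{\mathrm{hi}}=k'-1\) of them are \(\ge W^\star\). The counts you wrote (at most \(k_{\mathrm{hi}}-1\) scores \(\ge W^\star\), at least \(k'\) strictly below) are not equivalent to this event. However, the equivalence you actually use is correct: \(W^\star\) strictly exceeds the \(k'\)th largest of the others, i.e.\ \(-W^\star\) lies strictly below the \(k'\)th smallest of the \(-W_i\). So the reduction to the lower-tail argument goes through unchanged.
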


The current completed score block permits an empirical comparison with the retained blocks.
\begin{corollary}
\label{cor:observable_drift}
For \(m_t,n_t>0\), define the empirical tail probability for the step mixture and the drift diagnostic
\begin{equation}
\widehat p_t^{\mathrm{band}}=\widehat F_t(a_t^-)+1-\widehat F_t(b_t),\qquad
\widehat D_t=\sum_{\tau\in\mathcal I_t}\frac{m_\tau}{n_t}\|\widehat F_\tau-\widehat F_t\|_\infty.
\label{eq:observable_drift}
\end{equation}
These use the actual cached entries, including repeated entries. For the quantile band and its outward-rounded, anchored version, pathwise,
\begin{equation}
\widehat p_t^{\mathrm{band}}\le\min\{1,\alpha+2\widehat D_t\},\qquad
p_t^{\mathrm{band}}\le\min\{1,\alpha+2\widehat D_t+2\xi_t\}.
\label{eq:observable_certificate}
\end{equation}
Also \(D_t\le\widehat D_t+\varepsilon_t+\xi_t\). If \(\widehat\omega_t=\sup_w\{\widehat F_t(w)-\widehat F_t(w^-)\}\), then
\begin{equation}
\omega_t\le\min\{1,\widehat\omega_t+2\xi_t\}.
\label{eq:observable_atom}
\end{equation}
The empirical bound is directly computable. The population and atom bounds additionally contain the realized discrepancy \(\xi_t\); they are not data-only confidence bounds.
\end{corollary}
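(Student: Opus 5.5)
The plan is to work entirely at the level of empirical CDFs on the realized path. I first compare the buffer CDF \(\widehat F_{\mathcal B_t}\) with the current block CDF \(\widehat F_t\), and only afterwards pass to the population mixture \(F_t\) through the single realized discrepancy \(\xi_t\).

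\textbf{Step 1 (buffer as a mixture).} Since \(n_t\widehat F_{\mathcal B_t}(w)\) counts the retained entries that are \(\le w\), block by block, we have
\[
\widehat F_{\mathcal B_t}=\sum_{\tau\in\mathcal I_t}\lambda_{\tau,t}\widehat F_\tau,\qquad \lambda_{\tau,t}=m_\tau/n_t .
\]
Each \(\widehat F_\tau\), flattened over the collected updates, agrees with the mixture \(\sum_k\nu_{\tau,k}\widehat F_{\tau,k}\) in \eqref{eq:step_score_mixture}. The weighted triangle inequality then gives
\[
\|\widehat F_{\mathcal B_t}-\widehat F_t\|_\infty\le\widehat D_t .
\]
A sup-norm bound between two CDFs also bounds the difference of their left limits at every point. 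So the same estimate holds for \(\widehat F_{\mathcal B_t}(w^-)-\widehat F_t(w^-)\).

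\textbf{Step 2 (quantile inequalities).} These follow from the definition \(Q_p=\inf\{w:\widehat F_{\mathcal B_t}(w)\ge p\}\).
\begin{itemize}
\item For \(w<q_{\mathrm{lo},t}\) we have \(\widehat F_{\mathcal B_t}(w)<\alpha_{\mathrm{lo}}\), hence \(\widehat F_{\mathcal B_t}(q_{\mathrm{lo},t}^-)\le\alpha_{\mathrm{lo}}\).
\item By right continuity of the step function, \(\widehat F_{\mathcal B_t}(q_{\mathrm{hi},t})\ge1-\alpha_{\mathrm{hi}}\).
\end{itemize}
Outward rounding and anchoring give \(a_t\le q_{\mathrm{lo},t}\) and \(b_t\ge q_{\mathrm{hi},t}\). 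Therefore
\[
\widehat F_t(a_t^-)\le\widehat F_t(q_{\mathrm{lo},t}^-)
\quad\text{and}\quad
1-\widehat F_t(b_t)\le1-\widehat F_t(q_{\mathrm{hi},t}).
\]
The degenerate cases \(a_t=0\) or \(b_t=+\infty\) give zero tail mass, because the scores are positive. Combining with Step 1, each tail is at most its budget plus \(\widehat D_t\). Summing the two tails, and noting the trivial bound \(1\) on a probability, yields \(\widehat p_t^{\mathrm{band}}\le\min\{1,\alpha+2\widehat D_t\}\).

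\textbf{Step 3 (population statements).} Since \(\|F_t-\widehat F_t\|_\infty=\xi_t\), again including left limits, each of \(F_t(a_t^-)\) and \(1-F_t(b_t)\) exceeds its empirical counterpart by at most \(\xi_t\). This gives the second inequality in \eqref{eq:observable_certificate}.

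For the drift relation, I use a three-term triangle inequality:
\[
\|\overline F_t-F_t\|_\infty\le\|\overline F_t-\widehat F_{\mathcal B_t}\|_\infty+\|\widehat F_{\mathcal B_t}-\widehat F_t\|_\infty+\|\widehat F_t-F_t\|_\infty .
\]
The first term is at most \(\sum_\tau\lambda_{\tau,t}\xi_\tau=\varepsilon_t\), using the mixture form of both \(\overline F_t\) and \(\widehat F_{\mathcal B_t}\). The second is at most \(\widehat D_t\) by Step 1, and the third equals \(\xi_t\). Hence \(D_t\le\widehat D_t+\varepsilon_t+\xi_t\).

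For the atom bound,
\[
F_t(w)-F_t(w^-)\le\bigl[\widehat F_t(w)-\widehat F_t(w^-)\bigr]+2\xi_t ,
\]
and an atom never exceeds \(1\). Taking the supremum over \(w\) gives \eqref{eq:observable_atom}.

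The main delicate point is Step 2's boundary handling. One must use strict tails with left limits at \(a_t\), and closed CDF values at \(b_t\), so that ties and atoms never cost extra mass. One must also confirm that the sup-norm discrepancies control left limits as well as values. The remaining steps are bookkeeping with the mixture identities.
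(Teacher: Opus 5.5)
Your proof is correct and follows the paper's own argument. It uses the same steps: the weighted triangle inequality \(\|\widehat F_{\mathcal B_t}-\widehat F_t\|_\infty\le\widehat D_t\); the strict-tail quantile inequalities, carried through outward rounding and anchoring by monotonicity; one further comparison costing \(2\xi_t\); and the two-sided \(\xi_t\) comparison for atoms. The only cosmetic difference is in the drift bound: you use a single three-term triangle inequality through \(\widehat F_{\mathcal B_t}\), whereas the paper applies the triangle inequality block by block and then averages, and both give the same bound \(D_t\le\widehat D_t+\varepsilon_t+\xi_t\).
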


\begin{corollary}
\label{cor:anytime_grid}
Along any training path, simultaneously at every step with \(m_t,n_t>0\),
\begin{equation}
p_t^{\mathrm{band}}\le\min\{1,\alpha+2r_t^{\mathcal G}+2D_{t,\mathcal G}\}.
\label{eq:anytime_grid}
\end{equation}
The same inequality therefore holds at a finite step selected from the observed training history, retaining that step's realized discrepancy and drift.
\end{corollary}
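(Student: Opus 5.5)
The argument is a pathwise reduction to Theorem~\ref{thm:grid_pooled}, which already holds on every realized training path. Fix an arbitrary realized path and any step \(t\) with \(m_t,n_t>0\). The hypotheses of Theorem~\ref{thm:grid_pooled} at step \(t\) are that \(m_t,n_t>0\) and that the operational endpoints are the outward-rounded, anchored ones of \eqref{eq:grid_rounded_quantiles}--\eqref{eq:cspo_operational_bounds}. Neither depends on any other step. That theorem therefore yields \eqref{eq:anytime_grid} at step \(t\). No assumption of Assumption~\ref{assump:local_stability} type and no probability statement over paths enters.

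Because the bound is a deterministic inequality on each realized path, it holds for every such \(t\) at once. Its intersection over the countably many steps is still the whole set of paths, so no union bound or error-budget splitting is needed. This is the key point: \(r_t^{\mathcal G}\) and \(D_{t,\mathcal G}\) are realized quantities, not high-probability bounds, so simultaneity costs nothing.

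For the data-selected step, let \(\hat t\) be any finite step chosen as a function of the observed history with \(m_{\hat t},n_{\hat t}>0\). On the given path, \(\hat t\) equals some fixed integer \(t_0\). Applying the simultaneous statement at \(t_0\) gives the bound with \(r_{\hat t}^{\mathcal G}\) and \(D_{\hat t,\mathcal G}\) evaluated at that step.

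The one point needing care is that \(p_{\hat t}^{\mathrm{band}}\) is defined conditionally on \(\mathcal H_{\hat t}\) via the hypothetical independent draw \(W_{\hat t}^\star\). I would check that this object is well defined at a random index by noting that it is evaluated pathwise at \(t_0\) through \(F_{t_0}\) and \eqref{eq:step_mixture_risk}. It is a function of the realized path only, so no optional-stopping argument arises.
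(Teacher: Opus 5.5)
Your proposal is correct and follows the paper's proof: Theorem~\ref{thm:grid_pooled} is a deterministic inequality on each realized path, so applying it at every step gives simultaneous validity at no cost, and a step selected from the data just picks out one of these inequalities. Your remark that \(p_{\hat t}^{\mathrm{band}}\) is a function of the realized path through \(F_{t_0}\) is a harmless clarification that the paper leaves implicit.
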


\subsection{Proof of Theorem~\ref{thm:drift}}

\begin{proof}
Fix the realized training path. By \eqref{eq:step_score_mixture}, the empirical and population step CDFs use the same collection weights, so
\[
\widehat F_\tau-F_\tau
=\sum_{k\in\mathcal C_\tau}\nu_{\tau,k}(\widehat F_{\tau,k}-F_{\tau,k}),\qquad
\xi_\tau\le\sum_{k\in\mathcal C_\tau}\nu_{\tau,k}\xi_{\tau,k}.
\]
This identity holds for repeated or dependent entries as well as for distinct entries.

Let
\[
\lambda_{\tau,t}=\frac{m_\tau}{n_t},
\qquad
\overline F_t(w)
=
\sum_{\tau\in\mathcal{I}_t}
\lambda_{\tau,t}F_\tau(w).
\]
Since
\[
\widehat F_{\mathcal{B}_t}(w)
=
\sum_{\tau\in\mathcal{I}_t}
\lambda_{\tau,t}\widehat F_\tau(w),
\]
pathwise,
\[
\sup_w
|\widehat F_{\mathcal{B}_t}(w)-\overline F_t(w)|
\le
\sum_{\tau\in\mathcal{I}_t}
\lambda_{\tau,t}\xi_\tau
=
\varepsilon_t.
\]
When Assumption~\ref{assump:local_stability} holds, it additionally gives
\[
\sup_w|\overline F_t(w)-F_t(w)|
=
D_t
\le
\sum_{\tau\in\mathcal{I}_t}
\lambda_{\tau,t}
\sup_w|F_\tau(w)-F_t(w)|
\le
\Delta_K.
\]
Thus, on the same path,
\begin{equation}
\sup_w
|\widehat F_{\mathcal{B}_t}(w)-F_t(w)|
\le
\eta_t,
\qquad
\eta_t:=\varepsilon_t+D_t.
\label{eq:proof_uniform_cdf}
\end{equation}

By the definition of the empirical inverse CDF,
\[
\widehat F_{\mathcal{B}_t}(q_{\mathrm{lo},t}^{-})
\le
\alpha_{\mathrm{lo}},
\qquad
\widehat F_{\mathcal{B}_t}(q_{\mathrm{hi},t})
\ge
1-\alpha_{\mathrm{hi}}.
\]
Taking left limits in \eqref{eq:proof_uniform_cdf} is valid for CDFs, so
\[
F_t(q_{\mathrm{lo},t}^{-})
\le
\alpha_{\mathrm{lo}}+\eta_t,
\qquad
1-F_t(q_{\mathrm{hi},t})
\le
\alpha_{\mathrm{hi}}+\eta_t.
\]
The empirical quantiles are ordered because \(\alpha_{\mathrm{lo}}<1-\alpha_{\mathrm{hi}}\). Therefore the strict tail events are disjoint and
\[
p_t^q
=
F_t(q_{\mathrm{lo},t}^{-})
+
1-F_t(q_{\mathrm{hi},t})
\le
\alpha+2\eta_t.
\]
Finally, \(a_t\le q_{\mathrm{lo},t}\) and \(b_t\ge q_{\mathrm{hi},t}\), so
\[
\{W_t^\star<a_t\ \mathrm{or}\ W_t^\star>b_t\}
\subseteq
\{W_t^\star<q_{\mathrm{lo},t}\ \mathrm{or}\ W_t^\star>q_{\mathrm{hi},t}\}.
\]
This proves both inequalities in \eqref{eq:drift_control}.
\end{proof}

\subsection{Proof of Corollary~\ref{cor:drift_atoms}}

\begin{proof}
Retain \(\eta_t=\varepsilon_t+D_t\) from the preceding pathwise proof. The inverse empirical quantiles also satisfy
\[
\widehat F_{\mathcal{B}_t}(q_{\mathrm{lo},t})
\ge
\alpha_{\mathrm{lo}},
\qquad
\widehat F_{\mathcal{B}_t}(q_{\mathrm{hi},t}^{-})
\le
1-\alpha_{\mathrm{hi}}.
\]
Using \eqref{eq:proof_uniform_cdf},
\[
F_t(q_{\mathrm{lo},t})
\ge
\alpha_{\mathrm{lo}}-\eta_t,
\qquad
F_t(q_{\mathrm{hi},t}^{-})
\le
1-\alpha_{\mathrm{hi}}+\eta_t.
\]
The mass at either endpoint is at most \(\omega_t\). Hence
\[
F_t(q_{\mathrm{lo},t}^{-})
\ge
\alpha_{\mathrm{lo}}-\eta_t-\omega_t
\]
and
\[
1-F_t(q_{\mathrm{hi},t})
\ge
\alpha_{\mathrm{hi}}-\eta_t-\omega_t.
\]
Adding these bounds gives \(p_t^q\ge\alpha-2\eta_t-2\omega_t\). The upper bound follows from Theorem~\ref{thm:drift}, and the absolute-deviation statement follows by combining the two sides.
\end{proof}

\subsection{Proof of Proposition~\ref{prop:length_group_fairness}}

\begin{proof}
Conditional on \(\mathcal H_t\),
\[
p_t^{\mathrm{band}}
=
F_t(a_t^-)+1-F_t(b_t)
\]
and
\[
p_{t,h}^{\mathrm{band}}
=
F_{t,h}(a_t^-)+1-F_{t,h}(b_t).
\]
Consequently,
\[
|p_{t,h}^{\mathrm{band}}-p_t^{\mathrm{band}}|
\le
|F_{t,h}(a_t^-)-F_t(a_t^-)|
+
|F_{t,h}(b_t)-F_t(b_t)|.
\]
The Kolmogorov bound defining \(\Gamma_{t,h}\) also controls left limits, by taking an increasing sequence converging to \(a_t\). Each term is at most \(\Gamma_{t,h}\), which proves \eqref{eq:length_group_transfer}.
\end{proof}

\subsection{Proof of Corollary~\ref{cor:length_group_control}}

\begin{proof}
By the pathwise bound in Theorem~\ref{thm:drift},
\[
p_{t,h}^{\mathrm{band}}
\le
p_t^{\mathrm{band}}+2\Gamma_{t,h}
\le
\alpha+2\bigl(\varepsilon_t+D_t\bigr)+2\Gamma_{t,h},
\]
which proves \eqref{eq:length_group_upper}. If the operational and raw intervals coincide, then \(p_t^{\mathrm{band}}=p_t^q\), and
\[
|p_{t,h}^{\mathrm{band}}-\alpha|
\le
|p_{t,h}^{\mathrm{band}}-p_t^{\mathrm{band}}|
+
|p_t^q-\alpha|.
\]
Proposition~\ref{prop:length_group_fairness} and Corollary~\ref{cor:drift_atoms} now give \eqref{eq:length_group_absolute}.
\end{proof}

\subsection{Proof of Lemma~\ref{lem:power_ratio_second_moment}}

\begin{proof}
Write \(R=R_{\vtheta}\) and set \(\gamma_{\max}=2/L_{\min}\le1\). If \(0\le R\le1\), then \(R^{2/L}\le1\). If \(R>1\), then \(2/L\le\gamma_{\max}\), and concavity of \(r\mapsto r^{\gamma_{\max}}\) gives
\[
R^{2/L}
\le
R^{\gamma_{\max}}
\le
1+\gamma_{\max}(R-1).
\]
Thus, pointwise,
\[
R^{2/L}
\le
1+\gamma_{\max}(R-1)_+.
\]
Because \(\vy_U\) has the old-policy marginal, absolute continuity and normalization of the exact sequence ratio imply
\[
\mathbb{E}_{\vtheta_{\mathrm{old}}}[R]
=
\mathbb{E}_{\vx\sim p}
\left[
\sum_{\vy}
\pi_{\vtheta_{\mathrm{old}}}(\vy\mid\vx)
\frac{\pi_{\vtheta}(\vy\mid\vx)}
     {\pi_{\vtheta_{\mathrm{old}}}(\vy\mid\vx)}
\right]
=1.
\]
Therefore \(\mathbb{E}[(R-1)_+]\le\mathbb{E}[R]=1\), and
\[
\mathbb{E}[W_{\vtheta}^{2}]
=
\mathbb{E}[R^{2/L}]
\le
1+\gamma_{\max}
=
1+\frac{2}{L_{\min}}.
\]
\end{proof}

\subsection{Proof of Proposition~\ref{prop:objective_distortion}}

\begin{proof}
If \(\widehat A\ge0\), then
\[
\ell_{a,b}(W_{\vtheta},\widehat A)
=
\widehat A\min\{W_{\vtheta},b\},
\]
so the pointwise gap is \(\widehat A_+(W_{\vtheta}-b)_+\). If \(\widehat A<0\), multiplication reverses the ordering and
\[
\ell_{a,b}(W_{\vtheta},\widehat A)
=
\widehat A\max\{W_{\vtheta},a\},
\]
so the gap is \(\widehat A_-(a-W_{\vtheta})_+\). This proves the nonnegative identity in \eqref{eq:exact_objective_distortion}.

For the upper tail, Cauchy--Schwarz and Lemma~\ref{lem:power_ratio_second_moment} give
\[
\begin{aligned}
\mathbb{E}
[\widehat A_+(W_{\vtheta}-b)_+]
&\le
A_{\max}\mathbb{E}
[W_{\vtheta}\mathbf{1}_{\{W_{\vtheta}>b\}}]\le
A_{\max}
\sqrt{
\mathbb{E}[W_{\vtheta}^2]\,
p_{\vtheta}^{+}
}\le
A_{\max}
\sqrt{
\left(1+\frac{2}{L_{\min}}\right)
p_{\vtheta}^{+}
}.
\end{aligned}
\]
For the lower tail,
\[
\mathbb{E}
[\widehat A_-(a-W_{\vtheta})_+]
\le
A_{\max}a\,p_{\vtheta}^{-}.
\]
Combining the two bounds proves \eqref{eq:objective_distortion_tailwise}; using \(p_{\vtheta}^{+},p_{\vtheta}^{-}\le p_{\vtheta}\) proves \eqref{eq:objective_distortion_total}.
\end{proof}

\subsection{Proof of Proposition~\ref{prop:gradient_fidelity}}

\begin{proof}
Because the old policy and \(L\) are fixed within the update,
\[
\nabla_{\vtheta}W_{\vtheta}
=
W_{\vtheta}\nabla_{\vtheta}\log W_{\vtheta}
=
W_{\vtheta}\frac{1}{L}
\nabla_{\vtheta}\log\pi_{\vtheta}(\vy\mid\vx)
=
W_{\vtheta}\psi_{\vtheta}.
\]
Differentiating the unclipped objective gives the identity for \(g_{\mathrm{G}}\) in \eqref{eq:power_gradients}. For \(\widehat A>0\), the clipped term is constant only when \(W_{\vtheta}>b\); for \(\widehat A<0\), it is constant only when \(W_{\vtheta}<a\). At equality, the specified branch rule selects the unclipped derivative. This gives the identity for the selected direction \(g_{\mathrm{C}}\) and
\[
g_{\mathrm{G}}-g_{\mathrm{C}}
=
\mathbb{E}
[W_{\vtheta}\widehat A\psi_{\vtheta}
\mathbf{1}_{\mathcal{C}_{\vtheta}}].
\]
Using the advantage and score-vector bounds, Cauchy--Schwarz, the event inclusion \(\mathbb{P}(\mathcal{C}_{\vtheta})\le p_{\vtheta}\), and Lemma~\ref{lem:power_ratio_second_moment},
\[
\begin{aligned}
\|g_{\mathrm{G}}-g_{\mathrm{C}}\|
&\le
A_{\max}M_{\psi}
\mathbb{E}
[W_{\vtheta}\mathbf{1}_{\mathcal{C}_{\vtheta}}]\le
A_{\max}M_{\psi}
\sqrt{
\mathbb{E}[W_{\vtheta}^{2}]
\mathbb{P}(\mathcal{C}_{\vtheta})
}\le
A_{\max}M_{\psi}
\sqrt{
\left(1+\frac{2}{L_{\min}}\right)p_{\vtheta}
}.
\end{aligned}
\]
Finally,
\[
\|X_{\mathrm{C}}\|^2
\le
A_{\max}^2M_{\psi}^2W_{\vtheta}^2.
\]
Taking expectations and invoking the lemma proves \eqref{eq:clipped_gradient_second_moment}.
\end{proof}

\subsection{Proof of Corollary~\ref{cor:one_step_power_improvement}}

\begin{proof}
For the group estimator in the corollary, Jensen's inequality gives
\[
\|Z_j\|^2
=
\left\|
\frac{1}{G}\sum_{i=1}^{G}X_{j,i}
\right\|^2
\le
\frac{1}{G}\sum_{i=1}^{G}\|X_{j,i}\|^2.
\]
Equation~\eqref{eq:clipped_gradient_second_moment} therefore implies \(\mathbb{E}[\|Z_j\|^2\mid\mathcal J]\le C_{\mathrm{grad}}\); no within-group independence is required. Conditional independence across prompt groups and \(\mathbb{E}[Z_j\mid\mathcal J]=g_{\mathrm{C}}\) yield
\[
\mathbb{E}
[\|\widehat g_{\mathrm{C}}-g_{\mathrm{C}}\|^2\mid\mathcal J]
=
\frac{1}{N^2}
\sum_{j=1}^{N}
\mathbb{E}[\|Z_j-g_{\mathrm{C}}\|^2\mid\mathcal J]
\le
\frac{C_{\mathrm{grad}}}{N}.
\]
By \(L_{\mathrm s}\)-smoothness,
\[
\mathcal{J}_{\mathrm{G}}(\vtheta^+)
\ge
\mathcal{J}_{\mathrm{G}}(\vtheta)
+
\gamma\langle g_{\mathrm{G}},\widehat g_{\mathrm{C}}\rangle
-
\frac{L_{\mathrm s}\gamma^2}{2}
\|\widehat g_{\mathrm{C}}\|^2.
\]
Taking conditional expectations yields
\[
\mathbb{E}
[\mathcal{J}_{\mathrm{G}}(\vtheta^+)-\mathcal{J}_{\mathrm{G}}(\vtheta)\mid\mathcal J]
\ge
\gamma\langle g_{\mathrm{G}},g_{\mathrm{C}}\rangle
-
\frac{L_{\mathrm s}\gamma^2}{2}
\mathbb{E}[\|\widehat g_{\mathrm{C}}\|^2\mid\mathcal J].
\]
Proposition~\ref{prop:gradient_fidelity} gives
\[
\langle g_{\mathrm{G}},g_{\mathrm{C}}\rangle
\ge
\|g_{\mathrm{G}}\|
(\|g_{\mathrm{G}}\|-B_{\vtheta})
\]
and \(\|g_{\mathrm{C}}\|\le\|g_{\mathrm{G}}\|+B_{\vtheta}\). Unbiasedness and the variance bound imply
\[
\mathbb{E}[\|\widehat g_{\mathrm{C}}\|^2\mid\mathcal J]
=
\|g_{\mathrm{C}}\|^2
+
\mathbb{E}[\|\widehat g_{\mathrm{C}}-g_{\mathrm{C}}\|^2\mid\mathcal J]
\le
(\|g_{\mathrm{G}}\|+B_{\vtheta})^2
+
\frac{C_{\mathrm{grad}}}{N}.
\]
Substitution proves \eqref{eq:one_step_power_improvement}. Solving for the positive range of \(\gamma\) gives \eqref{eq:one_step_stepsize}.
\end{proof}

\subsection{Proof of Theorem~\ref{thm:exchangeable_rank}}
\begin{proof}
Condition throughout on \(\mathcal G\). Introduce independent continuous auxiliary variables only for the proof and order the \(n+1\) pairs \((W_i,U_i)\) lexicographically. Exchangeability makes the rank of \((W^\star,U^\star)\) uniform on \(\{1,\ldots,n+1\}\). The strict event \(W^\star<W_{(k_{\mathrm{lo}})}\) implies that this randomized rank is at most \(k_{\mathrm{lo}}\), while \(W^\star>W_{(k_{\mathrm{hi}})}\) implies that it exceeds \(k_{\mathrm{hi}}\). Hence their probabilities are at most \(k_{\mathrm{lo}}/(n+1)\) and \((n+1-k_{\mathrm{hi}})/(n+1)\), respectively. The two strict tails are disjoint. The definitions of the floor and ceiling ranks bound their sum by \(\alpha_{\mathrm{lo}}+\alpha_{\mathrm{hi}}=\alpha\). Finally, anchoring at \(1\) only enlarges the interval and can only remove tail events.
\end{proof}

\subsection{Proof of Corollary~\ref{cor:observable_drift}}
\begin{proof}
The weighted triangle inequality gives \(\|\widehat F_{\mathcal B_t}-\widehat F_t\|_\infty\le\widehat D_t\). The strict empirical tails of the buffer at its raw quantile endpoints sum to at most \(\alpha\), and outward rounding and anchoring can only reduce this sum. Comparing the two endpoint CDF values with \(\widehat F_t\) therefore gives \(\widehat p_t^{\mathrm{band}}\le\alpha+2\widehat D_t\). Comparing \(\widehat F_t\) with \(F_t\) adds at most \(2\xi_t\), proving \eqref{eq:observable_certificate}.

For each retained step, \(\|F_\tau-F_t\|_\infty\le\|\widehat F_\tau-\widehat F_t\|_\infty+\xi_\tau+\xi_t\). Averaging yields \(D_t\le\widehat D_t+\varepsilon_t+\xi_t\). Finally,
\[
F_t(w)-F_t(w^-)\le\widehat F_t(w)-\widehat F_t(w^-)+2\xi_t.
\]
Taking the supremum and capping by one proves \eqref{eq:observable_atom}.
\end{proof}

\subsection{Proof of Corollary~\ref{cor:anytime_grid}}
\begin{proof}
Theorem~\ref{thm:grid_pooled} is a deterministic inequality for each realized buffer and target mixture. Applying it at every step yields the pathwise collection of inequalities; selecting a finite step selects one of them.
\end{proof}

\subsection{Proof of Theorem~\ref{thm:grid_pooled}}
\begin{proof}
Fix a realized path and write \(u=r_t^{\mathcal G}\). By \eqref{eq:grid_cdf_error}, every \(g\in\mathcal G_M\) satisfies
\[
|\widehat F_{\mathcal B_t}(g)-\overline F_t(g)|\le u,\qquad
|\widehat F_{\mathcal B_t}(g^-)-\overline F_t(g^-)|\le u.
\]
The weighted block identity in the proof of Theorem~\ref{thm:drift} also gives \(u\le\varepsilon_t\).

Let \(\widetilde q_{\mathrm{lo},t}\) and \(\widetilde q_{\mathrm{hi},t}\) be the outward grid endpoints. If the lower endpoint is the sentinel zero, its strict lower-tail probability is zero. Otherwise it is a grid point no larger than \(q_{\mathrm{lo},t}\), and the empirical-inverse definition gives
\[
\widehat F_{\mathcal B_t}(\widetilde q_{\mathrm{lo},t}^-)
\le\alpha_{\mathrm{lo}}.
\]
Consequently,
\[
F_t(\widetilde q_{\mathrm{lo},t}^-)
\le
\alpha_{\mathrm{lo}}+u+D_{t,\mathcal G}.
\]
If the upper endpoint is \(+\infty\), its upper-tail probability is zero. Otherwise it is a grid point no smaller than \(q_{\mathrm{hi},t}\), so
\[
\widehat F_{\mathcal B_t}(\widetilde q_{\mathrm{hi},t})
\ge1-\alpha_{\mathrm{hi}},
\]
and hence
\[
1-F_t(\widetilde q_{\mathrm{hi},t})
\le
\alpha_{\mathrm{hi}}+u+D_{t,\mathcal G}.
\]
Adding the disjoint strict tails gives \(\alpha+2u+2D_{t,\mathcal G}\). Anchoring further enlarges the interval, and capping a probability by one proves \eqref{eq:grid_pooled_control}.
\end{proof}

\subsection{Proof of Corollary~\ref{cor:window_tradeoff}}
\begin{proof}
With equal block sizes, \(n_t=mK\) and the block weights equal \(1/K\). Therefore
\[
D_{t,\mathcal G}
\le D_t
\le
\frac1K\sum_{r=1}^{K}\|F_{t-r}-F_t\|_\infty
\le
\frac{\rho}{K}\sum_{r=1}^{K}r
=
\frac{\rho(K+1)}{2}.
\]
Substitution into Theorem~\ref{thm:grid_pooled} proves \eqref{eq:window_tradeoff}. Minimizing its excess term over the available candidate windows gives \eqref{eq:oracle_window}; no monotonicity of \(r_{t,K}^{\mathcal G}\) is used.
\end{proof}

\subsection{Proof of Proposition~\ref{prop:dynamic_tracking}}
\begin{proof}
Fix a path satisfying the stated conditions. By the definitions of the grid discrepancy and drift, every finite grid point has both strict and non-strict CDF error relative to the step mixture \(F_t\) at most
\[
\eta_t=r_t^{\mathcal G}+D_t.
\]
Fix a target \(p\), write \(q_t^\star=q_{t,p}^\star\), and choose \(\eta_t/\underline f<u<r_q\), which is possible by Assumption~\ref{assump:quantile_identifiability}. Set \(z_-=q_t^\star-u\) and \(z_+=q_t^\star+u\). The local CDF condition gives \(F_t(z_-)\le p-\underline f u\) and \(F_t(z_+)\ge p+\underline f u\). At the bracketing grid points, monotonicity and the grid-error bound imply \(\widehat F_{\mathcal B_t}(g^-(z_-))<p\) and \(\widehat F_{\mathcal B_t}(g^+(z_+))>p\). Hence the empirical quantile and either outward-rounded endpoint lie between these grid points. Their distance from \(q_t^\star\) is at most \(u+h_{\mathcal G}\). Letting \(u\downarrow\eta_t/\underline f\) yields
\begin{equation}
|\widetilde q_{t,p}-q_{t,p}^\star|
\le
h_{\mathcal G}+\frac{r_t^{\mathcal G}+D_t}{\underline f}.
\label{eq:proof_pointwise_quantile}
\end{equation}

For a window of the preceding \(K\) equal-weight CDFs,
\[
D_t
\le
\frac1K\sum_{r=1}^{K}\|F_{t-r}-F_t\|_\infty
\le
\frac1K\sum_{r=1}^{K}
\sum_{s=t-r+1}^{t}v_s.
\]
After summing over \(t\), each \(v_s\) receives total weight at most \(K^{-1}\sum_{r=1}^{K}r=(K+1)/2\). Thus
\[
\sum_{t=K+1}^{T}D_t
\le\frac{K+1}{2}V_T.
\]
Summing \eqref{eq:proof_pointwise_quantile} and upper-bounding each realized grid discrepancy by \(r_{T,K}^{\mathcal G}\) proves \eqref{eq:dynamic_quantile_tracking}.

The pinball risk is minimized at \(q_{t,p}^\star\), and its subgradients in \(q\) have absolute value at most one. Its excess is therefore at most \(|\widetilde q_{t,p}-q_{t,p}^\star|\), which proves the regret statement. Anchoring maps \(u\mapsto\min\{u,1\}\) and \(u\mapsto\max\{u,1\}\) are one-Lipschitz. Finally, apply the exact hinge representation in \eqref{eq:exact_objective_distortion} at each collected update and average with the fixed weights \(\nu_{t,k}\). This gives
\[
|\mathfrak D_t(a,b)-\mathfrak D_t(a',b')|
\le
A_{\max}(|a-a'|+|b-b'|).
\]
Applying the endpoint bound to both tails and summing proves \eqref{eq:dynamic_distortion_tracking}.
\end{proof}

\subsection{Proof of Proposition~\ref{prop:path_deviation}}
\begin{proof}
Let \(d_k=\|\vtheta_k^{\mathrm C}-\vtheta_k^{\mathrm G}\|\). Adding and subtracting \(g_{\mathrm G}(\vtheta_k^{\mathrm C})\), then using Lipschitzness and the assumed clipping bias, gives
\[
d_{k+1}
\le
(1+\gamma_kL_g)d_k+\gamma_kB_k.
\]
Since \(d_0=0\), iterating this recursion yields \eqref{eq:path_deviation}. The stated choice of \(B_k\) follows directly from Proposition~\ref{prop:gradient_fidelity} when its assumptions and the required pathwise tail bound hold at iterate \(k\).
\end{proof}

\newpage
\section{Supplementary Results}
\subsection{Detailed Evaluation Results}
\label{app:detailed_evaluation_results}



\begin{figure*}[htbp]
\centering
\includegraphics[width=0.6\textwidth]{figure/alfworld_legend.pdf}
\\
\subfloat[Pick.]{\includegraphics[width=0.3\textwidth]{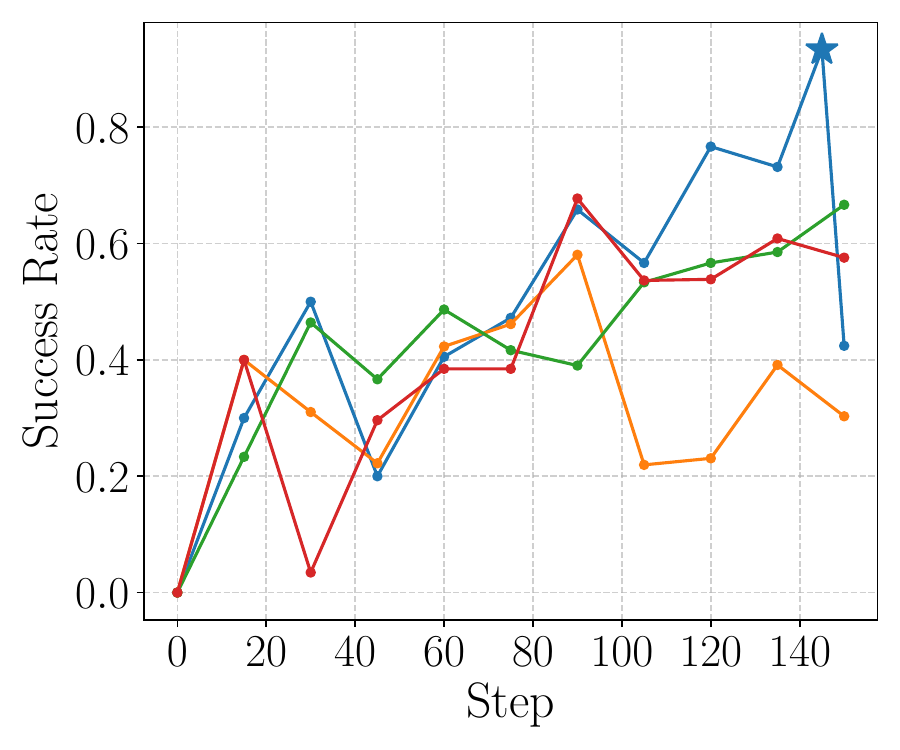}}
\subfloat[Look.]{\includegraphics[width=0.3\textwidth]{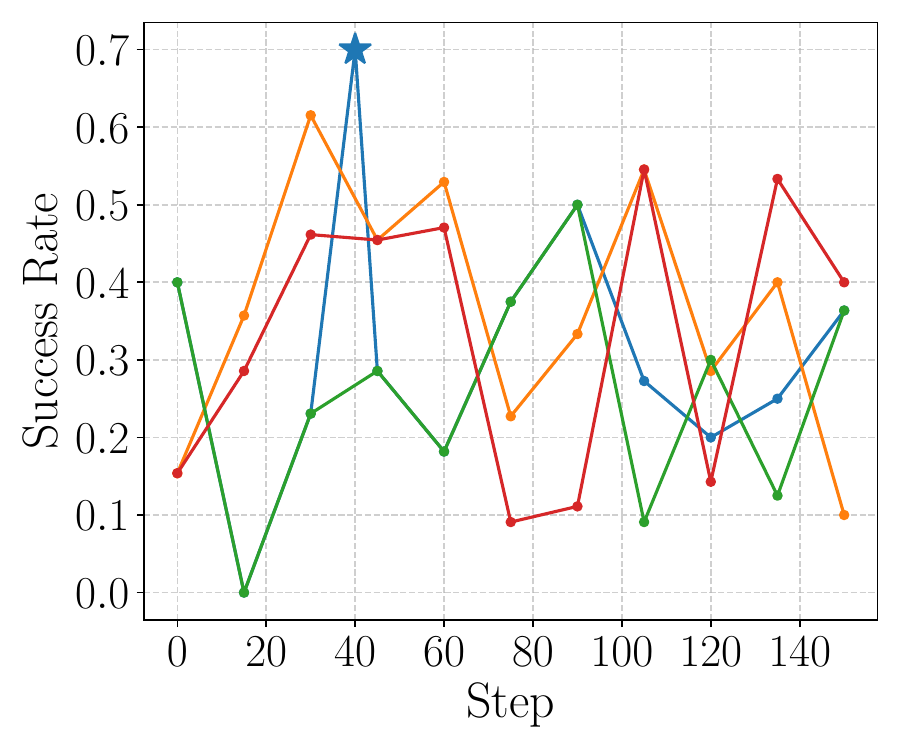}}
\subfloat[Clean.]{\includegraphics[width=0.3\textwidth]{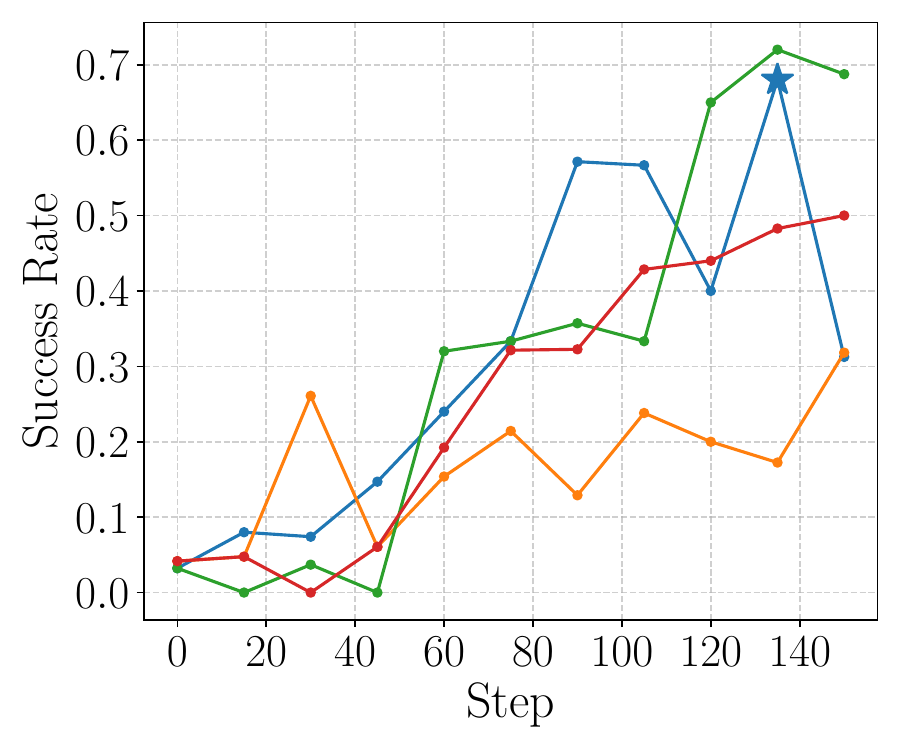}}
\\
\subfloat[Heat.]{\includegraphics[width=0.3\textwidth]{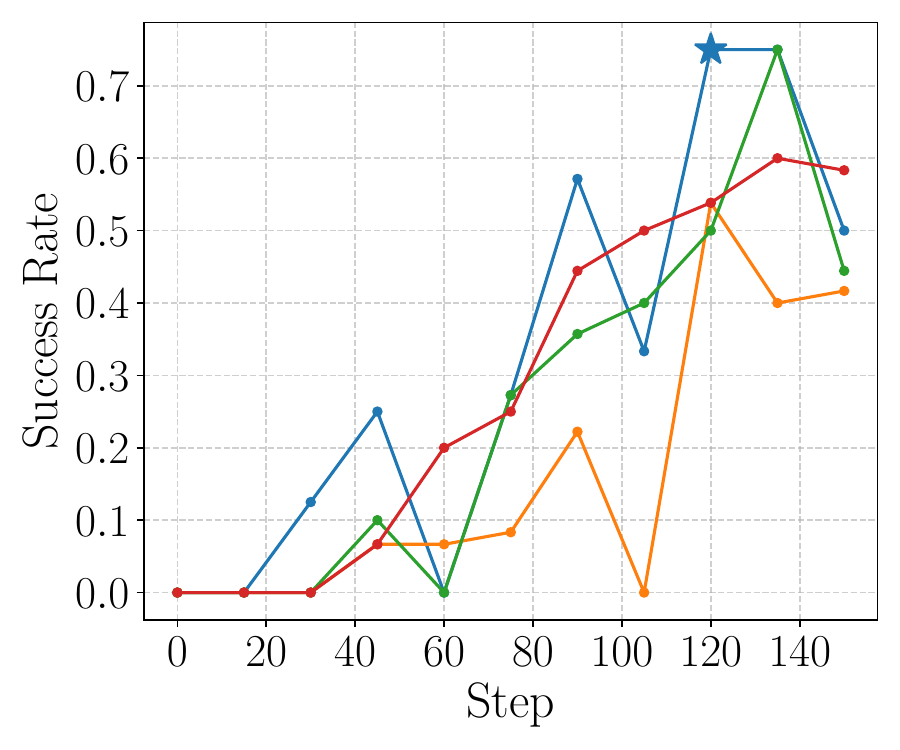}}
\subfloat[Cool.]{\includegraphics[width=0.3\textwidth]{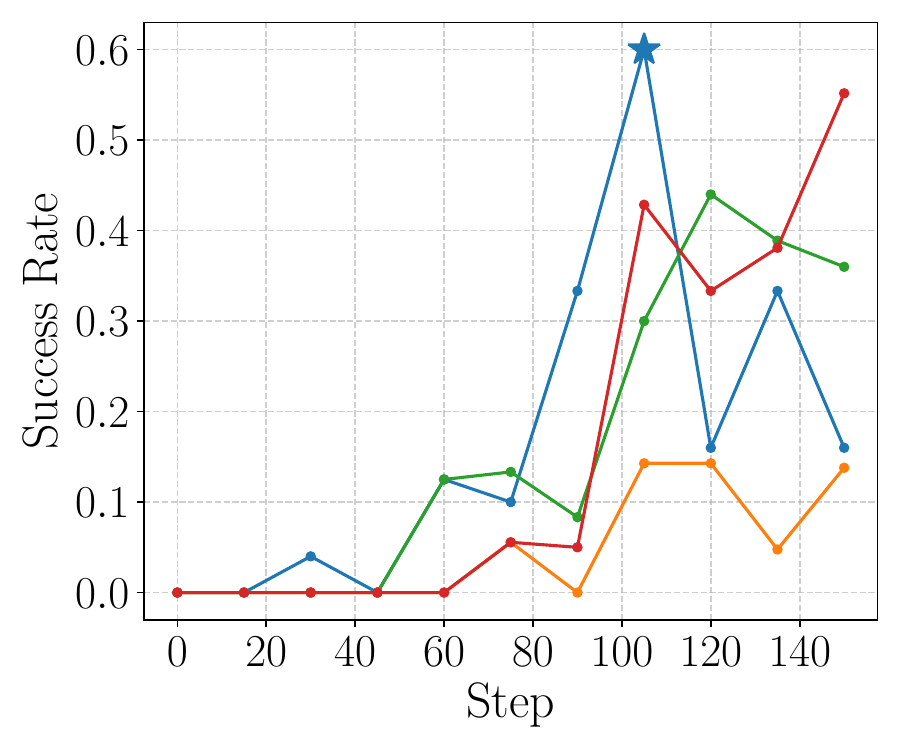}}
\subfloat[Pick2.]{\includegraphics[width=0.3\textwidth]{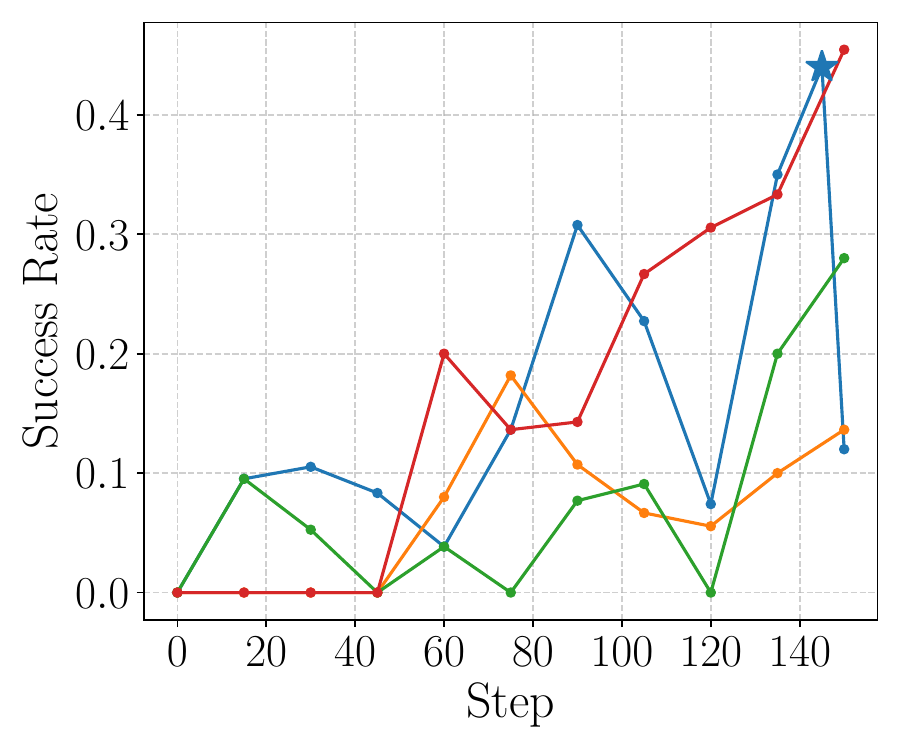}}
\caption{Validation results on ALFWorld.}
\label{fig:alfworld_validation_success}
\end{figure*}

\begin{figure*}[htbp]
\centering
\subfloat[AIME24.]{\includegraphics[width=0.3\textwidth]{figure/aime2024_pass_at_1.pdf}}
\subfloat[AIME25.]{\includegraphics[width=0.3\textwidth]{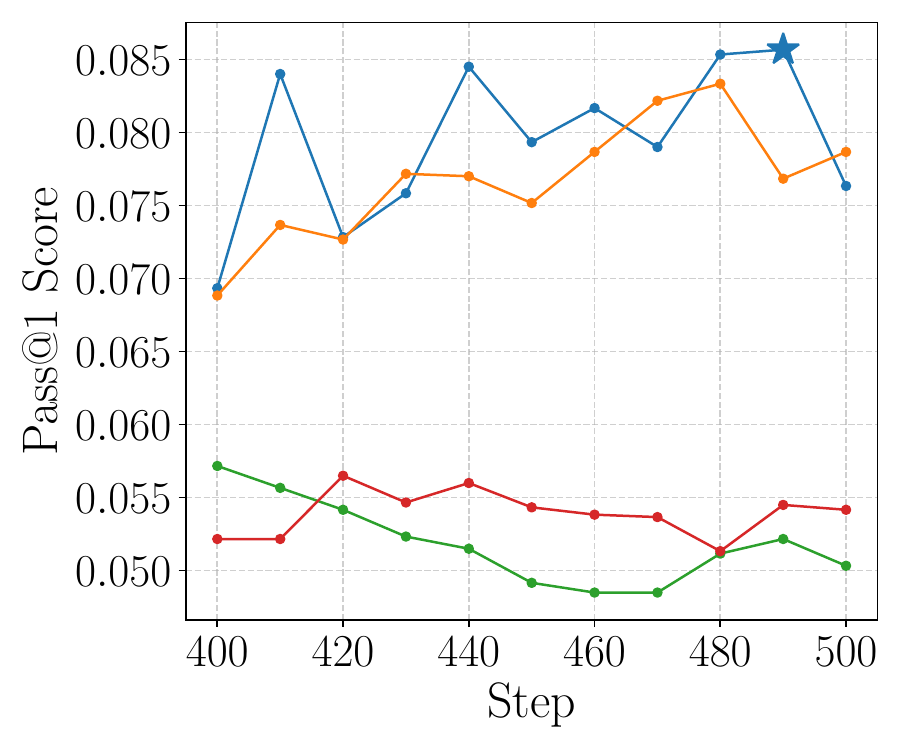}}
\subfloat[AMC.]{\includegraphics[width=0.3\textwidth]{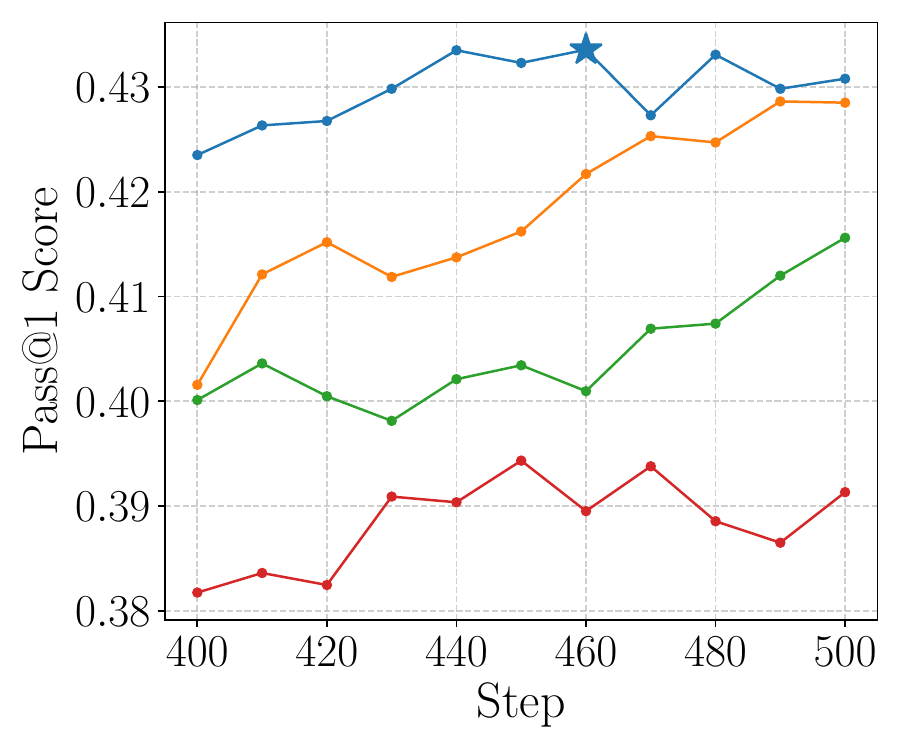}}
\\
\subfloat[MATH500.]{\includegraphics[width=0.3\textwidth]{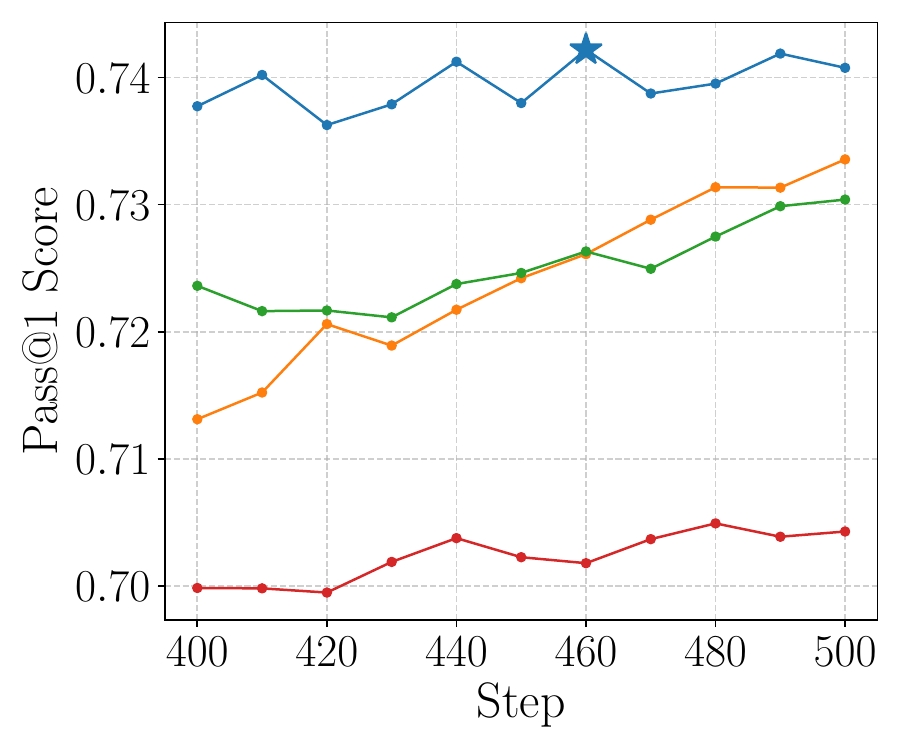}}
\subfloat[Minerva.]{\includegraphics[width=0.3\textwidth]{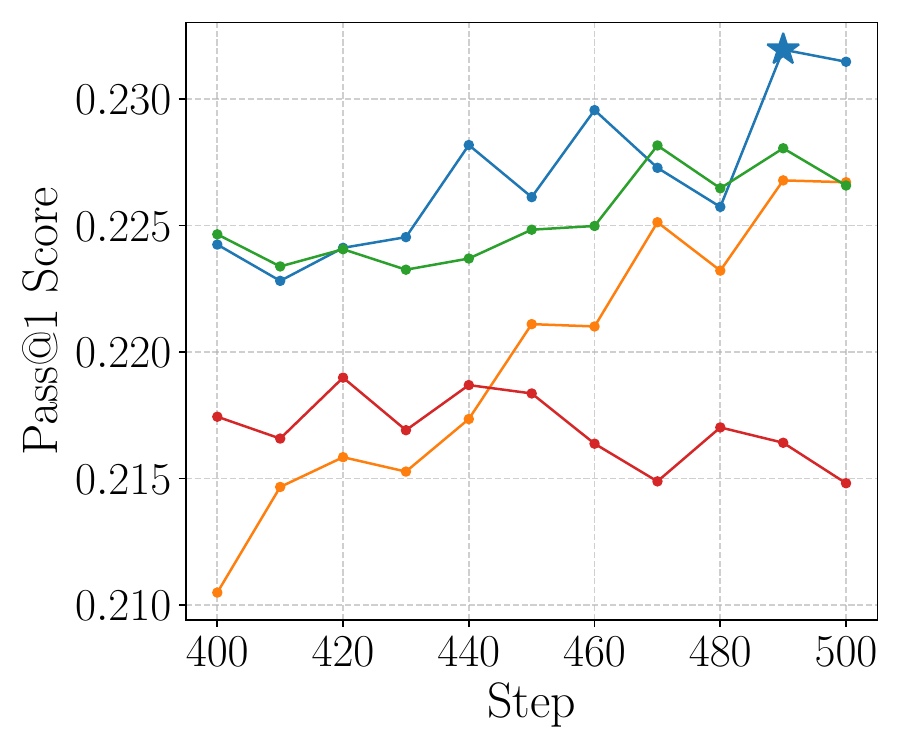}}
\subfloat[OlympiadBench.]{\includegraphics[width=0.3\textwidth]{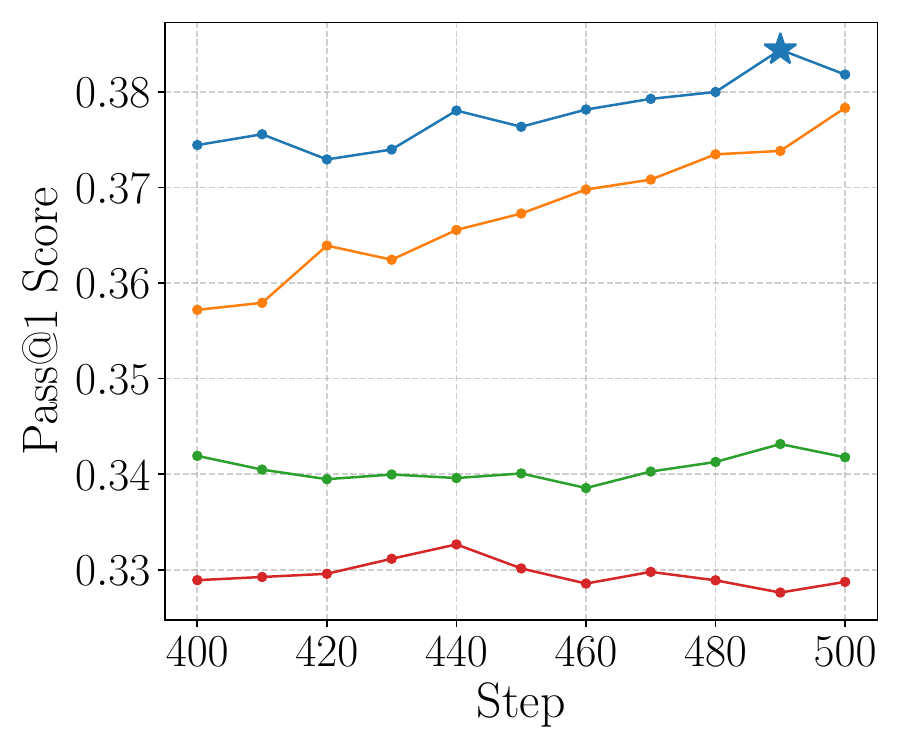}}
\caption{Pass@1 on various benchmarks.}
\label{fig:pass_at_1}
\end{figure*}

\subsection{Experimental setup}
\label{app:experimental_setup}

For ALFWorld, we use Qwen3-1.7B and roll out \(G=8\) trajectories per task instance, with a maximum of \(50\) interaction steps per episode. The maximum prompt and response lengths are \(2048\) and \(512\) tokens, respectively. We use a training batch size of \(16\) task instances and a mini-batch size of \(128\). All methods are trained for \(150\) epochs with a learning rate of \(1\text{e-}6\) and a KL penalty coefficient of \(0.01\).

For DAPO-Math-17k, we use Qwen3-1.7B-Base and roll out \(G=8\) responses per prompt, with maximum prompt and response lengths of \(2048\) and \(8192\) tokens, respectively. We use a training batch size of \(512\) prompts and a mini-batch size of \(128\), and train all methods for \(500\) steps with a learning rate of \(1\text{e-}6\). The learning-rate warmup ratio is \(0.05\) and the weight decay is \(0.1\). 

\subsection{Ablation Studies}
\label{app:ablation_studies}

In the main experiments presented in Section~\ref{subsec:main_results}, CSPO uses \(\alpha=0.1\) and \(K=2\) as its key hyperparameters. These choices were informed by preliminary ablation studies on mathematical reasoning tasks, including MATH \citep{hendrycks2021math} and GSM8K \citep{cobbe2021training}, following their original train-test splits. As shown in Table~\ref{tab:ablation_k_alpha}, performance is relatively stable across a range of \(K\) and \(\alpha\), while \(K=2\) and \(\alpha=0.1\) provide a strong and balanced configuration across the two benchmarks. We therefore adopt this setting for the main experiments.

\begin{table}[htbp] \centering \caption{Ablation study on window $K$ and target tail budget $\alpha$. Each column is a $(K, \alpha)$ configuration.} \adjustbox{width=0.95\textwidth}{ \begin{tabular}{l|c|cccc|ccc} \toprule $(K, \alpha)$ & $(2, 0.1)$ & $(1,0.1)$ & $(3,0.1)$ & $(4,0.1)$ & $(5,0.1)$ & $(2,0.05)$ & $(2,0.15)$ & $(2,0.2)$ \\ \midrule MATH & 65.32 & 65.82 & 65.30 & 66.04 & 64.68 & 64.48 & 65.60 & 65.16 \\ GSM8K & 86.35 & 85.82 & 85.44 & 85.97 & 85.60 & 86.20 & 86.05 & 85.90 \\ Avg. & 75.84 & 75.82 & 75.37 & 76.01 & 75.14 & 75.34 & 75.83 & 75.53 \\ \bottomrule \end{tabular} } \label{tab:ablation_k_alpha} \end{table}

We further study the effect of the miscoverage level \(\alpha\) on DAPO-Math-17k. As shown in Table~\ref{tab:hard_alpha_ablation}, CSPO generally achieves better performance with a smaller \(\alpha\), with \(\alpha=0.1\) obtaining the best average score. A smaller \(\alpha\) corresponds to a wider conformal clipping interval and therefore allows more policy updates to remain unclipped. However, \(\alpha\) should not be chosen arbitrarily small: as \(\alpha\) approaches zero, the clipping interval becomes increasingly loose and CSPO gradually approaches an effectively unclipped policy update, which departs from the trust-region intuition underlying TRPO and PPO and may increase the risk of unstable or collapsed training. Figure~\ref{fig:ablations_dapo_math} shows the rewards, clip fractions, and endpoints for these choices of \(\alpha\). The clip fraction is an empirical training diagnostic; interpreting it as a population tail probability requires accounting for data reuse and distinguishing band exceedance from saturation, which also depends on the advantage sign.

\begin{table}[htbp]
\centering
\caption{Ablations on miscoverage level \(\alpha\).}
\label{tab:hard_alpha_ablation}
\begin{tabular}{lccccccc}
\toprule
Method & AIME24 & AIME25 & AMC & MATH500 & Minerva & Oly. & Average \\
\midrule
CSPO-\(\alpha\)-0.1 & {16.87} & {8.57} & {43.36} & {74.22} & {23.19} & {38.44} & {34.11} \\
CSPO-\(\alpha\)-0.2 & 14.85 & 8.58 & 42.86 & 74.49 & 23.02 & 37.23 & 33.50 \\
CSPO-\(\alpha\)-0.4 & 13.52 & 7.40 & 41.50 & 72.98 & 22.47 & 36.34 & 32.37 \\
\bottomrule
\end{tabular}
\end{table}

\begin{figure*}[htbp]
\centering
\subfloat[Reward curves.]{\centering\includegraphics[trim=0cm 0.5cm 0cm 0cm,width=0.245\textwidth]{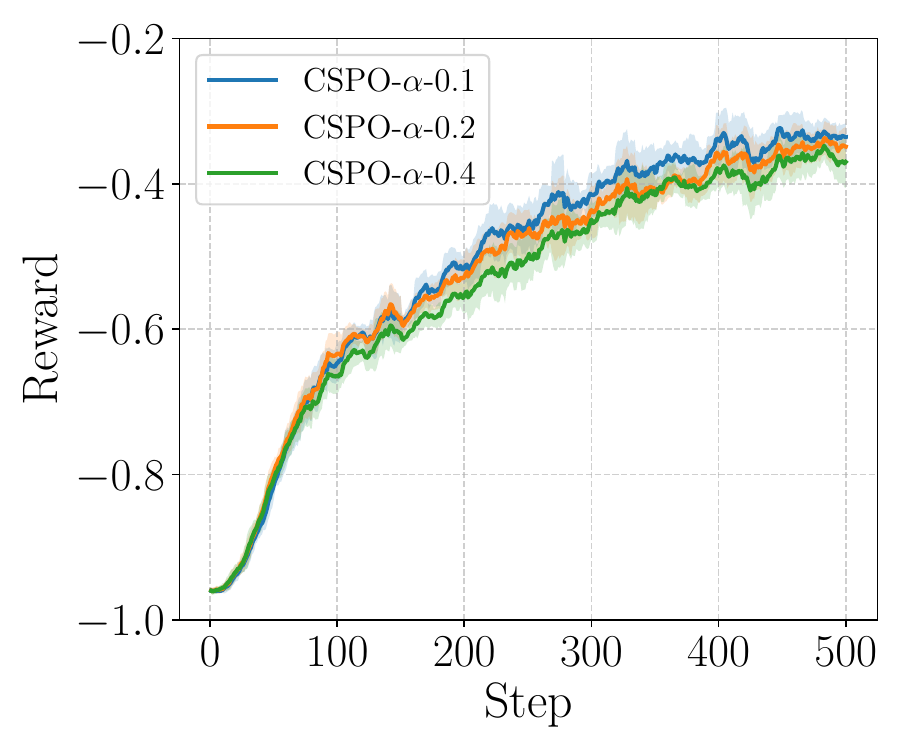}}
\subfloat[Clip fraction.]{\centering\includegraphics[trim=0cm 0.5cm 0cm 0cm,width=0.245\textwidth]{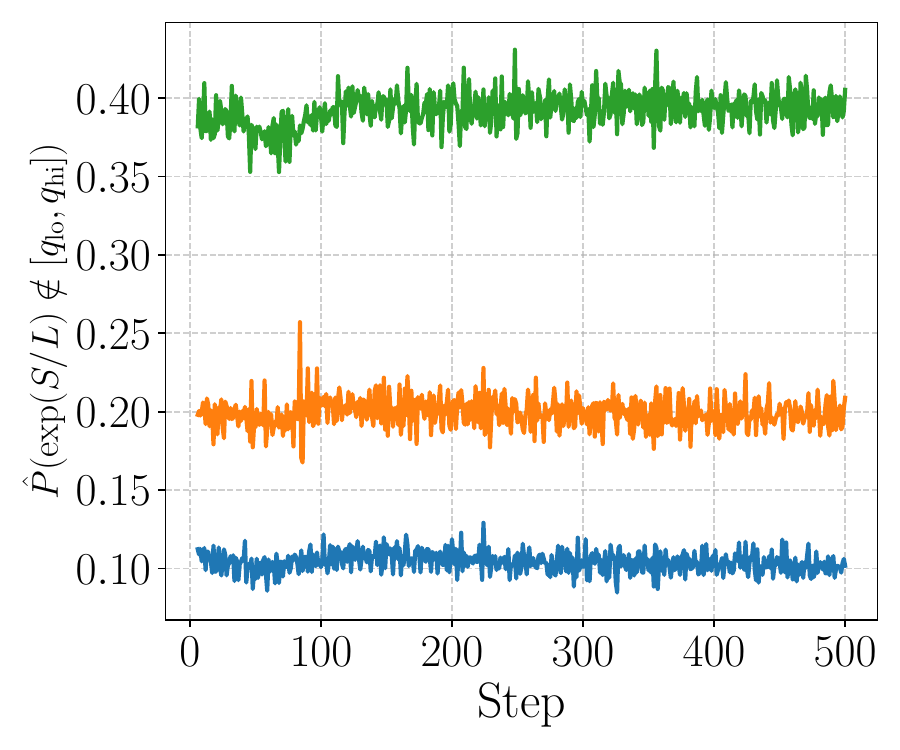}}
\subfloat[Dynamics of \(q_{\mathrm{hi}}\).]{\centering\includegraphics[trim=0cm 0.5cm 0cm 0cm,width=0.245\textwidth]{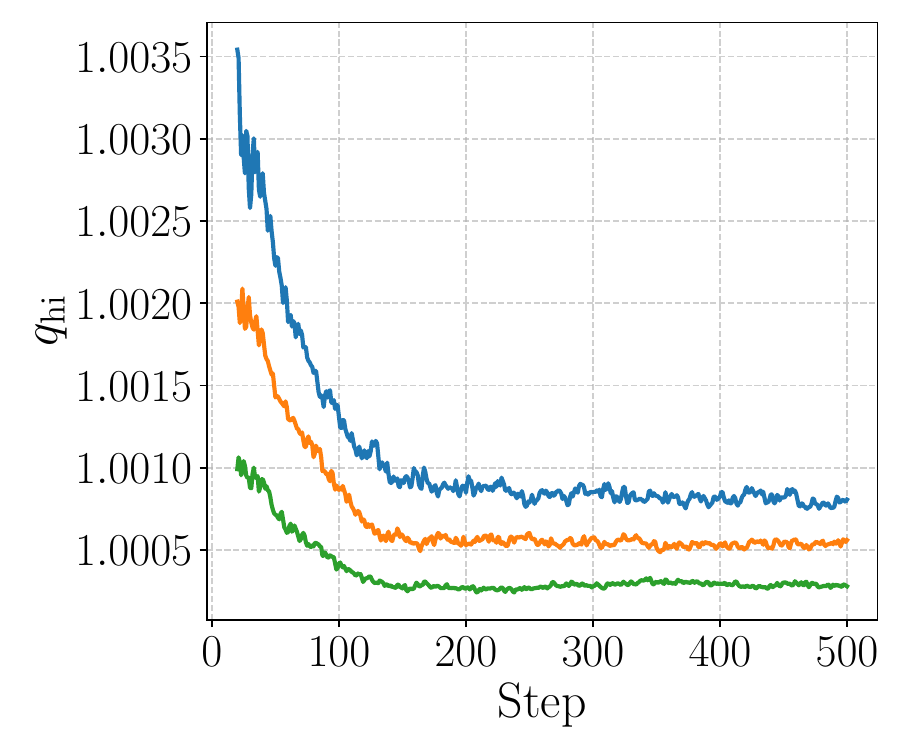}}
\subfloat[Dynamics of \(q_{\mathrm{lo}}\).]{\centering\includegraphics[trim=0cm 0.5cm 0cm 0cm,width=0.245\textwidth]{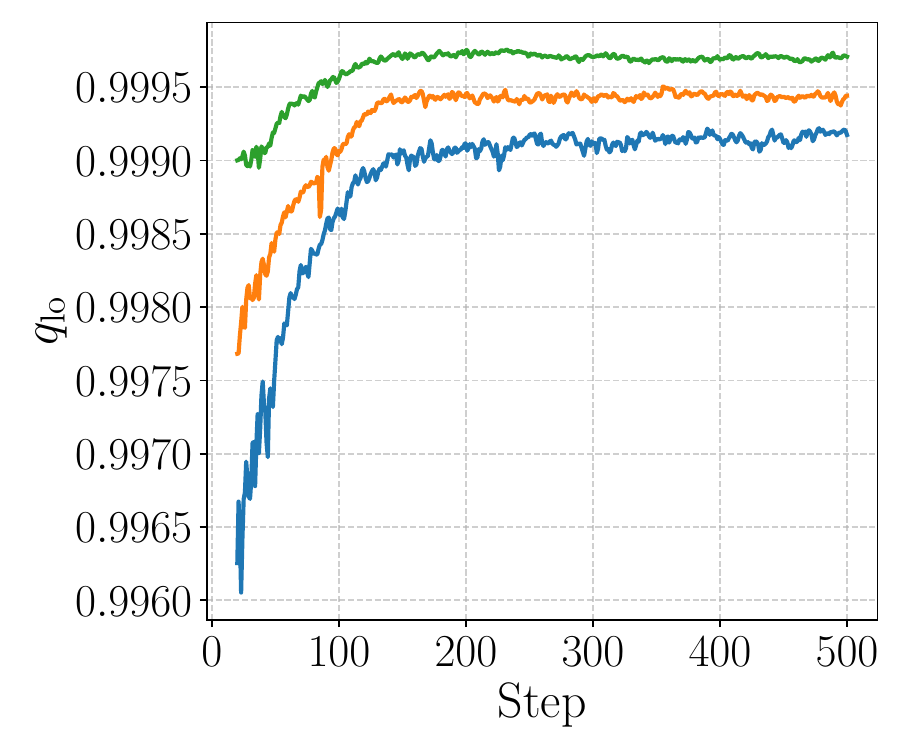}}
\caption{Ablation studies on DAPO-Math-17k.}
\label{fig:ablations_dapo_math}
\end{figure*}

\subsection{Additional Training Analysis}
\label{app:additional_training_analysis}
Figure~\ref{fig:dapo_math_efficiency} shows comparable training-step and actor-update times for CSPO and the baselines, indicating little additional overhead in this setting. Figure~\ref{fig:dapo_math_ppo_kl} reports the PPO KL diagnostic across methods and choices of \(\alpha\). These measurements describe policy changes during training, but do not directly verify the CDF drift assumption in \eqref{eq:window_drift}.

\begin{figure*}[htbp]
\centering
\subfloat[Time per training step.]{\includegraphics[width=0.45\textwidth]{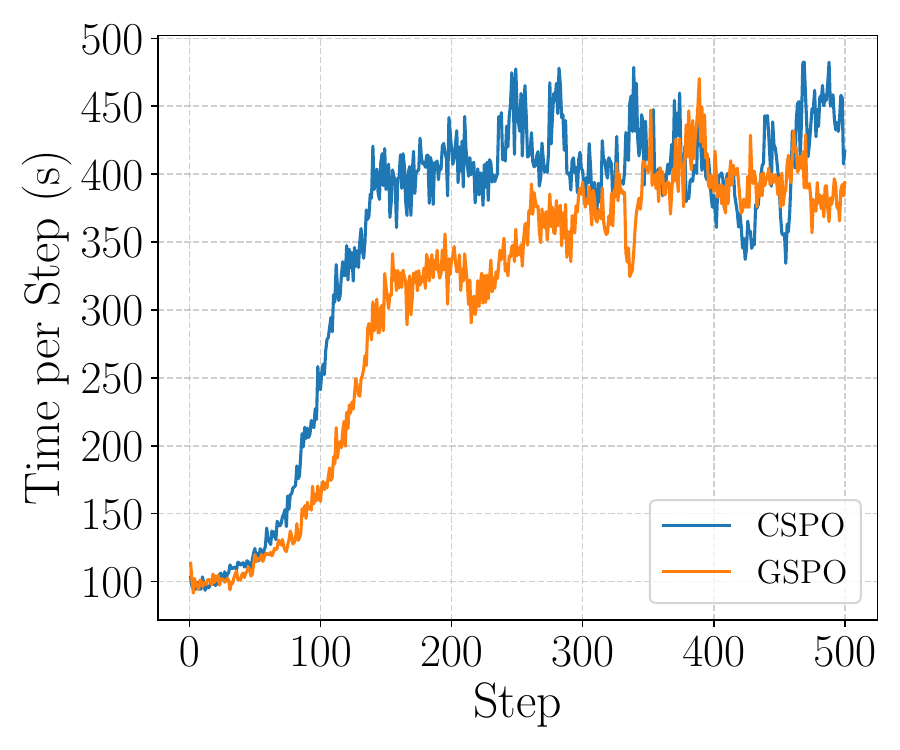}}
\subfloat[Actor update time.]{\includegraphics[width=0.45\textwidth]{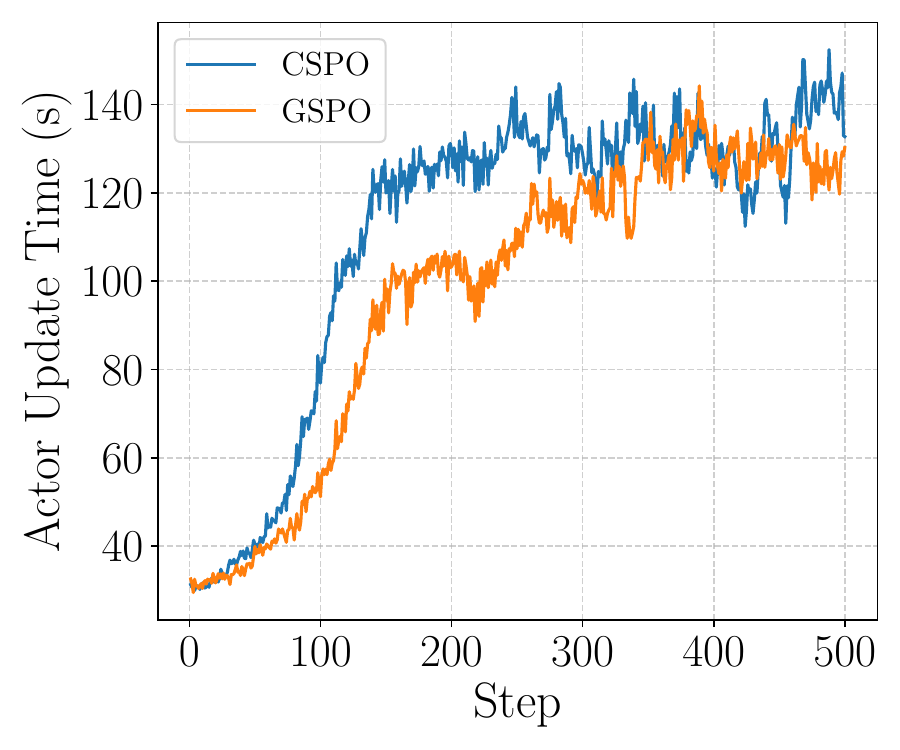}}
\caption{Computational efficiency on DAPO-Math-17k.}
\label{fig:dapo_math_efficiency}
\end{figure*}

\begin{figure*}[htbp]
\centering
\subfloat[PPO KL across methods.]{\includegraphics[width=0.45\textwidth]{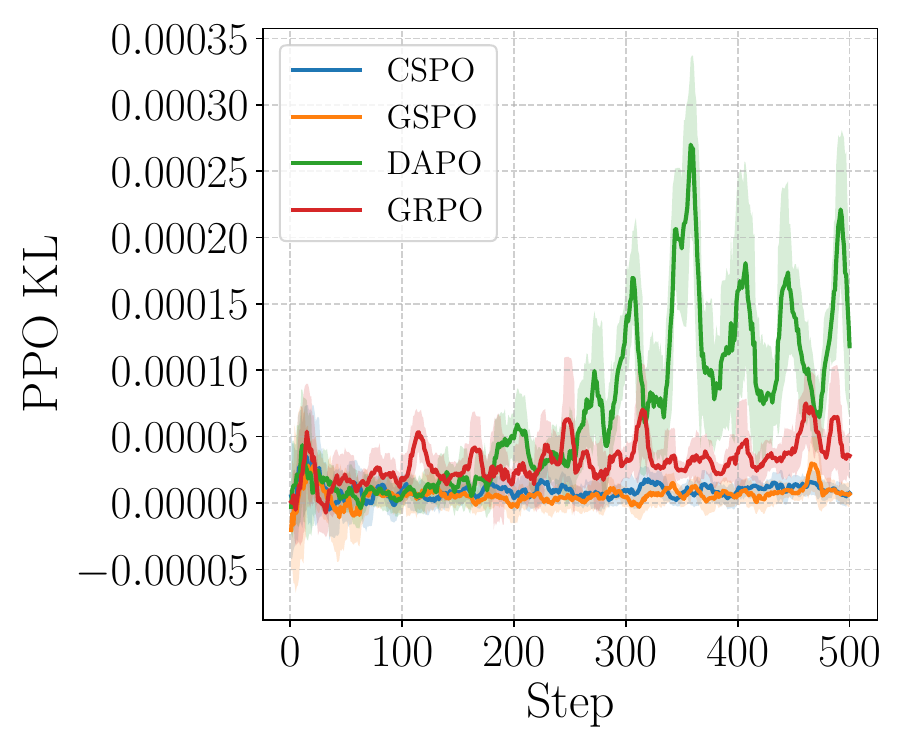}}
\subfloat[PPO KL across different \(\alpha\).]{\includegraphics[width=0.45\textwidth]{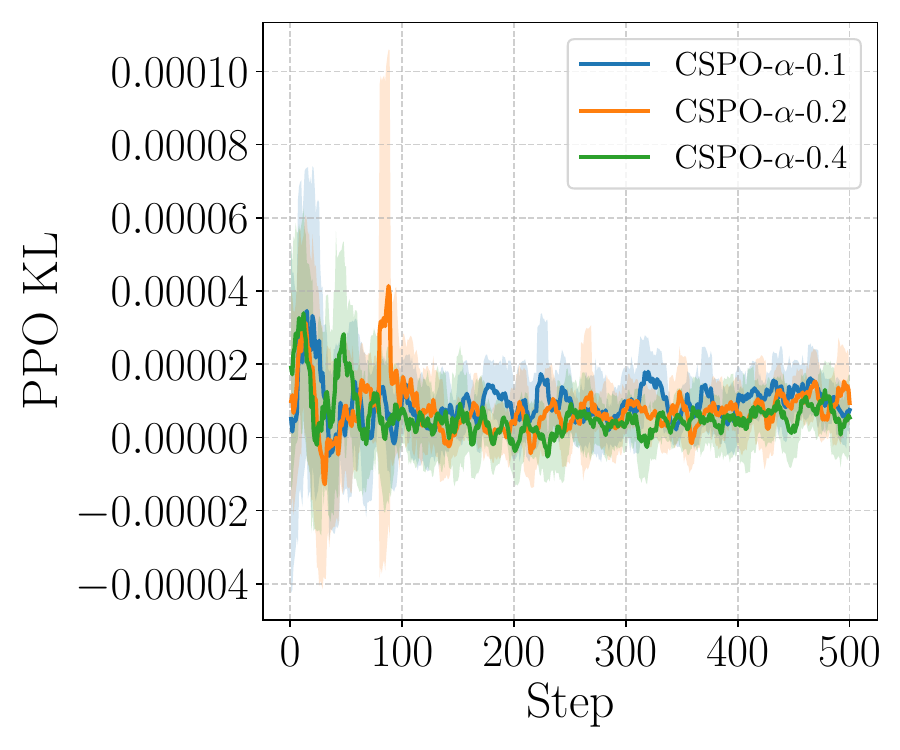}}
\caption{Policy update stability on DAPO-Math-17k.}
\label{fig:dapo_math_ppo_kl}
\end{figure*}

\end{document}